\newcommand{\hparams}{\theta}
\NewDocumentCommand{ \paramv }{ o }{%
    \hat{v}_\vecw \IfValueT{#1}{{\left(#1\right)}}%
}
\NewDocumentCommand{ \paramq }{ o }{%
    \hat{q}_\vecw \IfValueT{#1}{{\left(#1\right)}}%
}
\NewDocumentCommand{ \paramh }{ o }{%
    h_\hparams \IfValueT{#1}{{\left(#1\right)}}%
}
\newcommand{\Bo}{\mathcal{T}}
\newcommand{\Hset}{\mathcal{H}}
\newcommand{\Vset}{\mathcal{F}}
\newcommand{\Fsetall}{\mathcal{F}_{\text{all}}}
\newcommand{\Hsetall}{\mathcal{H}_{\text{all}}}
\newcommand{\Vsetsubopt}{\mathcal{F}_{\text{opt}}}
\newcommand{\Vsetsub}{\mathcal{F}_{\text{sub}}}
\newcommand{\dobj}{d_{\text{eval}}}
\newcommand{\dsol}{d}
\newcommand{\Dsmat}{D}
\newcommand{\wsol}{\vecw_{\Hset, \dsol}}
\newcommand{\vsol}{v_{\vecw_{\Hset, \dsol}}}
\newcommand{\vsoleq}{v_{\vecw_{\Vset, \dsol}}}
\newcommand{\vgen}{v}
\newcommand{\contset}[1]{c_{#1,\dsol}}
\newcommand{\ctransition}{c_\dsol}
\newcommand{\Ppigamma}{\P_{\pi, \gamma}}
\newcommand{\Ppigammatilde}{\tilde{P}_{\pi, \gamma}}
\newcommand{\eye}{I}
\newcommand{\Cmat}{\mathbf{C}}
\newcommand{\Amat}{\mathbf{A}}
\newcommand{\inv}{{-1}}
\newcommand{\invhalf}{{-1/2}}
\newcommand{\powhalf}{{1/2}}
\newcommand{\sneg}{\mathrm{-}}
\newcommand{\vhatt}[2]{\hat{v}_{#2}(#1)}
\newcommand{\phivec}{\boldsymbol{\phi}}
\newcommand{\Xmat}{\mathbf{X}}
\newcommand{\Psimat}{\boldsymbol{\Psi}}
\newcommand{\Phimat}{\boldsymbol{\Phi}}
\newcommand{\msve}{\overline{\text{VE}}}
\newcommand{\mspbe}{\overline{\text{PBE}}}
\newcommand{\msbe}{\overline{\text{BE}}}
\newcommand{\ibe}{\text{Identifiable }\overline{\text{BE}}}
\newcommand{\mstde}{\overline{\text{TDE}}}
\newcommand{\mskbe}{\overline{\text{KBE}}}
\newcommand{\msre}{\overline{\text{RE}}}
\newcommand{\proofspace}{\par\vspace{-0.5cm}}
\newcommand{\mweight}{m}
\def\Re{\mathbb{R}}
\def\Nat{{\rm I\kern\pIR N}}
\def\argmax{\mathop{\rm arg\,max}}
\def\argmin{\mathop{\rm arg\,min}}
\def\log{\mathop{\rm log}}
\def\Var{{\bf Var}}
\newcommand{\EE}[1]{\exptE\left[#1\right]}
\def\diag{\mathop{\rm diag}}
\newcommand{\defeq}{\overset{\text{\tiny def}}{=}}
\newcommand{\xdim}{d}
\newcommand{\RR}{\mathbb{R}}
\def\A{{\mathcal{A}}}
\def\E{{\mathcal{E}}}
\def\P{{\mathcal{P}}}
\def\R{{\mathcal{R}}}
\def\S{{\mathcal{S}}}
\newcommand{\States}{\S}
\newcommand{\Actions}{\A}
\def\vec0{{\boldsymbol{0}}}
\def\vecb{{\boldsymbol{b}}}
\def\vecv{{\boldsymbol{v}}}
\def\vecw{{\boldsymbol{w}}}
\def\vecx{{\boldsymbol{x}}}
\def\vech{{\boldsymbol{h}}}
\def\vecy{{\boldsymbol{y}}}
\def\vecz{{\boldsymbol{z}}}
\newtheorem{assumption}[theorem]{Assumption}
\newcommand{\ra}{\rightarrow}
\newcommand{\beq}{\begin{equation}}
\newcommand{\eeq}{\end{equation}}
\newcommand{\beqa}{\begin{eqnarray}}
\newcommand{\eeqa}{\end{eqnarray}}
\newcommand{\beqan}{\begin{eqnarray*}}
\newcommand{\eeqan}{\end{eqnarray*}}
\newcommand{\ben}{\begin{eqnarray*}}
\newcommand{\een}{\end{eqnarray*}}
\def\tr{^\top \!}
\renewcommand{\EE}[2]{\mathbb{E}_{#1\! \!}\left[#2\right]}
\newcommand{\CEE}[3]{\EE{#1}{{#2}~\middle\vert~{#3}}}
\renewcommand{\CEE}[3]{\EE{#1}{{#2}\mid{#3}}}
\def\CE#1#2{\CEE{\,}{#1}{#2}}
\def\CEb#1#2{\CEE{b}{#1}{#2}}
\def\CEpi#1#2{\CEE{\pi}{#1}{#2}}
\def\E#1{\EE{\,}{#1}}
\def\Epi#1{\EE{\pi}{#1}}
\newcommand{\zerovec}{\mathbf{0}}
\let\svthefootnote\thefootnote
\newcommand\blfootnote[1]{%
  \let\thefootnote\relax%
  \footnotetext{#1}%
  \let\thefootnote\svthefootnote%
}
\begin{document}

\title{A Generalized Projected Bellman Error \\for Off-policy Value Estimation in Reinforcement Learning}

\author{\name Andrew Patterson \email ap3@ualberta.ca
	\AND
    \name Adam White \email amw8@ualberta.ca
    \AND
    \name Martha White \email whitem@ualberta.ca\\
    \addr Department of Computing Science and the Alberta Machine Intelligence Institute (Amii)\\
    University of Alberta\\
    Edmonton, Alberta, Canada\\
}
\editor{George Konidaris}

\maketitle

\begin{abstract}
Many reinforcement learning algorithms rely on value estimation, however, the most widely used algorithms---namely temporal difference algorithms---can diverge under both off-policy sampling and nonlinear function approximation.
Many algorithms have been developed for off-policy value estimation based on the linear mean squared projected Bellman error ($\mspbe$) and are sound under linear function approximation.
Extending these methods to the nonlinear case has been largely unsuccessful.
Recently, several methods have been introduced that approximate a different objective---the mean-squared Bellman error ($\msbe$)---which naturally facilitate nonlinear approximation.
In this work, we build on these insights and introduce a new generalized $\mspbe$ that extends the linear $\mspbe$ to the nonlinear setting.
We show how this generalized objective unifies previous work and obtain new bounds for the value error of the solutions of the generalized objective.
We derive an easy-to-use, but sound, algorithm to minimize the generalized objective, and show that it is more stable across runs, is less sensitive to hyperparameters, and performs favorably across four control domains with neural network function approximation.
\end{abstract}

\begin{keywords}
  Off-policy learning, Temporal difference learning, Reinforcement learning
\end{keywords}


\section{Introduction}

Value functions play a central role in reinforcement learning: value-based methods act greedily towards a learned action value function; policy gradient methods often employ a critic to estimate the value of the policy in order to reduce variance; model-based methods often employ learned values for tree search, model-based value prediction, or simulation based planning to update value estimates from hypothetical experience.
Fundamental improvements in algorithms for learning value functions can have a significant impact in a variety of problem settings; as such, many algorithms have been developed to improve learning values functions.
This includes a variety of variance reduction improvements for off-policy temporal difference algorithms \citep{precup2000eligibility,munos2016safe,mahmood2017multistep};
gradient TD methods with linear function approximation \citep{sutton2009fast,mahadevan2014proximal,liu2016proximal,ghiassian2020gradient} and nonlinear function approximation \citep{maei2009convergent};
and algorithms using approximations to the mean squared Bellman error ($\msbe$) \citep{dai2017learning,dai2018sbeed,feng2019kernel}.

The most widely used value function learning algorithm is also one of the oldest: temporal difference (TD) learning.
The continued popularity of TD stems from both (a) the algorithms simplicity and empirical performance, and (b) the lack of technical tools required to improve it.
TD, however, does not follow the gradient of any known objective function \citep{baird1995residual,antos2008learning}, and without a clear objective for TD, it was difficult to extend the algorithm.
Related TD-like Residual gradient algorithms directly optimize the $\msbe$, but suffers from the double sampling problem \citep{baird1995residual,scherrer2010should}.
Without a strategy to optimize the $\msbe$, in the absence of a simulator, it was difficult to pursue the $\msbe$ as an alternative.

Value function learning algorithms that converge under general conditions fall into two camps defined by the objectives they optimize.
The first is the formalization of the objective underlying TD---the mean squared projected Bellman error ($\mspbe$) \citep{antos2008learning,sutton2009fast}---which projects the Bellman error into the space spanned by the function approximator.
Several algorithms were introduced to minimize the $\mspbe$, most of which built on the originally introduced variants: GTD2 and TD with gradient corrections (TDC) \citep{sutton2009fast}.
Most of these algorithm, however, are limited to linear function approximation because the $\mspbe$ is defined only for the linear case with the exception of nonlinear GTD \citep{maei2009convergent}, which used a locally linear approximation for the projection.
The gradient computation, however, requires a Hessian-vector product and has not been widely adopted.
As such, the extension of the linear $\mspbe$ for nonlinear function approximation remains open.

The second camp of methods pursue the $\msbe$, which is naturally defined for nonlinear function approximation.
The idea is to use a conjugate form for the $\msbe$ \citep{dai2017learning}, to reformulate it into a saddlepoint problem with an auxiliary variable, removing the double sampling problem because the resulting saddlepoint uses an estimator in place of one of the samples.
The SBEED algorithm \citep{dai2018sbeed} later extended the conjugate $\msbe$ to the control case by using a smoothed Bellman optimality operator and parameterizing both the policy and value function estimates.

In this work, we bring together these threads of research and provide a novel objective for nonlinear value function estimation in both prediction and control. In particular, we introduce a generalized $\mspbe$, by using the conjugate reformulation of the $\msbe$.
To understand the role of objectives in off-policy value estimation, it is useful to first understand the algorithmic development in off-policy learning that led to several of these objectives. We first lay out this development, and then describe the contributions in this work.

\subsection{A Short History of Off-policy Temporal Difference Learning}

The story of off-policy learning begins with Q-learning \citep{watkins1989learning}.
Off-policy learning, in some sense, allows side-stepping the exploration-exploitation tradeoff: the agent makes use of an independent exploration policy to select actions while learning the value function for the optimal policy.
The exploration policy need not maximize reward, but can instead select actions in order to generate data that improves the optimal policy through learning.
Ultimately, the full potential of Q-learning---and this ability to learn about one policy from a data generated by a totally different exploration---proved limited. Baird's well-known counterexample \citep{baird1995residual} provided a clear illustration of how, under function approximation, the weights learned by Q-learning can become unstable.\footnote{The action-value star MDP can be found in the errata of Baird's paper \citep{baird1995residual}.}  Baird's counterexample highlights that divergence can occur when updating off-policy with function approximation and with bootstrapping (as in temporal difference (TD) learning);
even when learning the value function of a fixed target policy from a fixed data-generating policy.

The instability of TD methods is caused by how we correct the updates to the value function to account for the potential mismatch between the target and exploration policies.
Off-policy training involves estimating the total expected future reward (the value function) that would be observed while selecting actions according to the target policy with training data (states, actions, and rewards) generated while selecting actions according to an exploration policy.
One approach to account for the differences between the data produced by these two policies is based on using importance sampling corrections: scaling the update to the value function based on the agreement between the target and exploration policy at the current state. If the target and exploration policy would always select the same action in a state, then they completely agree. Alternatively, if they never take the same action in a state they completely disagree. More generally, there can be degrees of agreement. We call this approach \emph{posterior corrections} because the corrections account for the mismatch between policies ignoring the history of interaction up to the current time step---it does not matter what the exploration policy has done in the past.

Another approach, called \emph{prior corrections}, uses the history of agreement between the exploration and target policy in the update.
The likelihood that the trajectory could have occurred under the target policy is used to scale the update. The most extreme version of prior corrections uses the trajectory of experience from the beginning of time, corresponding to what has sometimes been referred to as the \emph{alternative life} framework.
Prior and posterior corrections can be combined to achieve stable off-policy TD updates \citep{precup2000eligibility}, though finite variance of the updates cannot be guaranteed \citep{precup2001offpolicy}.

Learning about many different policies in parallel has long been a primary motivation for off-policy learning, and this usage suggested that perhaps prior corrections are not essential.
Several approaches require learning many value functions or policies in parallel, including approaches based on option models \citep{sutton1999mdps}, predictive representations of state \citep{littman2002predictive, tanner2005td,sutton2011horde}, and auxiliary tasks \citep{jaderberg2016reinforcement}. In a parallel learning setting, it is natural to estimate the future reward achieved by following each target policy until termination from the states encountered during training---the value of taking \emph{excursions} from the behavior policy.

This excursion view of off-policy learning lead to the development of an new objective function for off-policy TD learning called the mean squared projected Bellman error ($\mspbe$).
The resultant family of \emph{Gradient}-TD methods that optimize the $\mspbe$ use posterior corrections via importance sampling and are guaranteed to be stable under function approximation \citep{sutton2009fast,maei2011gradient}.
This new excursion objective has the same fixed point as TD, thus Gradient-TD methods converge to the same solution in the cases for which TD converges.
The major critiques of these methods are (1) the additional complexity due to a second set of learned parameters, and (2) the variance due to importance sampling corrections.

A wide array of algorithms arose to address these limitations and improve sample efficiency.
The strategies include (1) using a saddlepoint reformulation that facilitates the use of optimization accelerations \citep{liu2016proximal,du2017stochastic,liu2015finitesample} and a finite sample analysis \citep{touati2018convergent}, (2) hybrid TD methods that behave like TD when sampling is on-policy \citep{hackman2013faster, white2016investigating}, and (3)
variance reduction methods for posterior corrections, using different eligibility trace parameters \citep{precup2000eligibility,munos2016safe,wang2016dueling,mahmood2017multistep}.

The second major family of off-policy methods revisits the idea of using prior corrections.
The idea is to incorporate prior corrections only from the beginning of the excursion.
In this way, the values of states that are more often visited under the target policy are emphasized, but the high variance of full prior corrections---to the beginning of the episode---is avoided.
The \emph{Emphatic} TD($\lambda$) algorithm is based on this idea\citep{sutton2016emphatic} and was later extended to reduce the variance of the emphasis weights \citep{hallak2016generalized}.
These Emphatic TD methods are guaranteed stable under both on-policy and off-policy sampling with linear function approximation \citep{sutton2016emphatic,yu2015convergence,hallak2016generalized}.

Practitioners often bemoan both the complexity and poor performance of these sound off-policy algorithms, favouring instead to combine off-policy TD with heuristics to reduce variance.
Many of the algorithms discussed above have not been developed for the case of nonlinear function approximation.
On the other hand, off-policy updating is used pervasively in deep learning architectures for learning from a replay buffer, demonstrations, and trajectories generated by other agents (i.e., asynchronous architectures).
In order to get the benefits of off-policy updates and mitigate variance we can truncate the corrections: the Retrace algorithm does exactly this~\citep{munos2016safe,wang2016dueling}.
Retrace and the much older Tree Backup algorithm can be viewed as adapting the eligibility trace to reduce variance, similar to the ABQ algorithm~\citep{mahmood2017multistep}.
These three methods can be extended to utilize gradient corrections to achieve stability, but this would only yield another batch of theoretically sound but practically ignored methods.
In this paper, we explore an alternative path: extending gradient TD methods to both control and nonlinear function approximation through the development of a new objective function.

\subsection{Revisiting the Objective for Value Estimation}

Central to all this development has been the definition of a precise objective: the linear $\mspbe$. The proliferation of algorithms, however, has resulted in a focus on comparing and discussing specific algorithms \citep{dann2014policy,white2016investigating,touati2018convergent}. This breeds confusion in exactly how the algorithms are related and what objective they are attempting to optimize. By separating the objective and the optimization strategies, we can more systematically understand the differences in solutions under objectives and how effective the optimization strategies are at reaching those solutions.

Two key questions emerge about the specification of the objective: the form of the Bellman error and the weighting across states.
The first question is really a revisitation of the long-standing question about whether we should use the $\msbe$ or the $\mspbe$. In terms of solution quality, it is mixed: the $\msbe$ avoids some counterexamples that exist for the $\mspbe$ \citep{scherrer2010should}, however the $\mspbe$ often produces a better solution \citep{dann2014policy,scherrer2010should,sutton2018reinforcement}.
Further, the $\msbe$ has been shown to have an identifiability problem \citep{sutton2018reinforcement}. Though the evidence comparing the $\msbe$ and $\mspbe$ is inconclusive, the $\mspbe$ has been the default choice because many algorithms optimize it. The $\msbe$, on the other hand, is typically avoided due to the double sampling problem, where it is unclear how to obtain an unbiased sample of the gradient without a simulator.

Recently, however, this technical challenge has been overcome with the introduction of a saddlepoint form for the $\msbe$ \citep{dai2017learning,dai2018sbeed, feng2019kernel}. The resulting algorithms are similar to the saddlepoint algorithms for the linear $\mspbe$: a second estimator is used to estimate a part of the gradient. The $\msbe$ is particularly alluring, as it equally applies to the linear and nonlinear value estimation settings. The $\mspbe$, on the other hand, was defined for the linear setting, due to the difficulty in computing the projection operator for the nonlinear setting.
These potential advantages, as well as a viable strategy for optimizing the $\msbe$, motivates the utility of answering which of these objectives might be preferable.

The second question is about the weighting on states in the objective, which must be specified for either the $\msbe$ or $\mspbe$. The importance of the weighting on states has been well-recognized for many years, and is in fact the reason TD diverges on Baird's counterexample: using the stationary distribution of the behavior policy, rather than the target policy, results in an iterative update that is no longer a contraction. The emphatic algorithms were introduced to adjust this weighting to ensure convergence.

This reweighting, however, is not only about convergence; it also changes the fixed point and the quality of the solution. There has been some work investigating the impact of the state weighting on the optimal solution, not just on the behavior of the updates themselves. The most stark result is a simple example where using the solution to the $\mspbe$ can result in an arbitrarily poor mean squared value error ($\msve$) \citep{kolter2011fixed}. Several later results extended on-policy bounds on the $\msve$ to the off-policy setting, showing that $\msve$ could be bounded in the off-policy setting using careful choices on the state weighting---namely using state weightings given by Emphatic TD \citep{hallak2016generalized, white2017unifying}. Despite these insights, the role of the weighting on the quality of the solution in practice remains open. A natural question is: how do we choose the state weighting in the objective, and how much does it matter?

\subsection{Contributions}

In this work, we focus on the question: what objective should we use for off-policy value estimation? We first summarize many existing off-policy algorithms as optimizing the linear $\mspbe$ in different ways and in some cases with different state weightings. This summary separates the optimization strategy from the definition of the objective, allowing us to move away from specific algorithms to understanding the differences in fixed points obtained under the different objectives.
We then propose a generalized $\mspbe$ that uses a generalized projection operator that both extends the $\mspbe$ to the nonlinear setting and unifies the $\msbe$ and linear $\mspbe$ under one objective. Using these insights, we provide the following specific contributions:
\begin{enumerate}[itemsep=0em ]
\item We show how the generalized $\mspbe$ helps resolve the non-identifiability of the $\msbe$, where a particular projection in the generalized $\mspbe$ provides an Identifiable $\msbe$.
\item We highlight the role of the state weighting in this generalized objective, both extending theoretical results bounding the $\msve$ and empirically showing that the emphatic weighting can significantly improve the quality of the solution.
\item We show that these insights also extend to control by defining an objective for learning action values with (soft) maximal operators. We use this objective to derive a sound gradient variant of Q-learning.
\item We exploit the connection to the linear $\mspbe$ to develop a more effective algorithm for the generalized $\mspbe$ using gradient corrections rather than the saddlepoint update.
\item Finally, we demonstrate the utility of these prediction and control algorithms in several small benchmark domains.
\end{enumerate}

\section{Problem Definition and Background}\label{sct:ProblemDefinition}

We consider the problem of learning the value function for a given policy under the Markov Decision Process (MDP) formalism. The agent interacts with the environment over a sequence of discrete time steps, $t=1, 2, 3, \ldots$. On each time step, the agent observes a partial summary of the state $S_t \in \S$ and selects an action $A_t \in \A$.  In response, the environment transitions to a new state $S_{t+1}$ according to transition function $P(S_{t+1} | S_t, A_t)$ and emits a scalar reward $R_{t+1} \in \R$. The agent selects actions according to a stochastic, stationary \emph{target policy} $\pi: \S \times \A \rightarrow [0,1]$.

We study the problem of \emph{value estimation}: the computation or estimation of the expected discounted sum of future rewards for policy $\pi$ from every state. The \emph{return} at time $t$, denoted $G_t \in \mathbb{R}$, is defined as the discounted sum of future rewards. The discount factor can be variable, dependent on the entire transition: $\gamma: \States \times \Actions \times \States \rightarrow [0,1]$, with $\gamma_{t+1} \defeq \gamma(S_t, A_t, S_{t+1})$.
The return is defined as
\begin{align*}
G_t &\defeq R_{t+1} + \gamma_{t+1}R_{t+2} + \gamma_{t+1}\gamma_{t+2}R_{t+3} + \gamma_{t+1}\gamma_{t+2}\gamma_{t+3}R_{t+4} + \hdots \nonumber \\
 &= R_{t+1} + \gamma_{t+1}G_{t+1}
.
\end{align*}
When $\gamma_t$ is constant, $\gamma_c$, we get the familiar return $G_t = R_{t+1} + \gamma_c R_{t+2} + \gamma_c^2R_{t+3} + \ldots$. Otherwise, variable $\gamma_t$ can discount per transition, including encoding termination when it is set to zero. This generalization ensures we can discuss episodic problems without introducing absorbing states \citep{white2017unifying}. It also enables the derivations and theory to apply to both the continuing and episodic settings.
The \emph{value function} $v:\S\ra\Re$ maps each state to the expected return under policy $\pi$ starting from that state
\begin{equation}
v_{\pi}(s) \defeq \CEpi{G_t}{S_t=s} \text{ , for all $s\in\S$} \label{eq:value}
\end{equation}
where the expectation operator $\Epi{\cdot}$ reflects that the distribution over future actions is given by $\pi$, to distinguish from a potentially different behavior policy.

In this paper, we are interested in problems where the value of each state cannot be stored in a table; instead we must approximate the value with a parameterized function.
The approximate value function $\paramv[s_t]$ can have arbitrary form as long as it is everywhere differentiable with respect to the weights $\vecw \in \Re^d$. Typically, the number of components in $\vecw$ is much fewer than the number of possible states ($d\ll|\S|$), thus $\hat{v}$ will generalize values across many states in $\S$.
An important special case is when the approximate value function is linear in the parameters and in features of the state. In particular, the current state $S_t$ is converted into feature vector $\vecx_t \in \Re^d$ by some fixed mapping $\vecx: \S \rightarrow \Re^d$. The value of the state can then be approximated with an inner product:
$\paramv[s_t] = \vecw\tr\vecx_t \approx v_\pi(s_t)$.
Another typical parameterization for $\paramv[s_t]$ is a neural network where $\vecw$ consists of all the weights in the network.
We refer to $\vecw$ exclusively as the \emph{weights}, or weight vector, and reserve the word \emph{parameter}
for variables like the discount-rate and stepsize parameters.

We first describe how to learn this value function for the on-policy setting, where the behavior policy equals the target policy.
Temporal difference learning (Sutton, 1988) is perhaps the best known and most successful approach for estimating $\hat{v}$ directly from samples generated while interacting with the environment. Instead of waiting until the end of a trajectory to update the value of each state, the TD($\lambda$) algorithm adjusts its current estimate of the weights toward the difference between the discounted estimate of the value in the next state and the estimated value of the current state plus the reward along the way:
\begin{equation}\label{eq:delta}
\delta_t \defeq \delta(S_t, A_t, S_{t+1}) \defeq R_{t+1} + \gamma_{t+1} \paramv[S_{t+1}] - \paramv[S_t]
.
\end{equation}
We use the value function's own estimate of future reward as a placeholder for the future rewards defining $G_t$ that are not available on time-step $t+1$. In addition, the TD($\lambda$) algorithm also maintains an eligibility trace vector $\vecz_t \in \Re^d$ that stores a fading trace of recent feature activations. The components of  $\vecw_t$ are updated on each step proportional to the magnitude of the trace vector. This simple scheme allows update information to propagate impacting the value estimates for previously encountered states.

The update equations for TD($\lambda$) are straightforward:
\begin{align*}
\vecw_{t+1} \leftarrow& ~\vecw_t + \alpha \delta_t z_t\\
\vecz_t \leftarrow& ~\gamma_{t} \lambda \vecz_{t-1}  +\nabla \paramv[S_t]
,
\end{align*}
where $\alpha\in\Re$ is the scalar stepsize parameter that controls the speed of learning and $\lambda\in\Re$ controls the length of the eligibility trace. Under linear function approximation, intermediate values of $\lambda$ between zero and one often perform best.
TD($\lambda$) is only sound for the linear function approximation setting, though TD(0) is often used outside the linear setting and often obtains good performance.

These updates need to be modified for the \emph{off-policy case},
where the agent selects actions according to a \emph{behavior policy} $b: \States \times \Actions \rightarrow [0,1]$ that is different from the target policy.
The value function for target policy $\pi$ is updated using experience generated from a behavior policy that is \emph{off}, away, or distant from the target policy.
For example, consider the most well-known off-policy algorithm, Q-learning. The target policy might be the one that maximizes future discounted reward, while the behavior is nearly identical to the target policy, but instead selects an exploratory action on each time step with some small probability. More generally, the target and behavior policies need not be so closely coupled.
The main requirement linking these two policies is that the behavior policy \emph{covers} the actions selected by the target policy in each state visited by $b$, that is: $b(a|s) > 0$ for all states and actions in which $\pi(a|s) > 0$.


\section{Off-policy Corrections and the Connection to State Weightings}

The key problem in off-policy learning is to estimate the value function for the target policy conditioned on samples produced by actions selected according to the behavior policy. This is an instance of the problem of estimating an expected value under some target distribution from samples generated by some other behavior distribution. In statistics, this problem can be addressed with importance sampling, and most methods of off-policy reinforcement learning use such corrections.

There are two distributions that we could consider correcting: the distribution over actions, given the state, and the distribution over states. When observing a transition $(S,A,S',R)$ generated by taking the action according to $b(\cdot | S)$, we can correct the update for that transition so that, in expectation, we update as if actions were taken according to $\pi(\cdot | S)$. However, these updates are still different than if we evaluated $\pi$ on-policy, because the frequency of visiting state $S$ under $b$ will be different than under $\pi$. All methods correct for the distribution over actions (posterior corrections), given the state, but several methods also correct for the distribution over states (prior corrections).

In this section, we discuss the difference in the underlying objective for these updates. We give a brief introduction to prior and posterior corrections, and provide a more in-depth example of the differences in their updates in Appendix~\ref{app_example_prior} for readers interested in a more intuitive explanation. We show that the underlying objective differs in terms of the state weighting and discuss how emphatic weightings further modify this state weighting in the objective.

\subsection{Posterior Corrections}
The most common approach to developing sound off-policy TD algorithms makes use of posterior corrections based on importance sampling. One of the simplest examples of this approach is \emph{Off-policy TD($\lambda$)}. The procedure is easy to implement and requires constant computation per time step, given knowledge of both the target and behavior policies. On the transition from $S_t$ to $S_{t+1}$ via action $A_t$, we compute the ratio between $\pi$ and $b$:
\begin{equation}
\rho_t \defeq \rho(A_t | S_t) \defeq \frac{\pi(A_t|S_t)}{b(A_t|S_t)}
.
\end{equation}
These importance sampling corrections simply weight the eligibility trace update:
\begin{align}\label{eq:offTD}
\vecw_{t+1} \leftarrow& ~\vecw_t + \alpha \delta_t \vecz^\rho_t \nonumber \\
\vecz^\rho_t \leftarrow& ~\rho_t(\gamma_t \lambda \vecz^\rho_{t-1}  +\vecx_t)
,
\end{align}
where $\delta_t$ is defined in Equation~\eqref{eq:delta}. This way of correcting the sample updates ensures that the approximate value function $\hat{v}$ estimates the expected value of the return as if the actions were selected according to $\pi$.
Posterior correction methods use the target policy probabilities for the selected action to correct the update to the value of state $S_t$ using only the
data from time step $t$ onward.
Values of $\pi$ from time steps prior to $t$ have no impact on the correction.

Many modern off-policy prediction methods use some form of posterior corrections; including the Gradient-TD methods, Tree Backup($\lambda$), V-trace($\lambda$), and Emphatic TD($\lambda$). In fact, all off-policy prediction methods with stability guarantees make use of posterior corrections via importance sampling. Only correcting the action distribution, however, does not necessarily provide stable updates, and Off-policy TD($\lambda$) is not guaranteed to converge (Baird, 1995). To obtain stable Off-policy TD($\lambda$) updates, we need to consider corrections to the state distribution as well, as we discuss next.

\subsection{Prior Corrections}
We can also consider correcting for the differences between the target and behavior policy by using the agreement between the two over a trajectory of experience. \emph{Prior correction} methods keep track of the product of either $\prod_{k=1}^t \pi(A_k|S_k)$ or $\prod_{k=1}^t \frac{\pi(A_k|S_k)}{b(A_k|S_k)}$, and correct the update to the value of $S_t$ using the current value of the product. Therefore, the value of $S_t$ is only updated if the product is not zero, meaning that the behavior policy never selected an action for which $\pi(A_k|S_k)$ was zero---the behavior never completely deviated from the target policy.

To appreciate the consequences of incorporating these prior corrections into the TD update consider a state-value variant of \citet{precup2000eligibility} Off-policy TD($\lambda$) algorithm:
\begin{align}\label{eq:precup2001}
\vecw_{t+1} \leftarrow& ~\vecw_t + \alpha \delta_t \vecz^\rho_t \nonumber \\
\vecz^\rho_t \leftarrow& ~\rho_t \left(\gamma_t \lambda \vecz_{t-1}  +\prod_{k=1}^{t-1} \rho_k \vecx_t \right)
\end{align}
where $\vecz^\rho_0 = \zerovec$.
We refer to the above algorithm as \emph{Alternative-life TD($\lambda$)}, as the product of importance sampling ratios in the trace simulate an agent experiencing an entirely different trajectory from the beginning of the episode---an alternative life.
Note that the trace is always reinitialized at the start of the episode.

The Alternative-life TD($\lambda$) algorithm has been shown to converge under linear function approximation, but in practice exhibits unacceptable variance \citep{precup2001offpolicy}. The Emphatic TD($\lambda$) algorithm, on the other hand, provides an alternative form for the prior corrections that is lower variance but still guarantees convergence.
To more clearly explain why, next we will discuss how different prior corrections account for different weightings in optimizing the mean squared Projected Bellman Error ($\mspbe$).

\subsection{The Linear $\mspbe$ under Posterior and Prior Corrections}\label{sct:ASecondPerspectiveOnOffPolicyCorrections}

In this section, we describe how different forms of prior corrections correspond to optimizing similar objectives, but with different weightings over the state. This section introduces the notation required to explain the many algorithms that optimize the linear $\mspbe$ and clarifies convergence properties of algorithms, including which algorithms converge and to which fixed point. We start with the linear $\mspbe$ as most algorithms have been designed to optimize it. In the next section, we extend beyond the linear setting to discuss the generalized $\mspbe$.

We begin by considering a simplified setting with $\lambda = 0$ and a simplified variant of the linear $\mspbe$, called the NEU (norm of the expected TD update \citep{sutton2009fast})
\begin{align}
\text{NEU}(\vecw) = \Big{\|} \sum_{s\in\S}d(s)\CEpi{\delta(S, A, S')\vecx(S)}{S=s} \Big{\|}_2^2
,
\label{eq:onPolicyTDObjective}
\end{align}
where $d: \States \rightarrow [0, \infty)$ is a positive weighting on the states. Note the transition $(S, A, S')$ is random as is the TD-error.
Equation~\ref{eq:onPolicyTDObjective} does not commit to a particular sampling strategy. If the data is sampled on-policy, then $d=d_\pi$, where $d_\pi: \States \rightarrow [0,1]$ is the stationary distribution for $\pi$ which represents the state visitation frequency under behavior $\pi$ in the MDP.\@
If the data is sampled off-policy, then the objective might instead be weighted by the state visitation frequency under $b$, i.e., $d = d_b$.

We first consider how to sample the NEU for a given a state.
The behavior selects actions in each state $s$, so the update $\delta_t \vecx_t$ must be corrected for the action selection probabilities of $\pi$ in state $s$, namely a posterior correction:
\begin{align}
\CEpi{\delta(S_t, A_t, S_{t+1}) \vecx(S_t)}{S_t=s}
&=\sum_{a\in\A}\pi(a|s)\sum_{s'\in\S}P(s' | s, a)\delta(s, a, s')\vecx(s)\nonumber \\
&=\sum_{a\in\A}\frac{b(a|s)}{b(a|s)}\pi(a|s)\sum_{s'\in\S}P(s' | s, a)\delta(s, a, s')\vecx(s)\nonumber \\
&=\sum_{a\in\A}b(a|s)\sum_{s'\in\S}P(s' | s, a)\frac{\pi(a|s)}{b(a|s)}\delta(s, a, s')\vecx(s)\nonumber \\
&=\CEb{\rho(A_t| S_t)\delta(S_t, A_t, S_{t+1})\vecx(S_t)}{S_t=s}\label{eq:EqualityOfBehaviorAndTargetExpectations}
.
\end{align}
Therefore, the update $\rho_t \delta_t \vecx_t$ provides an unbiased sample of the desired
expected update $\CEpi{\delta(S_t, A_t, S_{t+1}) \vecx(S_t)}{S_t=s}$.
Most off-policy methods (except Q-learning and Tree-backup) use these posterior corrections.

We can also adjust the state probabilities from $d_b$ to $d_\pi$, using prior corrections.
Consider the expected update using prior corrections starting in $s_0$ and taking two steps following $b$:
\begin{align*}
&\CEb{\rho_0 \rho_1 \CEpi{\delta(S_t, A_t, S_{t+1}) \vecx(S_t)}{S_t = S_2}}{S_0 = s_0}\\
&=\CEb{\rho_0 \sum_{a_1 \in \Actions} b(a_1 | S_{1}) \sum_{s_2 \in \States} P(s_2| S_1, a_1) \rho(a_1 | S_1)  \CEpi{\delta(S_t, A_t, S_{t+1}) \vecx(S_t)}{S_t=s_2}}{S_0 = s_0}\\
&=\CEb{\rho_0 \sum_{a_1 \in \Actions} \pi(a_1 | S_1) P(s_2 | S_1, a_1) \CEpi{\delta(S_t, A_t, S_{t+1}) \vecx(S_t)}{S_t=s_2}}{S_0 = s_0}\\
&=\CEb{\rho_0 \CEpi{\delta(S_t, A_t, S_{t+1}) \vecx(S_t)}{S_{t-1}=S_1}}{S_0 = s_0}\\
&=\sum_{a_0 \in \Actions} \pi(s_0, a_0) \sum_{s_1 \in \States} P(s_1 | s_0, a_0) \CEpi{\delta(S_t, A_t, S_{t+1}) \vecx(S_t)}{S_{t-1}=s_1}\\
&=\CEpi{\delta(S_t, A_t, S_{t+1}) \vecx(S_t)}{S_0 = s_0}
.
\end{align*}
More generally, we get
\begin{align*}
\mathbb{E}_b \biggl[ \rho_1 \ldots \rho_{t-1} \CEpi{\delta(S_t, A_t, S_{t+1}) \vecx(S_t)}{S_t=s}|S_0 = s_0\biggr ]
&=\CEpi{\delta(S_t, A_t, S_{t+1}) \vecx(S_t)}{S_0 = s_0}
.
\end{align*}
These corrections adjust the probability of the sequence from the beginning of the episode as if policy $\pi$ had taken actions $A_1, \ldots, A_{t-1}$ to get to $S_t$, from which we do the TD($\lambda$) update.

A natural question is which objective should be preferred: the alternative-life ($d \propto d_\pi$) or the excursions objective ($d \propto d_b$). As with all choices for objectives, there is no crisp answer. The alternative-life objective is difficult to optimize because prior corrections can become very large or zero---causing data to be discarded---and is thus high variance. There is some work that directly estimates $d_\pi(s)/d_b(s)$ and pre-multiplies to correct the update \citep{hallak2017consistent,liu2020offpolicy}; this approach, however, requires estimating the visitation distributions. On the other hand, the fixed-point solution to the excursion objective can be arbitrarily poor compared with the best value function in the function approximation class if there is a significant mismatch between the behavior and target policy \citep{kolter2011fixed}.
Better solution accuracy can be achieved using an excursion's weighting that includes $d_b$, but additionally reweights to make the states distribution closer to $d_\pi$ as is done with Emphatic TD($\lambda$). We discuss this alternative weighting in the next section.

The above discussion focused on a simplified variant of the $\mspbe$ with $\lambda = 0$, but the intuition is the same for the $\mspbe$ with $\lambda > 0$. To simplify notation we introduce a conditional expectation operator:
\begin{equation*}
\mathbb{E}_d [Y] = \sum_{s\in\States} d(s) \mathbb{E}_\pi [Y~|~S=s]
.
\end{equation*}
We can now define
\vspace{-0.1cm}
\begin{align*}
\Cmat &\defeq \mathbb{E}_d [\vecx(S) \vecx(S)^\top]\\
\Amat &\defeq -\mathbb{E}_d [(\gamma(S,A,S') \vecx(S') - \vecx(S)) \vecz(S)^\top]\\
\vecb &\defeq  \mathbb{E}_d [R(S,A,S') \vecz(S)^\top]
\end{align*}
where the expected eligibility trace $\vecz(S)\in\mathbb{R}^k$ is defined recursively $\vecz(s) \defeq \vecx(s) + \lambda \mathbb{E}_\pi[ \gamma(S_{t-1},A_{t-1},S_t)\vecz(S_{t-1}) | S_t = s]$.
We can write the TD($\lambda$) fixed point residual as:
\begin{equation}
\mathbb{E}_d [\delta(S, A, S')\vecz(S)] = \sneg\Amat \vecw + \vecb
\label{eq_td_A}
\end{equation}
where $\mathbb{E}_{d} [\delta(S, A, S')\vecz(S)]=\zerovec$ at the fixed point solution for on-policy TD($\lambda$).
The linear $\mspbe$ can then be defined given the definition above:
\begin{equation}
\text{linear } \mspbe(\vecw) \defeq  (\sneg\Amat \vecw + \vecb)^\top \Cmat^\inv (\sneg\Amat \vecw + \vecb) = \|\sneg\Amat \vecw + \vecb \|_{C^{-1}}^2
.\label{eq:MSPBE}
\end{equation}
The only difference compared with the NEU is the weighted $\ell_2$ norm, weighted by $\Cmat^\inv$, instead of simply $\|\sneg\Amat \vecw + \vecb \|_2^2$.

\textbf{Notation Remark:} From the rest of the paper, we will not explicitly write the random variables as functions of $(S, A, S')$. For example, we will use $\mathbb{E}_{\pi} [\delta | S= s]$ to mean $\mathbb{E}_{\pi} [\delta(S, A, S') | S= s]$ and $\mathbb{E}_{\pi} [\gamma | S= s]$ to mean $\mathbb{E}_{\pi} [\gamma(S, A, S') | S= s]$.

\subsection{Emphatic Weightings as Prior Corrections}

Emphatic Temporal Difference learning, ETD($\lambda$), provides an alternative strategy for obtaining stability under off-policy sampling without computing the gradient of the linear $\mspbe$.  The key idea is to incorporate some prior corrections so that the weighting $d$ results in a positive definite matrix $\Amat$. Given such an $\Amat$, the TD($\lambda$) update---a semi-gradient algorithm---can be shown to converge. Importantly, this allows for a stable off-policy algorithm with only a single set of learned weights. Gradient-TD methods, on the other hand, use two stepsize parameters and two weight vectors to achieve stability.

Emphatic TD($\lambda$) or ETD($\lambda$) minimizes a variant of the linear $\mspbe$ defined in Equation~\ref{eq:MSPBE}, where the weighting $d$ is defined based on the \emph{followon} weighting. The followon reflects (discounted) state visitation under the target policy when doing excursions from the behavior: starting from states sampled according to $d_b$. The followon is defined as
\begin{equation}
f(s_t)\defeq d_b(s_t) + \sum_{s_{t-1}, a_{t-1}}d_b(s_{t-1})\pi(a_{t-1}|s_{t-1})P(s_t | s_{t-1}, a_{t-1})\gamma(s_{t-1},a_{t-1},s_t) + \hdots \label{eq:ETDF}
~.
\end{equation}
The emphatic weighting is $m(s_t) = d_b(s_t) \lambda + (1-\lambda) f(s_t)$. This is the state weighting ETD($\lambda$) uses in the linear $\mspbe$ in Equation~\ref{eq:MSPBE}, setting $d(s) = m(s)$.
ETD($\lambda$) uses updates
\begin{align*}
F_t \leftarrow& ~\rho_{t-1}\gamma_t F_{t-1} + 1 \quad \text{ with } M_{t}  = \lambda_t + (1 - \lambda_t)F_t\\
\vecz_t^{\rho} \leftarrow& ~\rho_t \left(\gamma_t \lambda \vecz_{t-1}^{\rho} + M_t \vecx_{t}\right)\\
\vecw_{t+1} \leftarrow& ~ \vecw_t +\alpha \delta_t \vecz_t^{\rho}
,
\end{align*}
with $F_0=1$ and $\vecz_0^{\rho}=\vec0$.

To gain some intuition for this weighting, consider the trace update with alternative-life TD(0) and ETD(0) with constant $\gamma$. For ETD(0), $M_t = F_t =  \sum_{j=0}^{t}{\gamma^j\prod_{i=1}^{j}\rho_{t-i}} $, giving $\vecz_t^{\rho} \leftarrow ~\rho_t \left(\gamma \lambda \vecz_{t-1}^{\rho} + \sum_{j=0}^{t}{\gamma^j\prod_{i=1}^{j}\rho_{t-i}} \vecx_{t}\right)$.
The trace for Alternative-life TD(0) (see Equation~\ref{eq:precup2001}) is $\vecz_t^{\rho} \leftarrow ~\rho_t \left(\gamma_t \lambda \vecz_{t-1}^{\rho} + \prod_{i=1}^{t}\rho_{i} \vecx_{t}\right)$. Both adjust the weighting on $\vecx_t$ to correct the state distributions. Alternative-Life TD more aggressively downweights states that would not have been visited under the target policy, because it only has a product, whereas ETD uses a sum over all $t$ up to that point.

Emphatic TD($\lambda$) has strong convergence guarantees in the case of linear function approximation. The ETD($\lambda$) under off-policy training has been shown to converge in expectation using the same expected update analysis used to show that TD($\lambda$) converges under on-policy training. Later, \citet{yu2015convergence} extended this result to show that ETD($\lambda$) converges with probability one.

This weighting also resolves the issues raised by Kolter's example \citep{kolter2011fixed}. Kolter's example demonstrated that for a particular choice of $\pi$ and $b$, the solution to the linear $\mspbe$ could result in arbitrarily bad value error compared with the best possible approximation in the function class. In other words, even if the true value function can be well approximated by the function class, the off-policy fixed point from the linear $\mspbe$ with weighting $d = d_b$ can result in an arbitrarily poor approximation to the values. In Section~\ref{sec_theory_ve}, we explain why the fixed points of the linear $\mspbe$ with the emphatic weighting do not suffer from this problem, expanding on \citep[Corollary 1]{hallak2016generalized} and \citep[Theorem 1]{white2017unifying}.

\subsection{Broadening the Scope of Weightings}

To determine which weightings to consider, we need to understand the role of the weighting. There are actually two possible roles. The first is to specify states of interest: determine the relative importance of a state for the accuracy of our value estimates compared to the true values. This provides the weighting in the value error: $\msve(\vecw) \defeq  \sum_{s\in\S} d(s) (\paramv[s] - v_\pi(s))^2$. The second is the choice of weighting in our objective, such as the linear $\mspbe$, which is a surrogate for the value error.

For the first question, we need to determine the relative importance of states. The choice depends on the purpose for the value estimation.
For example, if the policy is being evaluated for deployment in an episodic problem, a common choice is to put all weight on the set of start states \citep{sutton1999policy,bottou2013counterfactual} because it reflects the expected return in each episode.
On the other hand, if many value functions are learned in parallel---as in predictive representations \citep{sutton2011horde,white2015developing} or auxiliary tasks \citep{jaderberg2016reinforcement}---it is may be better to the predictions are accurate in many states. It is also possible that values from some states might be queried much more often, or that states might correspond to important catastrophic events from which it is important to have accurate predictions for accurate decision-making.

Overall, the choice of $d$ is subjective. Once we make this choice for our evaluation objective, we can ask the second question: which optimization objectives and what weightings are most effective for minimizing our evaluation objective? It is not obvious that the minimum of the linear $\mspbe$ with weighting $d_b$ provides the best solution to the $\msve$ with weighting $d_b$, for example. In fact, we know that the linear $\mspbe$ with $d = d_b$ suffers from a counterexample \citep{kolter2011fixed}, whereas using the emphatic weighting in the linear $\mspbe$ provides an upper bound on the $\msve$ under weighting $d_b$. We discuss the potential utility of using a different weighting for the objective than the desired weighting in the $\msve$ in Section~\ref{sec_theory_ve}.

\section{Broadening the Scope of Objectives}

In this section, we discuss how to generalize the linear $\mspbe$, to obtain the generalized $\mspbe$. This objective allows for nonlinear value estimation and unifies the $\msbe$ and linear $\mspbe$ under one objective.

\subsection{An Overview of Existing Objectives}

Let us start by discussing the standard evaluation objective used for policy evaluation: the mean squared value error ($\msve$)
\begin{equation}\label{eq:msve}
\msve(\vecw) \defeq  \sum_{s\in\S} d(s) (\paramv[s] - v_\pi(s))^2
.
\end{equation}
The approximation $\paramv[s]$ is penalized more heavily for inaccurate value estimates in highly weighted states $s$. This objective cannot be directly optimized because it requires access to $v_\pi(s)$.
One way to indirectly optimize the $\msve$ is to use the mean squared return error ($\msre$):
\begin{equation}\label{eq:msre}
\msre(\vecw) \defeq  \sum_{s\in\S} d(s) \CEpi{ \left(\paramv[s] - G_t\right)^2}{S_t=s}
.
\end{equation}
The minima of the $\msre(\vecw)$ and $\msve(\vecw)$ are the same because their gradients are equal
\begin{align*}
\nabla \msre(\vecw)
&=  \sum_{s\in\S} d(s) \CEpi{ \left(\paramv[s] - G_t\right) \nabla \paramv[s]}{S_t=s}\\
&=  2\sum_{s\in\S} d(s)  \left(\paramv[s] - \CEpi{G_t}{S_t=s} \right) \nabla \paramv[s]\\
&=  2\sum_{s\in\S} d(s)  \left(\paramv[s] - v_\pi(s) \right) \nabla \paramv[s] = \nabla \msve(\vecw)
.
\end{align*}

In practice, the $\msre$ is rarely used because it requires obtaining samples of entire returns (only updating at the end of episodes). Instead, bootstrapping is used and so forms of the Bellman error are used, as in the $\msbe$ and $\mspbe$.
The $\msbe$ reflects the goal of approximating the fixed-point formula given by the Bellman operator $\Bo$, defined as
\begin{equation}
\Bo\paramv[\cdot] (s) \defeq \CEpi{R + \gamma \paramv[S']}{S=s} \text{ for all $s$}
.
\end{equation}
When equality is not possible, the difference is minimized as in the $\msbe$
\begin{align}
\msbe(\vecw) &\defeq  \sum_{s\in\S} d(s) \left(\Bo\paramv[\cdot] (s) - \paramv[s]\right)^2\label{eq:msbe}
= \sum_{s\in\S} d(s) \CEpi{\delta(\vecw)}{S=s}^2
\end{align}
where we write $\delta(\vecw)$ to be explicit that this is the TD error for the parameters $\vecw$.

\begin{figure}[t]
\vspace{-0.2cm}
  \centering
  \includegraphics[width=0.6\linewidth]{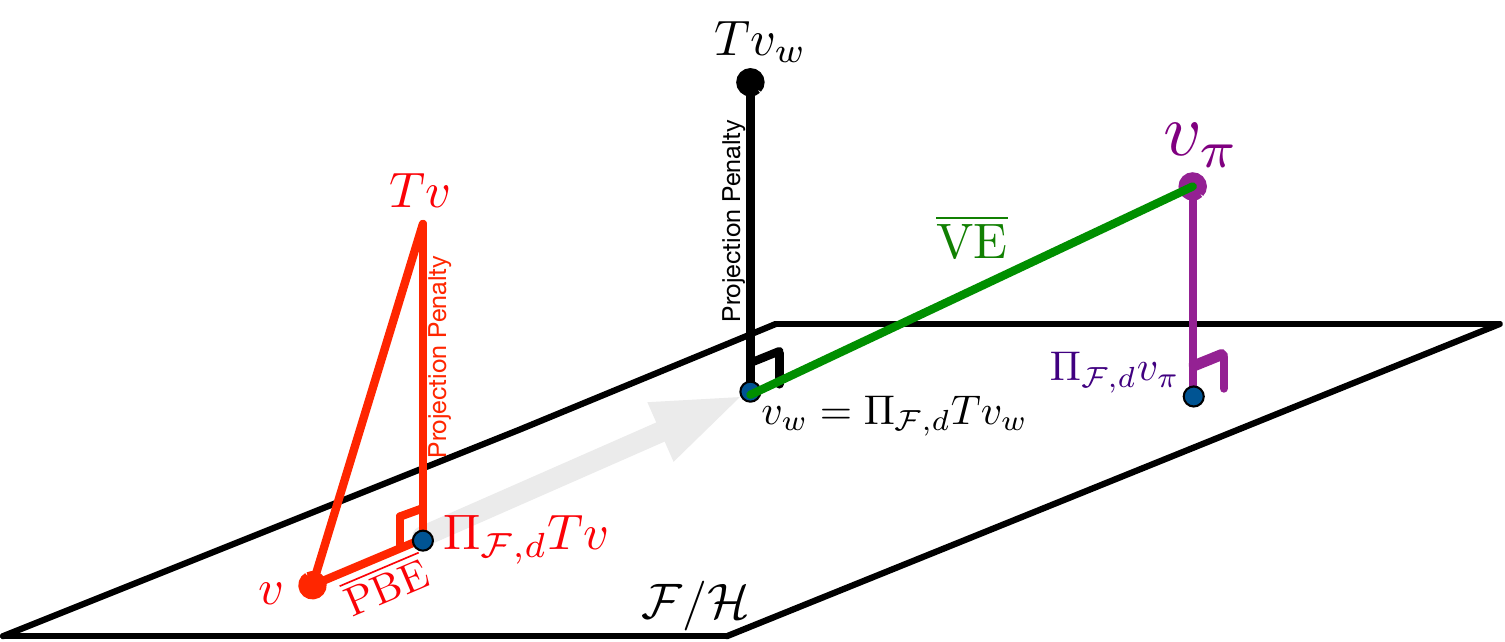}
  \caption{\label{fig:pbe-sol}
    The visualization above characterizes the true $v_\pi$, $\mspbe$ solution and how projections operate on successive approximations. Assume the estimate of $v_\pi$ starts from $\vecv$ in red. The Bellman operator pushes the value estimate out of the space of representable functions represented by the plane (Note this corresponds to $\mathcal{F} = \mathcal{H}$ introduced in Section~\ref{sec:identifiable-be}). The projection brings the approximation back down to the nearest representable function on the plane. This process is repeated over and over until the value estimates converge to the blue dot at the base of the black line. Subsequent updates push the approximation to $v_\pi$ out of the space of representable functions and the projection back onto the plane. The true value in this case is outside of $\mathcal{F}$, with the $\msve$ being the distance between the $\vecv$ at $\mspbe=0$ and $v_\pi$. Note the projection of $v_\pi$ onto $\mathcal{F}$ need not be equal to $\mspbe$ solution.
  }
\end{figure}



There has been much discussion, formal and informal, about using the $\msbe$ versus the $\mspbe$. The $\msbe$ can be decomposed into the $\mspbe$ and a projection penalty term \citep{scherrer2010should}. To understand why, recall the definition of a projection operator. For a vector space $\Vset$, the projection of a vector $v$ onto $\Vset$ is the closest point under a given (weighted) norm $\| \cdot \|_d$: $\min_{u \in \Vset } \| u - v \|_d$. This definition also applies to function spaces. Let $\Pi_{\Vset , d}$ be the weighted projection on the space of value functions, defined as $\Pi_{\Vset , d} u \defeq \argmin_{u \in \Vset } \| u - v \|_d$.
For a given vector $v \in \Re^{|\States|}$, composed of value estimates for each state, we get
\begin{equation}
\| v - \Bo v \|^2_d = \underbrace{\| v - \Pi_{\Vset , d}  \Bo v \|^2_d}_{\mspbe} + \underbrace{\| \Bo v - \Pi_{\Vset , d}  \Bo v \|^2_d}_{\text{Projection Penalty}}
\end{equation}
This penalty causes the $\msbe$ to prefer value estimates for which the projection does not have a large impact near the solution. The $\mspbe$ can find a fixed point where applying the Bellman operator $\Bo v$ moves far outside the space of representable functions, as long as the projection back into the space stays at $v$. The projection penalty is sensible, and in fact prevents some of the counterexamples on the solution quality for the $\mspbe$ discussed in Section~\ref{sec_theory_ve}. We visualize the projection penalty and a potential solution under the $\mspbe$ in Figure~\ref{fig:pbe-sol}, and contrast to a potential solution under the $\msbe$ in Figure~\ref{fig:be-sol}.

\begin{figure}[t]
\vspace{-0.2cm}
  \centering
  \includegraphics[width=0.5\linewidth]{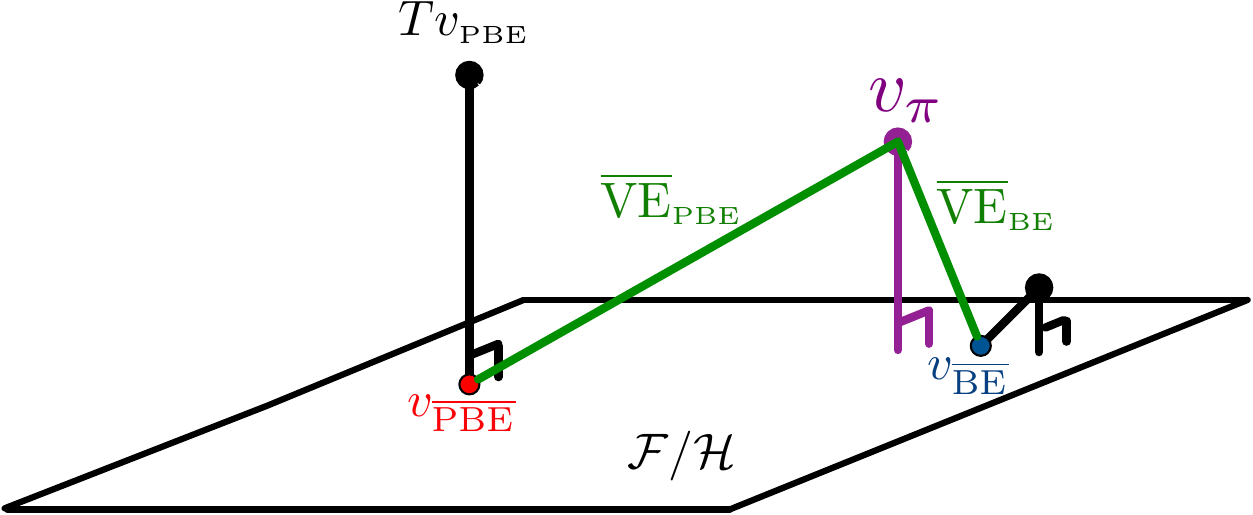}
  \caption{\label{fig:be-sol}
    A comparison of the $\msbe$ and $\mspbe$ solutions when the true value function is not representable. As before, we visualize how the approximation that minimizes the $\mspbe$ at convergence can be far from $v_\pi$ with a large projection penalty. The approximate value function that minimizes the $\msbe$ on the other hand is closer to $v_\pi$ and typically has a smaller projection penalty (note the Bellman operator would indeed push $\vecv_{{\text{\tiny BE}}}$ outside $\mathcal{F}$).
  }
\end{figure}

Despite the potential utility of the $\msbe$, it has not been widely used due to difficulties in optimizing this objective without a model. The $\msbe$ is difficult to optimize because of the well-known double sampling problem for the gradient. To see why, consider the gradient
\begin{align*}
\nabla_\vecw \msbe(\vecw)
&=  \sum_{s\in\S} d(s) \nabla_\vecw \CEpi{\delta(\vecw)}{S=s}^2\\
&=  2\sum_{s\in\S} d(s)  \CEpi{\delta(\vecw)}{S=s} \CEpi{\nabla_\vecw \delta(\vecw)}{S=s}\\
&=  2\sum_{s\in\S} d(s)  \CEpi{\delta(\vecw)}{S=s} \CEpi{\gamma \nabla_\vecw \paramv[S'] - \nabla_\vecw \paramv[s] }{S=s}
\end{align*}
To estimate this gradient for a given $S = s$, we need two independent samples of the next state and reward. We use the first to get a sample $\delta(\vecw)$ and the second to get a sample of $\gamma \nabla_\vecw \paramv[S'] - \nabla_\vecw \paramv[s]$. The product of these two samples gives an unbiased sample of the product of the expectations. If we instead only used one sample, we would erroneously obtain a sample of $\CEpi{\delta(\vecw) (\gamma \nabla_\vecw \paramv[S'] - \nabla_\vecw \paramv[s]) }{S=s}$.

One promising attempt to approximate the $\msbe$ used a non-parametric approach \citep{feng2019kernel}. The objective, called the $\mskbe$, takes pairs of samples from a buffer to overcome the double sampling problem. Unfortunately, this cannot overcome the issue of non-identifiability in the $\msbe$. There is a simple example where the same data is generated by two different MDPs, with different optima for the corresponding $\msbe$ \cite[Chapter 11.6]{sutton2018reinforcement}. The agent cannot hope to use the data to identify which of the two parameters is the optimal solution; that work used the term that the $\msbe$ is not \emph{learnable}.

The linear $\mspbe$, on the other hand, is practical to optimize under linear function approximation, as discussed above: the whole family of (gradient) TD algorithms is designed to optimize the linear $\mspbe$. Unfortunately, the $\mspbe$ is hard to optimize for the general nonlinear setting, because the projection is hard to compute. Prior attempts to extend GTD to the nonlinear $\mspbe$ \citep{maei2009convergent} resulted in an algorithm that requires computing Hessian-vector products.
In the next section, we discuss how to overcome these issues for the $\msbe$ and $\mspbe$, with a unified objective, that is a generalization of the $\mspbe$.

Finally, for completeness, we conclude with a description of the Mean-Squared TD error ($\mstde$), even though it is rarely used. The $\mstde$ was introduced to characterize the TD solution as a semi-gradient method. For the objective
\begin{equation}\label{eq:mstde}
\mstde(\vecw) \defeq  \sum_{s\in\S} d(s) \CEpi{ \left(R + \gamma \paramv[S'] - \paramv[s]\right)^2}{S=s}
\end{equation}
the gradient includes the gradient of $\paramv[S'] $. TD omits this term, and so is called a semi-gradient method. It is not typical to do gradient descent on the $\mstde$, due to commonly held views of poor quality and a counterexample for the residual gradient algorithm which uses the $\mstde$ \citep[]{sutton2018reinforcement}. We  highlight the significant bias when using the $\mstde$, in Appendix~\ref{sec_mstdebad}, providing further evidence that it is likely not a useful direction.

\subsection{An Identifiable $\msbe$}\label{sec:identifiable-be}

Before discussing the generalized $\mspbe$, we start by showing a conjugate form for the $\msbe$. This reformulation uses the strategy introduced by \citet{dai2017learning}, which more generally introduces this conjugate form for several objectives that use conditional expectations. They show how to use it for the $\msbe$ as an example, but defined it slightly differently because they condition on states and actions. For this reason, and because we will build further, we provide the explicit steps to derive the conjugate form for the $\msbe$.

Let $\Vset $ be the space of parameterized value functions and $\Fsetall$ the space of all functions.
The reformulation uses the fact that the biconjugate of the square function is $y^2 = \max_{h \in \mathbb{R}} 2yh - h^2$ and the fact that the maximum can be brought outside the sum (interchangeability), as long as a different scalar $h$ can be chosen for each state $s$, as it can be for $\Fsetall$ the space of all functions.
\begin{align*}
\msbe(\vecw)  &= \sum_{s\in\S} d(s) \CEpi{\delta(\vecw)}{S=s}^2 \\
&= \sum_{s\in\S} d(s) \max_{h \in \mathbb{R}} \left(2\CEpi{\delta(\vecw)}{S=s}h - h^2 \right) && \triangleright \text{ using the biconjugate function}\\
&=  \max_{h \in \Fsetall} \ \sum_{s\in\S} d(s) \left(2\CEpi{\delta(\vecw)}{S=s}h(s) - h(s)^2 \right) && \triangleright \text{ using interchangeability}
.
\end{align*}
The optimal $h^*(s) = \CEpi{\delta(\vecw)}{S=s}$, because
\begin{align*}
\argmax_{h \in \Fsetall} & \sum_{s\in\S} d(s) \left(2\CEpi{\delta(\vecw)}{S=s}h(s) - h(s)^2 \right)\\
&= \argmax_{h \in \Fsetall}  \sum_{s\in\S} d(s) \left( 2\CEpi{\delta(\vecw)}{S=s}h(s) - h(s)^2 - \CEpi{\delta(\vecw)}{S=s}^2 \right)\\
&= \argmax_{h \in \Fsetall}  -\sum_{s\in\S} d(s) \left(\CEpi{\delta(\vecw)}{S=s} - h(s)\right)^2\\
&= \argmin_{h \in \Fsetall}  \sum_{s\in\S} d(s) \left(\CEpi{\delta(\vecw)}{S=s} - h(s)\right)^2
.
\end{align*}
The function $h^*(s) = \CEpi{\delta(\vecw)}{S=s}$ provides the minimal error of zero. This optimal solution also makes it clear why the above is simply a rewriting of the $\msbe$ because
\begin{equation*}
2\CEpi{\delta(\vecw)}{S\!=\!s}h^*(s) - h^*(s)^2 = 2\CEpi{\delta(\vecw)}{S\!=\!s}^2 - \CEpi{\delta(\vecw)}{S\!=\!s}^2 = \CEpi{\delta(\vecw)}{S\!=\!s}^2
.
\end{equation*}

More generally, for the continuous state case, interchangeability also holds, as long as the function $h(s) = \CEpi{\delta(\vecw)}{S=s}$ satisfies $h \in \Fsetall$. We can more generically expressed the $\msbe$ using expectations over states: $\msbe(\vecw) = \mathbb{E}[\CEpi{\delta(\vecw)}{S}^2]$, where the outer expectation is over $S$ with distribution $d$. For $g(h, s) = \CEpi{\delta(\vecw)}{S=s}h - h^2$, the $\msbe$ is
 \begin{align}
 \mathbb{E}\left[\max_{h \in \mathbb{R}} g(h, S) \right] &= \int_{\S} d(s) \max_{h \in \mathbb{R}} g(h, s) ds
= \max_{h \in \Fsetall} \int_{\S} d(s) g(h(s), s) ds \label{eq_be_cont}
.
 \end{align}
Because $h(s) = \CEpi{\delta}{S=s}$ satisfies $h \in \Fsetall$, we know that a maximizer exists, as $h^* = h \in \Fsetall$. Then we can show that $\mathbb{E}[\max_{h \in \RR} g(h, S)] = \mathbb{E}[g(h^*(S), S)] = \max_{h \in \Fsetall} \mathbb{E}[g(h(S), S)]$.\footnote{This argument is similar to \citep[Lemma 1]{dai2017learning},
but we do not need to assume $g$ is upper semi-continuous and concave. They use this condition to ensure the existence of a maximum for $g(\cdot, s)$. We know the form of our $g$ and can directly determine the existence of a maximizer (expected TD error).}

As highlighted in \citep[Chapter 11.6]{sutton2018reinforcement}, the $\msbe$ is not identifiable. In that example, however, the inputs given to the value function learner are partially observable. In terms of the above formulation, this would mean the agent can only observe a part of the state for learning $\vecw$ but the whole state to learn $h$. Naturally, however, the input-space for $h$ should be similarly restricted to only observable information.
This leads us to a new set for $h$, which includes all functions on the same inputs $\phi(s)$ as given to $v$, rather than on state:
\begin{align*}
\Hsetall  \defeq \{h = f \circ \phi \ | \ \text{ where $f$ is any function on the space produced by $\phi$}\}
.
\end{align*}
The resulting $h$ is restricted to functions of the form $h(s) = f(\phi(s))$. We call the resulting $\msbe$ an \emph{Identifiable} $\msbe$, written as:
\begin{align*}
\ibe(\vecw)
&\defeq \max_{h \in \Hsetall } \ \mathbb{E}\left[2\CEpi{\delta(\vecw)}{S}h(S) - h(S)^2 \right] .
\end{align*}
Notice that $\Hsetall  \subseteq \Fsetall$, and so the solution to the $\ibe$ may be different from the solution to the $\msbe$. In particular, we know the $\ibe(\vecw) \le  \msbe(\vecw)$, because the inner maximization is more constrained. In fact, in many cases restricting $h$ can be seen as a projection on the errors in the objective, as we discuss next, making the $\ibe$ an instance of the generalized $\mspbe$.

\subsection{From the Identifiable Bellman Error back to a Projected Bellman Error}

The previous section discussed a conjugate form for the $\msbe$, which led to an identifiable $\msbe$. Even this $\ibe$, however, can be difficult to optimize, as we will not be able to perfectly represent any $h$ in $\Hsetall$. In this section, we discuss further approximations, with $h \in \Hset \subseteq \Hsetall$, leading to a new set of objectives based on projecting the Bellman error.

To practically use the minimax formulation for the $\msbe$, we need to approximate $h$ as an auxiliary estimator. This means $h$ must also be a parameterized function, and we will instead only obtain an approximation to the $\ibe$. Let $\Hset $ be a convex space of parameterized functions for this auxiliary function $h$. As we show below, this $\Hset$ defines the projection in the generalized $\mspbe$.

In order to define the $\mspbe$, we first define a projection operator $\Pi_{\Hset, d}$ which projects any vector $u \in \Re^{|\States|}$ onto convex subspace $\Hset \subseteq \Re^{|\States|}$ under state weighting $d$:
\begin{equation}
\Pi_{\Hset , d} u \defeq \argmin_{h \in \Hset } \| u - h \|_d.
\end{equation}
We define the generalized $\mspbe$ as
\begin{equation}
    \mspbe(\vecw) \defeq \| \Pi_{\Hset , d} (\Bo \paramv - \paramv) \|^2_d
\end{equation}
where each choice of $\Hset $ results in different projection operators. This view provides some intuition about the role of approximating $h$. Depending on how errors are projected, the value function approximation will focus more or less on the Bellman errors in particular states. If the Bellman error is high in a state, but those errors are projected to zero, then no further approximation resources will be used for that state. Under no projection---the set for $h$ being the set of all functions---no errors are projected and the values are learned to minimize the Bellman error. If $\Hset  = \Vset $, the same space is used to represent $h$ and $v$, then we obtain the projection originally used for the $\mspbe$.

We now show the connection between this $\mspbe$ and the $\msbe$.
In the finite state setting, we have a vector $u \in \Re^{|\States|}$ composed of entries $\CEpi{\delta(\vecw)}{S=s}$: $u = \Bo \paramv - \paramv$.
Given a Hilbert space equipped with inner-product $\langle\cdot, \cdot\rangle_d$, the projection onto this space using a diagonal weighting matrix $D \defeq \diag(d)$ is guaranteed to be an orthogonal projection. This property means $u = \Pi_{\Hset , d} u + \tilde{u} = h + \tilde{u}$, where $h = \Pi_{\Hset , d} u$ and $\tilde{u}$ is the component in $u$ that is orthogonal in the weighted space: $h^\top D \tilde{u} = 0$ for $D \defeq \diag(d)$.
Then we can write the conjugate form for the $\msbe$, now with restricted $\Hset \subset \Hsetall$
%
 \begin{align*}
 \max_{h \in \Hset }& \sum_{s\in\S} d(s) \left(  2\CEpi{\delta(\vecw)}{S=s}h(s) - h(s)^2 \right)\\
 &= \max_{h \in \Hset } \sum_{s\in\S} d(s) \left(  2u(s)h(s) - h(s)^2 \right) \hspace{2.0cm}\triangleright \text{ rewriting $u(s) = \CEpi{\delta(\vecw)}{S=s}$}\\
 &= \sum_{s\in\S} d(s) \left(2u(s) h(s) - h(s)^2 \right) \hspace{3.0cm}\triangleright \text{ where $h = \Pi_{\Hset , d} u$}\\
 &= \sum_{s\in\S} d(s) \left(2[h(s) + \tilde{u}(s)] h(s) - h(s)^2 \right) \hspace{1.3cm}\triangleright \text{ because $u(s) = h(s) + \tilde{u}(s)$}\\
 &= \sum_{s\in\S} d(s) \left(2 h(s)^2 - h(s)^2 \right) + 2 \sum_{s\in\S} d(s) \tilde{u}(s) h(s) \\
 &= \sum_{s\in\S} d(s) h(s)^2  + 2 \sum_{s\in\S} d(s) \tilde{u}(s) h(s) \\
 &= \sum_{s\in\S} d(s) h(s)^2 \hspace{2.0cm} \triangleright \text{ where $\sum_{s\in\S} d(s) \tilde{u}(s) h(s) = 0$ because }\\
 &=  \| \Pi_{\Hset , d} (\Bo \paramv - \paramv) \|^2_d  \hspace{2.0cm} \text{ $h$ is orthogonal to $\tilde{u}$, under weighting $d$ }\\
 &= \mspbe(\vecw)
 \end{align*}

The key assumption above is that $\Hset$ is a Hilert space in order to ensure that the projection operator $\Pi_{\Hset , d}$ is an orthogonal projection \citep{dudek1994nonlinear}. This assumption is easily satisfied by linear functions with a fixed basis $\phi(s)$, with bounded weights. For any two functions with weights $\vecw_1$ and $\vecw_2$, the function defined by weights $(1-c)\vecw_1 + c \vecw_2$ is also in the set. More generally, typical convex nonlinear function approximation sets used in machine learning are reproducing kernel Hilbert spaces. Many classes of neural networks have been shown to be expressible as RKHSs (see \citet{bietti2019group} for a nice overview), including neural networks with ReLU activations as are commonly used in RL. Therefore, this is not an overly restrictive assumption.

\subsection{The Connection to Previous $\mspbe$ Objectives}

In this section we show how the generalized $\mspbe$ lets us express the linear $\mspbe$, and even the nonlinear $\mspbe$, by selecting different sets $\Hset$.
First let us consider the linear $\mspbe$. The easiest way to see this is to use the saddlepoint formulations developed for the linear $\mspbe$ \citep{mahadevan2014proximal,liu2016proximal,touati2018convergent}. The goal there was to re-express the existing linear $\mspbe$ using a saddlepoint form, rather than to re-express the $\msbe$ or find connections between them. The linear $\mspbe$ = $\| \vecb - \Amat \vecw \|^2_{\Cmat^\inv}$ can be rewritten using the conjugate for the two norm. The conjugate for the two-norm is $\tfrac{1}{2} \| \vecy \|_{\Cmat^\inv} = \max_{\vech} \vecy^\top \vech - \tfrac{1}{2} \| \vech \|_{\Cmat}^2$, with optimal $\vech = \Cmat^\inv \vecy$. Correspondingly, we get
\begin{align*}
\tfrac{1}{2} \| \vecb - \Amat \vecw \|^2_{\Cmat^\inv}
&=  \max_{h \in \RR^{\xdim}}  (\vecb - \Amat \vecw)^\top \vech - \tfrac{1}{2} \| \vech \|_{\Cmat}^2
\end{align*}
where the solution for $\vech = \Cmat^\inv (\vecb - \Amat \vecw)$. This solution makes the first term equal to $\| \vecb - \Amat \vecw\|^2_{\Cmat^\inv}$ and the second term equal to $-\tfrac{1}{2} \| \vecb - \Amat \vecw \|^2_{\Cmat^\inv}$; adding them gives $\tfrac{1}{2} \| \vecb - \Amat \vecw \|^2_{\Cmat^\inv}$.

\begin{figure*}
\vspace{-0.3cm}
    \centering
    \includegraphics[width=0.7\linewidth]{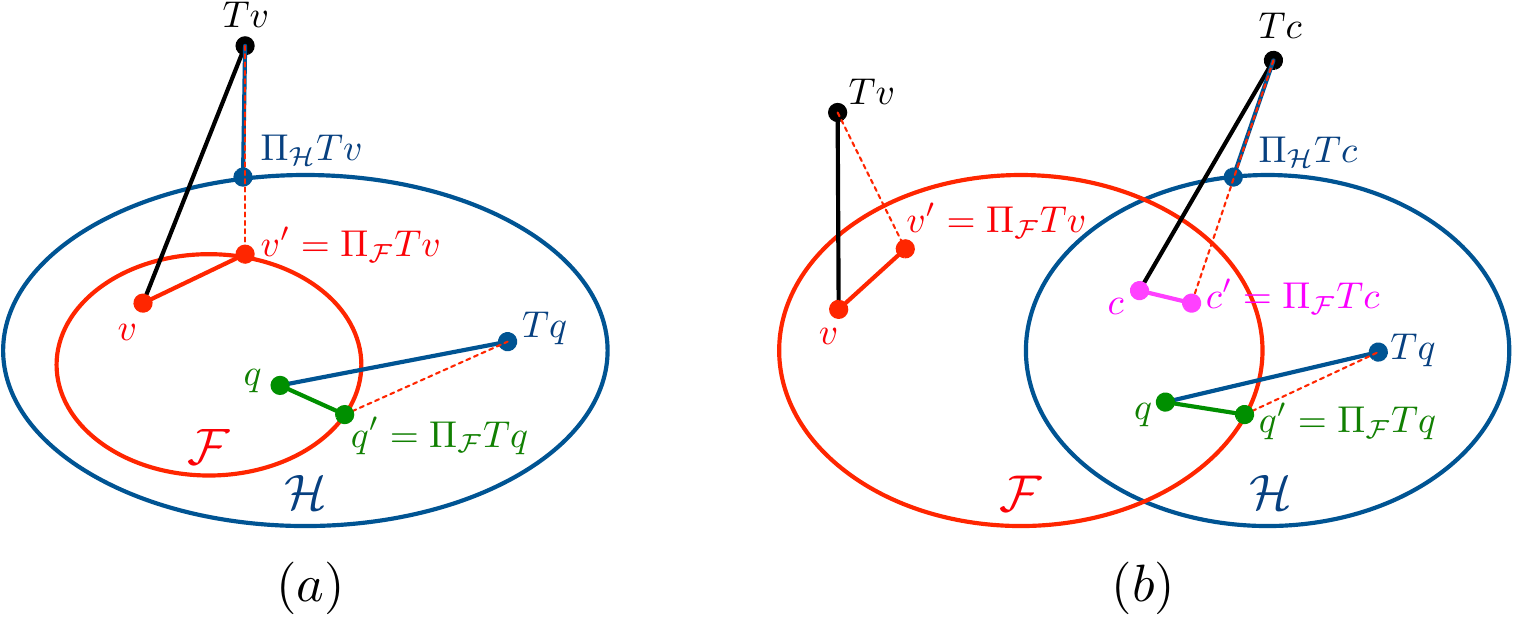}
    \caption{\label{fig:pbe-ven}
        A visual interpretation of how the Bellman operator can push the value estimates outside the space of representable functions and the role of the projection operator. The set $\Vset$ corresponds to the (parameterized) space of value functions and $\Hset$ is the set of functions that approximate (project) the Bellman error $\Bo v - v$. Potential settings include $\Vset = \Hset$ (visualized in Figure~\ref{fig:pbe-sol}), $\Vset \subset \Hset$ visualized in (a) and $\Vset \neq \Hset$ visualized in (b).
        In (a), we highlight two cases: $\Bo v$ is not representable by any function in $\mathcal{F}$ or $\mathcal{H}$, or $\Bo v$ is representable by functions in $\mathcal{H}$ but not $\mathcal{F}$. In (b) we see examples of projections when $\mathcal{F}$ intersects $\mathcal{H}$.
    }
\end{figure*}

We can obtain the same formulation under the $\mspbe$, by restricting $\Vset$ and $\Hset$ to be the same set of linear functions. Let $\mathcal{L} = \{f: \States \rightarrow \RR : f(s) = \vecx(s)^\top \vecw, \vecw \in \RR^\xdim \}$. For $\Vset = \Hset = \mathcal{L}$, we have that
$h^* = \argmin_{h \in \mathcal{L}}  \sum_{s\in\S} d(s) \left(\CEpi{\delta(\vecw)}{S=s} - h(s)\right)^2$ satisfies
$h^*(s) = \vecx(s)^\top \vech^*$.
This $h^*$ is the linear regression solution for targets $\delta(\vecw)$, so $\vech^* = \E{\vecx \vecx^\top}^{\inv} \E{\vecx \delta(\vecw)}$ which equals $\Cmat^\inv (\vecb - \Amat \vecw)$.
We can further verify that the resulting $\mspbe$ matches the linear $\mspbe$ (see Appendix~\ref{app_saddlepoint}).
This result is alluded to in the connection between the NEU and the $\mskbe$, in \citep[Corollary 3.5]{feng2019kernel}, but not explicitly shown.

This connection also exists with the nonlinear $\mspbe$, but with a surprising choice for the parameterization of $h$: using the gradient of the value estimate as the features. The nonlinear $\mspbe$ is defined as \citep{maei2009convergent}
\begin{align*}
\text{nonlinear } \mspbe(\vecw) &= \E{\delta(\vecw) \nabla_\vecw \paramv[s]}^\top \E{\nabla_\vecw \paramv[s] \nabla_\vecw \paramv[s]^\top}^{\inv} \E{\delta(\vecw) \nabla_\vecw \paramv[s]}
.
\end{align*}
\newcommand{\hnonlinear}{h^*_{\text{nl}}}
\newcommand{\vechnonlinear}{\vech^*_{\text{nl}}}
This corresponds to the linear $\mspbe$ when $\Vset = \mathcal{L}$, because $\nabla_\vecw \paramv[s] = \vecx(s)$. Define set $\mathcal{G}_\vecw = \{f: \States \rightarrow \RR : f(s) = \vecy(s)^\top \vech, \vech \in \RR^\xdim \text{ and } \vecy(s) = \nabla_\vecw \paramv[s]\}$. Notice that this function set for $h$ changes as $\vecw$ changes. Then we get that
\begin{equation*}
\hnonlinear = \argmin_{h \in \mathcal{G}_\vecw}  \sum_{s\in\S} d(s) \left(\CEpi{\delta(\vecw)}{S=s} - h(s)\right)^2
\end{equation*}
satisfies $\hnonlinear(s) = \nabla_\vecw \paramv[s]^\top \vechnonlinear$ where $\vechnonlinear = \E{\nabla_\vecw \paramv[s] \nabla_\vecw \paramv[s]^\top}^{\inv} \E{\delta(\vecw) \nabla_\vecw \paramv[s]}$.

Plugging this optimal $h$ back into the formula,
we get that
\begin{align*}
& \max_{h \in \mathcal{G}_\vecw}  \sum_{s\in\S} d(s) \left(2\CEpi{\delta(\vecw)}{S=s}h(s) - h(s)^2\right)
= \sum_{s\in\S} d(s) \left(2\CEpi{\delta(\vecw)}{S=s}\hnonlinear(s) - \hnonlinear(s)^2\right) \\
&= \Big(\sum_{s\in\S} d(s) 2\CEpi{\delta(\vecw)}{S=s}\nabla_\vecw \paramv[s]^\top\Big) \vechnonlinear - \sum_{s\in\S} d(s) (\vechnonlinear)^\top\nabla_\vecw \paramv[s]\nabla_\vecw \paramv[s]^\top \vechnonlinear \\
&= 2 \mathbb{E}[\delta(\vecw)\nabla_\vecw \paramv[s]]^\top \vechnonlinear - (\vechnonlinear)^\top \mathbb{E}[\nabla_\vecw \paramv[s]\nabla_\vecw \paramv[s]^\top] \vechnonlinear \\
&= 2\text{nonlinear } \mspbe(\vecw)  - \text{nonlinear } \mspbe(\vecw) \\
&= \text{nonlinear } \mspbe(\vecw)
\end{align*}
This nonlinear $\mspbe$ is not an instance of the generalized $\mspbe$, as we have currently defined it, because the $\Hset$ changes with $\vecw$. It is possible that such a generalization is worthwhile, as using the gradient of the values as features is intuitively useful. Further, interchangeability should still hold, as the exchange of the maximum was done for a fixed $\vecw$. Therefore, it is appropriate to explore an $\Hset$ that changes with $\vecw$, and in our experiments we test $\Hset = \mathcal{G}_\vecw$.

In summary, in this section we introduced the generalized $\mspbe$ and highlighted connections to the linear $\mspbe$ and $\msbe$. The generalized $\mspbe$ provides a clear path to develop value estimation under nonlinear function approximation, providing a strict generalization of the linear $\mspbe$. Two secondary benefits are that the generalized $\mspbe$ provides a clear connection between the $\msbe$ and $\mspbe$, based on a difference in the choice of projection ($\Hset$), and resolves the identifiability issue in the $\msbe$.

\section{Understanding the Impact of Choices in the Generalized $\mspbe$}

The two key choices in the Generalized $\mspbe$ is the state-weighting and the (projection) set $\Hset$. There are at least three clear criteria for selecting $\Hset$ and the state-weighting: (1) the quality of the solution, (2) the feasibility of implementation and (3) the estimation error during learning and the impact on learning the primary weights.
In this section, we provide some conceptual and empirical insight into how to choose $\Hset$, and empirically show that the choice of weighting can significantly change the quality of the solution. After first showing that these choices clearly matter, and some intuition for why, we then provide theory characterizing the quality of the solution in the following section.

\subsection{The Projection Set and the Quality of the Solution}

The first criteria parallels the long-standing question about the quality of the solution under the linear $\mspbe$ versus the $\msbe$. The examples developed for that comparison provide insights on $\Hset$. In this section, we revisit these examples, now in context of the generalized $\mspbe$.

Objectives based on Bellman errors perform \emph{backwards bootstrapping}, where the value estimates in a state $s$ are adjusted both toward the value of the next state and the value of the previous state.
In the case of the $\msbe$, backwards bootstrapping can become an issue when two or more states are heavily aliased and these aliased states lead to successor states with highly different values.
Because the aliased states look no different to the function approximator, they must be assigned the same estimated value.
For each of these aliased states, backwards bootstrapping forces the function approximator to balance between accurately predicting the successor values for all aliased states, as well as adjusting the successor values to be similar to those of the aliased states.

The $\mspbe$, on the other hand, projects the error for the aliased states,
 ignoring the portion of the Bellman error that forces the function approximator to balance the similarity between the aliased state value and the successor state value.
This allows the function approximator the freedom to accurately estimate the values of the successor states without trading off error in states which it cannot distinguish.

To make this concrete, consider the following 4-state MDP from \citet{sutton2009fast}.
State $A_1$ and $A_2$ are aliased under the features for $\Vset$. For the linear $\mspbe$, $\Hset = \Vset$, and so the states are also aliased when approximating $h$. For the $\msbe$, they are not aliased for $h$.
$A_1$ transitions to $B$ and terminates with reward 1. $A_2$ transition to $C$ and terminates with reward 0. The linear $\mspbe$ results in the correct values for $B$ and $C$---1 and 0 respectively---because it does not suffer from backwards bootstrapping. The $\msbe$, on the other hand, assigns them values $\tfrac{3}{4}$ and $\tfrac{1}{4}$, to reduce Bellman errors at $A_1$ and $A_2$. A generalized $\mspbe$ with other $\Hset \neq \Vset$ would suffer the same issue as the $\msbe$ in this example, unless the projection $\Pi_\Hset$ mapped errors in the aliased states to zero.

\begin{figure*}
    \centering
    \includegraphics[width=\linewidth]{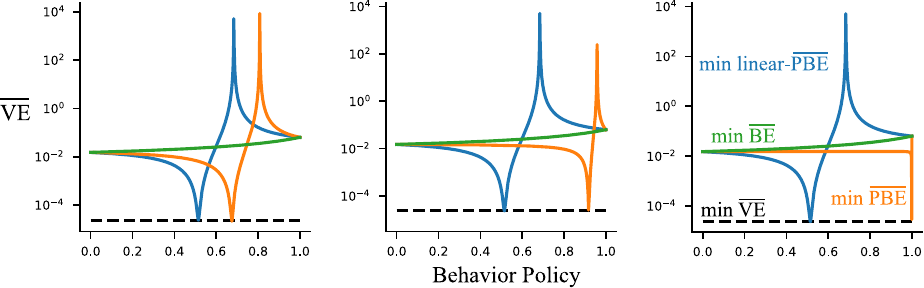}
    \caption{\label{fig:kolterexample}
      The visualization above shows how the $\mspbe$ solution can result in arbitrarily bad value error under some behaviours. The blue line above is the same as the visualization used in prior work to demonstrate issues with minimizing $\mspbe$ (see \citet{kolter2011fixed} for a description of the counterexample). The  vertical axis measures $\msve$ and the horizontal axis different behavior policies. This figure differs from \citet{kolter2011fixed}; we show that the $\msbe$ solution exhibits low error and highlight the impact of changing $\Hset$.
      The size of the set $\Hset$ increases from the left subplot to the right.
      More behavior policies result in low generalized $\mspbe$ as the set $\Hset$ increases.
    }
\end{figure*}

On the other hand, the linear $\mspbe$ can find solutions where the Bellman error is very high, even though the projected Bellman error is zero.
Consider the plane of value functions that can be represented with a linear function approximator.
The Bellman operator can take the values far off of this surface, only to be projected back to this surface through the projection operator.
At the fixed-point, this projection brings the value estimate back to the original values and the distance that the value estimate moved on the plane is zero, thus the $\mspbe$ is zero. The $\mspbe$ can be zero even when the $\msbe$ is large. \citet{kolter2011fixed} provides an example where the solution under the $\mspbe$ can be made arbitrarily far from the true value function.
We expand on this example in Figure~\ref{fig:kolterexample}, and show that the solution under the linear $\mspbe$ can be arbitrarily poor, even though the features allow for an $\epsilon$ accurate value estimate and the solution under the $\msbe$ is very good.

If we use an emphasis weighting for the states, rather than behavior visitation, then the solution under the $\mspbe$ becomes reasonable. Further, even just a small change to $\Hset$ so that $\Hset \neq \Vset$ resolves this counterexample. Ideally, we would use an $\Hset$ similar to $\Vset$, to avoid backwards bootstrapping. Conceptually, a potentially reasonable choice for $\Hset$ is therefore either (1) $\Hset = \Vset$ with some consideration on adjusting the state weighting and (2) an $\Hset$ that is only slightly bigger than $\Vset$, potentially with the inclusion of an additional feature.

\paragraph{A Simple Experiment for Solution Quality Under Different Weightings and $\Hset$}
We empirically investigate the quality of the solution under the $\mspbe$ and $\msbe$ with three different weightings: $d_b$, $d_\pi$ and $\mweight$. The solution quality is measured by the $\msve$ under $d_b$ and $d_\pi$.
We compute the fixed-point of each objective on a 19-state random walk with randomly chosen target and behavior policies.
To isolate the impact of representation on the fixed-points, we investigate several forms of state representation where $v_\pi$ is outside the representable function class.
We include the \emph{Dependent} features from \citet{sutton2009fast}, randomly initialized sparse ReLu networks, tile-coded features, and state aggregation.

The random-walk has 19 states with the left-most and right-most state being terminal.
The reward function is zero everywhere except on transitioning into the right-most terminal state where the agent receives +1 reward, and on the left-most terminal state where the agent receives -1 reward.
The discount factor is set to $\gamma=0.99$.

\begin{figure*}
  \centering
  \includegraphics[width=0.75\linewidth]{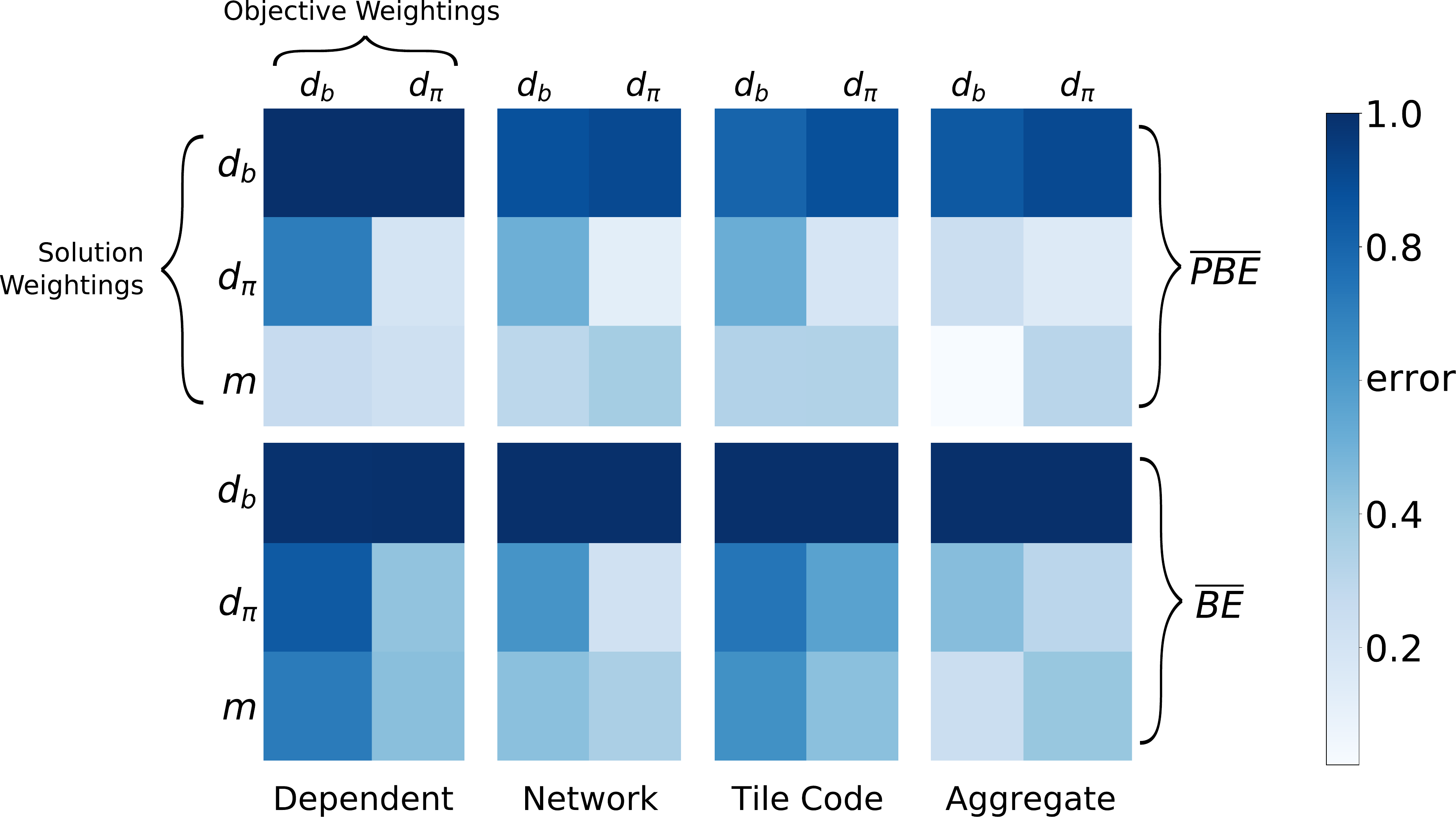}
  \caption{\label{fig:rw-fixed-points}
    Investigating the $\msve$ of the fixed-points of $\mspbe$ and $\msbe$ under $d_b$, $d_\pi$, and $\mweight$ on a 19-state random walk. All errors are computed closed form given access to the reward and transition dynamics. The fixed-point of the $\mspbe$ with emphatic weighting consistently has the lowest error across several different state representations (light color); while the fixed-point of the $\mspbe$ under $d_b$ has the highest error (dark blue).
    Results are averaged over one million randomly generated policies and state representations.
  }
\end{figure*}

We run each experimental setting one million times with a different randomly initialized neural network, random offset between tilings in the tile-coder, and randomly sampled target and behavior policy.
The policies are chosen uniformly randomly on the standard simplex.
The neural network is initialized with a Xavier initialization \citep{glorot2010understanding}, using 76 nodes in the first hidden layer and 9 nodes in the final feature layer.
Then 25\% of the neural network weights are randomly set to zero to encourage sparsity between connections and to increase variance between different randomly generated representations.
The tile-coder uses 4 tilings each offset randomly and each containing 4 tiles.
The state aggregator aggressively groups the left-most states into one bin and the right-most states into another, creating only two features.

Figure~\ref{fig:rw-fixed-points} shows the normalized log-error of the fixed-points of the $\mspbe$ and $\msbe$ under each weighting. A normalized error between $[0,1]$, for each representation, is obtained by (1) computing the best value function representable by those features, $\min_{v \in \Vset} \msve(v)$ under $d_b$ or $d_\pi$ and (2) subtracting this minimal $\msve$, and normalizing by the maximum $\msve$ for each column (across objectives and weightings for a fixed representation).
The fixed-points are computed using their least-squares closed form solutions given knowledge of the MDP dynamics.
Plotted is the mean error across the one million randomly initialized experimental settings.
The standard error between settings is negligibly small.

Interestingly, the fixed-points corresponding to weighting $d_b$ consistently have the highest error across feature representations, even on the excursion $\msve$ error metric with weighting $d_b$.
The $\mspbe$ under emphatic weighting, $\mweight$, consistently has the lowest error across all feature representations, though is slightly outperformed by $\mspbe$ with weighting $d_\pi$ for the $\msve$ with weighting $d_\pi$.
In these experiments, the $\msbe$ appears to have no advantages over the $\mspbe$, meaning that the more restricted $\Hset$ for $\mspbe$ produces sufficiently high quality solutions.

\subsection{Feasibility of the Implementation}

There are many feasible choices for estimating $h$.
Likely the simplest is to use the same approximator for $h$ as for the values. For example, this might mean that $h$ and $v$ use the same features, or that we have two heads on a shared neural network. However, we could feasibly consider a much bigger class for $h$, because $h$ is only used during training, not prediction. For example, we might want $v$ to be efficient to query, and so use a compact parametric function approximator. But $h$ could use a more computationally costly function approximator, updated with replay between agent-environment interaction steps.

To expand the space $\Hset$, one feasible approach is to use a separate set of features for $\Hset$ or learn a separate neural network. The separate neural network implicitly can learn a different set of features, and so allows $h$ to use different features than $v$. If we allow this second neural network to be much bigger, then we expand the space $\Hset$ and make the generalized $\mspbe$ closer to the $\msbe$.

We can take this expansion further by using nonparametric function approximators for $h$. For example, a reservoir of transitions can be stored, where $\CEpi{\delta_t}{S_t=s}$ is approximated using a weighted average over $\delta_t$ in the buffer, where the weighting is proportional to similarity between that state and $s$.
This is the strategy taken by the Kernel $\msbe$ \citep{feng2019kernel}, precisely to reduce bias in $h$ and so better approximate the $\msbe$.

When learning online, this non-parametric approach is less practical. Either a large buffer needs to be maintained, or a sufficient set of representative transitions identified and stored. Further, it is not clear that estimating the $\msbe$ more closely is actually desirable, as discussed in the previous section. In this work, where we learn online, we advocate for the simplest approach: a shared network, with two heads (see Section~\ref{sec_alg_control}). We also show that learning two separate neural networks performs comparably, in Section~\ref{sec_h_bases}.

\subsection{Estimation Error and the Impact on Primary Weight Updates}

The generalized $\mspbe$ presents an additional trade-off between approximation error and estimation error in $h$. A rich $\Hset$ may reduce the approximation error (and projection penalty) at the expense of higher error in estimating $\delta_t$ via $h$. A more restricted $\Hset$ may yield lower estimation error, because less data is needed to estimate $h$. Note that this trade-off is for approximating/estimating the objective itself. It is different from---and secondary to---the approximation-estimation trade-off for the value function with set $\Vset$.

One strategy to restrict $\Hset$ is to add regularization on $h$. For example, an $\ell_2$ regularizer constrains $h$ to be closer to zero---reducing variance---and improves convergence rates. This strategy was introduced in an algorithm called TD with Regularized Corrections (TDRC) and the control variant, Q-learning with Regularized Corrections (QRC) \citep{ghiassian2020gradient}. Empirically, these algorithms performed comparably to their TD counterparts, in some cases performing significantly better. This particular constraint on $\Hset$ was particularly appropriate, because the bias from regularization asymptotically disappears: at the TD fixed-point, the true parameters $\hparams$ are zero and regularization biases $\hparams$ towards zero.

More generally, the criteria for selecting $\Hset$ is about improving the primary update, rather than necessarily reducing approximation error or estimation error for $\Hset$. Characterizing how $\Hset$ improves updates for the primary weights remains an open question. One could imagine algorithmic strategies to identify such an $\Hset$ using meta-learning, with the objective to optimize features for $\Hset$ to make the primary weights learn more quickly. This question is particularly difficult to answer, as $h$ can be used in two ways: within the standard saddlepoint update or for gradient corrections, as we discuss further in Section~\ref{sec_algs}. With gradient corrections, the interim bias in $h$ is less problematic than in the saddlepoint update.

In this work, we again advocate for a simple choice: reducing estimation error for $\Hset$ using $\ell_2$ regularization, within a gradient corrections update as in TDRC \citep{ghiassian2020gradient}. We find this choice to be generally effective, and easy to use. Nonetheless, there is clearly much more investigation that can be done to better understand the choice of $\Hset$.

\section{Bounding Value Error \& the Impact of Weighting on Solution Quality}\label{sec_theory_ve}

The desired objective to minimize is the value error with weighting $\dobj$. We, however, optimize a surrogate objective, like the $\mspbe$, with a potentially different weighting $\dsol$. In fact, in the last section in Figure \ref{fig:rw-fixed-points}, we saw the it can be better to pick $\dsol \neq \dobj$, where optimizing $\mspbe$ with $\dsol = m$ produce better solutions in terms of $\msve$ with $\dobj = d_b$ than $\mspbe$ with $\dsol = d_b$.
In this section, we characterize the solution quality under the generalized $\mspbe$, which depends both on $\Hset$ and $\dsol$.

Let $v_{\vecw}$ be the vector consisting of value function estimates $v(s, \vecw)$. Further, let $\vsol$ be the solution to the generalized $\mspbe$. Similarly to prior theoretical work \citep[Equation 5]{yu2010error},
our goal is to find bounds of the form
\begin{equation}
    \underbrace{\| \vsol- v_\pi \|_{\dobj}}_{\text{Value error}} \leq C(\dobj, \dsol,\Hset) \underbrace{\| \Pi_{\Vset,\dsol}v_\pi - v_\pi \|_{\dsol}}_{\text{Approximation error}} \label{eq_generic_bound}
\vspace{-0.1cm}
\end{equation}
where the constant $C(\dobj, \dsol,\Hset)$ in the bound depends on the two weightings and the projection set $\Hset$. The term $\| \Pi_{\Vset,\dsol}v_\pi - v_\pi \|_{\dsol} = \min_{v \in \Vset} \| v - v_\pi \|_\dsol$ represents the approximation error: the lowest error under function class $\Vset$ if we could directly minimize $\msve$ under our weighting $\dsol$. Compared to prior theoretical work \citep[Equation 5]{yu2010error}, here we generalize to the nonlinear setting and where $\dsol$ may not equal $\dobj$.
We start in the case where $\Hset  = \Vset $, and then generalize to $\Hset  \supset \Vset $---where $\Hset$ is a superset of $\Vset $---in the following subsection.
Many of these results build on existing work, which we reference throughout; we also provide a summary table of existing results in the appendix, in Table~\ref{tbl_known}.

\subsection{Upper Bound on $\msve$ when $\Hset  = \Vset $}\label{sec:upper-bound-msve-equal}

Throughout this section we will assume that $\Hset  = \Vset $, so that the projection operator for both the objective and value function space is the same. This matches the setting analyzed for the linear $\mspbe$, though here we allow for nonlinear functions.

Our goal is to characterize the solution to the $\mspbe$, the fixed point $\vsoleq = \Pi_{\Vset,\dsol} \Bo \vsoleq$. The typical approach is to understand the properties of $\Bo$ under norm $\| \cdot \|_\dsol$, as in \citet[Lemma 6.9]{bertsekas1996neurodynamic} or \citet[Theorem 1]{white2017unifying}. However, we can actually obtain more general results, by directly characterizing $\Pi_{\Vset,\dsol}\Bo$ and making assumptions about the norm only for a subset of value functions. This approach builds on the strategy taken by \citet[Theorem 2]{kolter2011fixed}, where the choice of $\dsol$ was constrained to ensure a contraction, and on the strategy taken by \citet[Theorem 4.3]{ghosh2020representations}, where the set of value functions is constrained to ensure a contraction. We combine the two ideas and get a more general condition, as well as an extension to nonlinear function approximation.

\begin{assumption}[Convex Function Space]
The set $\Vset$ is convex.
\end{assumption}
This convexity assumption is needed to ensure the projection operator has the typical properties, particularly that $\| \Pi_{\Vset,\dsol}(\vgen_1 - \vgen_2) \|_{\dsol} \le \| \vgen_1 -\vgen_2 \|_{\dsol}$ for all $\vgen_1 ,\vgen_2 \in \Vset$.

To characterize the Bellman operator, it will be useful to directly define the discounted transition operator (matrix) under $\pi$, $\Ppigamma \in \mathbb{R}^{|S| \times |S|}$, where
\begin{equation}
\Ppigamma(s,s') \defeq \sum_a \pi(a | s) P(s' | s, a) \gamma(s,a,s')
\end{equation}
For a constant discount of $\gamma_c < 1$ in the continuing setting, this simplifies to $\Ppigamma(s,s') = \gamma_c P_\pi$ for $P_\pi(s,s') \defeq \sum_a \pi(a | s) P(s' | s, a)$. We can characterize when the projected Bellman operator is a contraction, by using either the norm of this discounted transition operator or the norm of the projected discounted transition operator for a restricted set of value functions.
\begin{definition}[Discounted Transition Constant]
Define the \emph{discounted transition constant} $\ctransition \defeq \| \Ppigamma \|_\dsol$, the weighted spectral norm of the discounted transition operator $\Ppigamma$.
\end{definition}
\begin{definition}[Operator Constant]
Define the \emph{projected Bellman operator constant} $\contset{\Vset} > 0$ for the set of value functions $\Vsetsub \subseteq \Vset$ as the constant that satisfies
\begin{equation}
\| \Pi_{\Vset,\dsol}\Ppigamma (\vgen_1 - \vgen_2) \|_{\dsol} \le \contset{\Vset} \| \vgen_1 -\vgen_2 \|_{\dsol} \ \ \ \ \text{for any $\vgen_1,\vgen_2 \in \Vsetsub$.}
\end{equation}
Notice that $\contset{\Vset} \le \ctransition$ because $\| \Pi_{\Vset,\dsol}\Ppigamma \vgen \|_{\dsol} \le \| \Ppigamma \vgen \|_{\dsol}$ for any $\vgen$.
\end{definition}
We show next that the $\msve$ of the solution $\vsoleq$ to the generalized $\mspbe$ is upper bounded by the approximation error times a constant. This constant depends only on the discounted transition constant, if it is less than 1, and otherwise depends also on the operator constant.
\begin{theorem} Assume $\contset{\Vset} < 1$. Let
\vspace{-0.3cm}
\begin{equation}
  C(\dsol, \Vset)  \defeq \left \{ \begin{array}{ll}
         \frac{1+\ctransition}{1-\contset{\Vset}} & \mbox{if $\ctransition \geq 1$};\\
        \frac{1}{1- \ctransition} & \mbox{if $\ctransition < 1$}.\end{array} \right.
\end{equation}
Then
\vspace{-0.3cm}
\begin{equation}
    \| \vsoleq- v_\pi \|_{\dsol} \leq C(\dsol, \Vset) \| \Pi_{\Vset,\dsol}v_\pi - v_\pi \|_{\dsol}
    \label{eq_msve_bound}
    .
\end{equation}
\end{theorem}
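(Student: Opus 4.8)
The plan is to anchor a triangle inequality at the projected true value and then exploit that $\vsoleq$ is a fixed point of $\Pi_{\Vset,\dsol}\Bo$ while $v_\pi$ is a fixed point of $\Bo$. Setting $u \defeq \Pi_{\Vset,\dsol}v_\pi$, both $\vsoleq$ and $u$ lie in $\Vset$, and since $v_\pi = \Bo v_\pi$ we have $u = \Pi_{\Vset,\dsol}\Bo v_\pi$ and $\vsoleq = \Pi_{\Vset,\dsol}\Bo\vsoleq$. The triangle inequality gives $\| \vsoleq - v_\pi \|_{\dsol} \le \| \vsoleq - u \|_{\dsol} + \| u - v_\pi \|_{\dsol}$, where the second term is exactly the approximation error $\| \Pi_{\Vset,\dsol}v_\pi - v_\pi \|_{\dsol}$. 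So everything reduces to bounding the term $\| \vsoleq - u \|_{\dsol}$ by a multiple of the approximation error. (The hypothesis $\contset{\Vset} < 1$ also underwrites well-definedness of $\vsoleq$ as the unique fixed point, which I would note in passing.)

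For the case $\ctransition < 1$ I would avoid the operator constant entirely: the convexity assumption makes $\Pi_{\Vset,\dsol}$ nonexpansive, so $\| \vsoleq - u \|_{\dsol} = \| \Pi_{\Vset,\dsol}\Bo\vsoleq - \Pi_{\Vset,\dsol}\Bo v_\pi \|_{\dsol} \le \| \Bo\vsoleq - \Bo v_\pi \|_{\dsol} = \| \Ppigamma(\vsoleq - v_\pi) \|_{\dsol} \le \ctransition \| \vsoleq - v_\pi \|_{\dsol}$, using that the reward terms cancel in $\Bo\vsoleq - \Bo v_\pi$. Substituting into the triangle inequality and solving the scalar inequality $\| \vsoleq - v_\pi \|_{\dsol} \le \ctransition \| \vsoleq - v_\pi \|_{\dsol} + \| u - v_\pi \|_{\dsol}$ yields the factor $\tfrac{1}{1 - \ctransition}$.

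For the case $\ctransition \ge 1$ the bare Bellman operator is no longer a contraction, so I would bring in the operator constant by splitting $\vsoleq - v_\pi = (\vsoleq - u) + (u - v_\pi)$ across the projected operator. The part $\vsoleq - u$ living in $\Vset$ is contracted by $\contset{\Vset}$ via the operator-constant definition, while the approximation-error part $u - v_\pi$ is controlled by nonexpansiveness of $\Pi_{\Vset,\dsol}$ together with $\ctransition$. This gives $\| \vsoleq - u \|_{\dsol} \le \contset{\Vset}\| \vsoleq - u \|_{\dsol} + \ctransition \| u - v_\pi \|_{\dsol}$, hence $\| \vsoleq - u \|_{\dsol} \le \tfrac{\ctransition}{1 - \contset{\Vset}}\| u - v_\pi \|_{\dsol}$ once $\contset{\Vset} < 1$. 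Feeding this back into the triangle inequality produces the factor $\tfrac{1 + \ctransition - \contset{\Vset}}{1 - \contset{\Vset}}$, which I would loosen to the stated $\tfrac{1 + \ctransition}{1 - \contset{\Vset}}$ using $\contset{\Vset} > 0$.

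The main obstacle is making the split in the second case rigorous when $\Vset$ is only convex rather than a linear subspace: then $\Pi_{\Vset,\dsol}$ is nonlinear and $\Pi_{\Vset,\dsol}\Bo\vsoleq - \Pi_{\Vset,\dsol}\Bo v_\pi$ does not distribute over the decomposition of $\vsoleq - v_\pi$ as it does in the linear $\mspbe$ analysis. The convexity assumption is precisely what rescues this, since the metric projection onto a closed convex set is nonexpansive: I would insert the intermediate point $\Pi_{\Vset,\dsol}\Bo u$ and bound $\| \Pi_{\Vset,\dsol}\Bo\vsoleq - \Pi_{\Vset,\dsol}\Bo u \|_{\dsol}$ by $\contset{\Vset}\| \vsoleq - u \|_{\dsol}$ (the operator constant on the two in-$\Vset$ points) and $\| \Pi_{\Vset,\dsol}\Bo u - \Pi_{\Vset,\dsol}\Bo v_\pi \|_{\dsol}$ by $\ctransition\| u - v_\pi \|_{\dsol}$ (nonexpansiveness plus $\| \Ppigamma \|_{\dsol} = \ctransition$). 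I would also verify that the operator constant, stated via $\| \Pi_{\Vset,\dsol}\Ppigamma(\vgen_1 - \vgen_2) \|_{\dsol}$, genuinely controls these composed-operator increments—an identity in the linear case whose extension to the nonlinear setting is exactly where the convexity/nonexpansiveness structure must do the work.
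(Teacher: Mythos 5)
Your proposal is correct and follows essentially the same route as the paper: the same triangle inequality anchored at $\Pi_{\Vset,\dsol}v_\pi$, the same contraction argument via $\ctransition$ in the first case, and the same insertion of the intermediate point involving $\Pi_{\Vset,\dsol}v_\pi$ together with the operator constant $\contset{\Vset}$ and nonexpansiveness of the projection in the second case. The only differences are cosmetic---you solve the resulting scalar inequality for $\| \vsoleq - \Pi_{\Vset,\dsol}v_\pi \|_{\dsol}$ first (obtaining the marginally tighter constant $\tfrac{1+\ctransition-\contset{\Vset}}{1-\contset{\Vset}}$ before loosening), and you are more explicit than the paper about why the operator-constant bound applies to differences of projected images when $\Pi_{\Vset,\dsol}$ is nonlinear.
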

\begin{proof}
\textbf{Case 1}: $\ctransition < 1$.
This follows using the standard strategy in \citet[Lemma 6.9]{bertsekas1996neurodynamic} or \citet[Theorem 1]{white2017unifying}. Notice first that $\vsoleq = \Pi_{\Vset,\dsol} \Bo \vsoleq$, since it is a solution to the $\mspbe$. Also note that $v_\pi = \Bo v_\pi$.
\begin{align*}
    \| \vsoleq- v_\pi \|_{\dsol}
    &\leq \| \vsoleq- \Pi_{\Vset,\dsol} v_\pi \|_{\dsol} + \| \Pi_{\Vset,\dsol} v_\pi - v_\pi \|_{\dsol}  \\
     &= \| \Pi_{\Vset,\dsol} \Bo \vsoleq - \Pi_{\Vset,\dsol}\Bo v_\pi \|_{\dsol} + \| \Pi_{\Vset,\dsol} v_\pi - v_\pi \|_{\dsol}  \\
          &= \| \Pi_{\Vset,\dsol} \Bo (\vsoleq -  v_\pi) \|_{\dsol} + \| \Pi_{\Vset,\dsol} v_\pi - v_\pi \|_{\dsol}  \\
             &\leq \| \Bo (\vsoleq  -  v_\pi) \|_{\dsol} + \| \Pi_{\Vset,\dsol} v_\pi - v_\pi \|_{\dsol}  \\
             &= \| \Ppigamma (\vsoleq  -  v_\pi) \|_{\dsol} + \| \Pi_{\Vset,\dsol} v_\pi - v_\pi \|_{\dsol}  \\
             &\leq \| \Ppigamma \|_{\dsol} \| \vsoleq  -  v_\pi \|_{\dsol} + \| \Pi_{\Vset,\dsol} v_\pi - v_\pi \|_{\dsol}  \\
             &\leq \ctransition \| \vsoleq  -  v_\pi \|_{\dsol} + \| \Pi_{\Vset,\dsol} v_\pi - v_\pi \|_{\dsol}  \\
 \implies  (1 - \ctransition)  &\| \vsoleq- v_\pi \|_{\dsol} \leq \| \Pi_{\Vset,\dsol}v_\pi - v_\pi \|_{\dsol}.
\end{align*}
Note that in the above $\Bo (\vsoleq  -  v_\pi) = \Bo \vsoleq  -  \Bo v_\pi = \Ppigamma \vsoleq  -  \Ppigamma v_\pi$ because the reward term in the Bellman operator cancels in the subtraction.

\textbf{Case 2}: $\ctransition \ge 1$.
For this case, we use the approach in \citet[Theorem 2]{kolter2011fixed}. We cannot use the above approach, since $(1 - \ctransition)$ is negative. We start again by adding and subtracting $\Pi_{\Vset,\dsol} v_\pi$, but bound the first term differently.
\begin{align*}
    \| \vsoleq- \Pi_{\Vset,\dsol} v_\pi \|_{\dsol}
     &= \| \Pi_{\Vset,\dsol} \Bo \vsoleq - \Pi_{\Vset,\dsol}\Bo v_\pi \|_{\dsol} \\
          &= \| \Pi_{\Vset,\dsol} \Ppigamma \vsoleq -   \Pi_{\Vset,\dsol} \Ppigamma  v_\pi \|_{\dsol} \\
             \leq &\| \Pi_{\Vset,\dsol} \Ppigamma \vsoleq - \Pi_{\Vset,\dsol} \Ppigamma \Pi_{\Vset,\dsol}v_\pi \|_{\dsol} + \| \Pi_{\Vset,\dsol} \Ppigamma \Pi_{\Vset,\dsol}v_\pi - \Pi_{\Vset,\dsol}\Ppigamma v_\pi \|_{\dsol}
             .
\end{align*}
By assumption $\contset{\Vset} < 1$ and both $\vsoleq \in \Vset$ and $\Pi_{\Vset,\dsol}v_\pi  \in \Vset$. Therefore, for the first term
\begin{align*}
 \| \Pi_{\Vset,\dsol} \Ppigamma \vsoleq - \Pi_{\Vset,\dsol} \Ppigamma \Pi_{\Vset,\dsol}v_\pi \|_{\dsol}
 &\le  \contset{\Vset} \| \vsoleq - \Pi_{\Vset,\dsol}v_\pi \|_{\dsol}
 \le  \contset{\Vset} \| \vsoleq - v_\pi \|_{\dsol}
 .
\end{align*}
For the second term, we have that
\begin{align*}
\| \Pi_{\Vset,\dsol} \Ppigamma \Pi_{\Vset,\dsol}v_\pi - \Pi_{\Vset,\dsol} \Ppigamma v_\pi \|_\dsol
&\le \|  \Ppigamma \Pi_{\Vset,\dsol}v_\pi - \Ppigamma v_\pi \|_\dsol
\le \ctransition \|\Pi_{\Vset,\dsol}v_\pi - v_\pi \|_\dsol
.
\end{align*}
Putting this all together, we have
\begin{align*}
    \| \vsoleq- v_\pi \|_{\dsol}
    &\leq \| \vsoleq- \Pi_{\Vset,\dsol} v_\pi \|_{\dsol} + \| \Pi_{\Vset,\dsol} v_\pi - v_\pi \|_{\dsol}  \\
    &\leq \contset{\Vset} \| \vsoleq - v_\pi \|_{\dsol}  + (1+ \ctransition) \| \Pi_{\Vset,\dsol} v_\pi - v_\pi \|_{\dsol} \\
    \implies  (1- \contset{\Vset}) &\| \vsoleq- v_\pi \|_{\dsol}
    \leq  (1+ \ctransition) \| \Pi_{\Vset,\dsol} v_\pi - v_\pi \|_{\dsol}
    .
\end{align*}
\proofspace
\end{proof}
Finally, we can get the desired result in Equation~\eqref{eq_generic_bound}, by considering a different state weighting $\dobj$ for the evaluation versus the state weighting $\dsol$ we use in the $\mspbe$.
For example, in the off-policy setting, $\dobj$ could correspond to $d_\pi$ but we learn under $\dsol = d_b$ or the emphatic weighting $\dsol = m$.
It is appropriate to separate the evaluation and solution weightings, as it may not be feasible to use $\dsol = \dobj$ and further it is even possible we can improve our solution by carefully selecting $\dsol$ different from $\dobj$. Some insights about when this choice can be beneficial has been provided in seminal work on policy gradient methods \citep{kakade2002approximately}. We do not explore when we can obtain such improvements in this work. Our bounds assume worst case differences, as per the next definition, and so the bound on solution quality is bigger when $\dsol \neq \dobj$.
\begin{definition}[State Weighting Mismatch]
Define the state weighting mismatch between the desired weighting $\dobj$ and the weighting used in the solution $\dsol$ as
\begin{equation}
\kappa(\dobj, \dsol) \defeq \max_{s \in \States} \frac{\dobj(s)}{\dsol(s)}
.
\end{equation}
\end{definition}

\begin{corollary} Again assuming $\contset{\Vset} < 1$,
we can further bound the error under a different weighting $\dobj$
\begin{align*}
    \| \vsoleq- v_\pi \|_{\dobj} &\le \sqrt{\kappa(\dobj, \dsol)} C(\dsol, \Vset)\| \Pi_{\Vset, \dsol} v_\pi - v_\pi \|_{\dsol}
    .
\end{align*}
\end{corollary}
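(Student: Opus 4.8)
The plan is to reduce the corollary to the theorem via a single change-of-measure inequality relating the two weighted norms, so that no new fixed-point or contraction argument is needed. First I would establish the pointwise-to-norm comparison: for any vector $u \in \Re^{|\States|}$,
\[
\| u \|_{\dobj}^2 = \sum_{s \in \States} \dobj(s)\, u(s)^2 = \sum_{s \in \States} \frac{\dobj(s)}{\dsol(s)}\, \dsol(s)\, u(s)^2 \le \kappa(\dobj, \dsol) \sum_{s \in \States} \dsol(s)\, u(s)^2 = \kappa(\dobj, \dsol)\, \| u \|_{\dsol}^2,
\]
where the inequality replaces each ratio $\dobj(s)/\dsol(s)$ by its maximum $\kappa(\dobj, \dsol)$ and then factors that constant out of the sum. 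Taking square roots gives the norm-comparison bound $\| u \|_{\dobj} \le \sqrt{\kappa(\dobj, \dsol)}\, \| u \|_{\dsol}$.

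Next I would instantiate this with $u = \vsoleq - v_\pi$ to obtain $\| \vsoleq - v_\pi \|_{\dobj} \le \sqrt{\kappa(\dobj, \dsol)}\, \| \vsoleq - v_\pi \|_{\dsol}$. Since the corollary retains the hypothesis $\contset{\Vset} < 1$, the theorem applies verbatim to the $\dsol$-weighted term, yielding $\| \vsoleq - v_\pi \|_{\dsol} \le C(\dsol, \Vset)\, \| \Pi_{\Vset, \dsol} v_\pi - v_\pi \|_{\dsol}$. Chaining these two inequalities gives exactly the stated bound.

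There is no genuine obstacle here: all of the contraction and fixed-point work is already packaged inside the theorem, and the corollary is purely a norm-equivalence consequence layered on top of it. The only point requiring a word of care—rather than a real difficulty—is that $\kappa(\dobj, \dsol)$ must be finite, which holds precisely when $\dsol(s) > 0$ wherever $\dobj(s) > 0$ so that the division above is well defined; this is implicit in treating $\dsol$ as a valid solution weighting. Thus the entire argument is a two-line composition and I would not expect any step to resist a routine verification.
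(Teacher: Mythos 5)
Your proposal is correct and follows essentially the same route as the paper: a pointwise change-of-measure bound giving $\| u \|_{\dobj} \le \sqrt{\kappa(\dobj, \dsol)}\, \| u \|_{\dsol}$, applied to $u = \vsoleq - v_\pi$ and chained with the theorem's bound on $\| \vsoleq - v_\pi \|_{\dsol}$. The remark about finiteness of $\kappa(\dobj, \dsol)$ is a reasonable point of care that the paper leaves implicit.
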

\begin{proof}
We use the state mismatch
$\kappa(\dobj, \dsol) = \max_{s \in \States} \frac{\dobj(s)}{\dsol(s)}$
and get
\begin{align*}
    \| \vsoleq- v_\pi \|^2_{\dobj}
    &= \sum_{s \in \States} \dobj(s) (\vsoleq (s) - v_\pi(s) )^2
    = \sum_{s \in \States} \dobj(s) \frac{\dsol(s)}{\dsol(s)} (\vsoleq- v_\pi )^2\\
    &\le \kappa(\dobj, \dsol) \sum_{s \in \States} \dsol(s) (\vsoleq- v_\pi )^2
    = \kappa(\dobj, \dsol) \| \vsoleq- v_\pi \|^2_{\dsol}
    .
\end{align*}
Combined with the inequality in Equation~\eqref{eq_msve_bound},
we obtain
\begin{align*}
    \| \vsoleq- v_\pi \|_{\dobj} &\le \sqrt{\kappa(\dobj, \dsol)} C(\dsol, \Vset) \| \Pi_{\Vset, \dsol} v_\pi - v_\pi \|_{\dsol}
\end{align*}
\proofspace
\end{proof}
This corollary gives us the desired result, specifically with $C(\dobj, \dsol, \Vset) = \sqrt{\kappa(\dobj, \dsol)} C(\dsol, \Vset)$.

The next question is when we can expect $\contset{\Vset} < 1$ or $\ctransition < 1$. More is known about when $\ctransition < 1$, because this is the condition used to prove that on-policy TD and emphatic TD are convergent:
under $d = d_\pi$ and $d = m$, we know that $s_d < 1$ \citep[Theorem 1]{white2017unifying}. It is clear that other weights $d$ will give this result, but we also know certain off-policy settings where $s_d > 1$, such as in Baird's counterexample. Even if $s_d \ge 1$, we can still ensure the projected Bellman operator is a contraction by either limiting the set of weightings $\dsol$ or $\Vset$ so that $\contset{\Vset} < 1$. Of course, if $\dsol = d_\pi$ or $m$, then automatically $\contset{\Vset} < 1$ because for all $\dsol$ and $\Vset$, $\contset{\Vset} \le \ctransition$. We provide some characterization of these constants in the next proposition, including both known results and a few novel ones.
\begin{proposition}\label{prop_statements}
The following statements are true about $\ctransition$.
\begin{enumerate}
\item If $\dsol = d_\pi$ or $\dsol = m$, then $\ctransition < 1$.
\item If $\dsol = d_\pi$ and the discount is a constant $\gamma_c < 1$ (a continuing problem), then $\ctransition  = \gamma_c$.
\item If $\dsol = d_\pi$ and for some constant $\gamma_c$, $\gamma(s,a,s') \le \gamma_c$ for all $(s,a,s')$, then $\ctransition  \le \gamma_c$.
\item If $d_\pi(s) > 0$ iff $\dsol(s) > 0$, then $\ctransition \le s_{d_\pi} \sqrt{\kappa(\dsol,d_\pi)\kappa(d_\pi,\dsol)}$.
\end{enumerate}
\end{proposition}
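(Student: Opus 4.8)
The plan is to work throughout with the weighted inner-product norm $\|v\|_\dsol^2 = \sum_{s} \dsol(s) v(s)^2$, so that $\ctransition = \|\Ppigamma\|_\dsol = \sup_{v \neq \vec0} \|\Ppigamma v\|_\dsol / \|v\|_\dsol$ is the induced operator norm, and reduce every statement to bounding $\|\Ppigamma v\|_\dsol$ for an arbitrary $v$. The two recurring tools are (i) Jensen's inequality applied row-wise, exploiting that each row of $\Ppigamma$ is a nonnegative vector with sum at most one, and (ii) the stationarity identity $\sum_s d_\pi(s) P_\pi(s,s') = d_\pi(s')$, i.e.\ $d_\pi^\top P_\pi = d_\pi^\top$. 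Statement~1 I would simply attribute to \citet[Theorem 1]{white2017unifying}, which already establishes $\ctransition < 1$ for $\dsol = d_\pi$ and $\dsol = m$; for $\dsol = d_\pi$ with a uniformly bounded discount below one it also drops out of Statements~2--3 below.

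For Statement~3 I would first note that each row of $\Ppigamma$ has sum $r(s) \defeq \sum_{s'}\Ppigamma(s,s') = \sum_a \pi(a|s)\sum_{s'} P(s'|s,a)\gamma(s,a,s') \le \gamma_c$, since $\gamma \le \gamma_c$. Normalizing each nonzero row to a probability vector and applying Jensen to $x \mapsto x^2$ gives, for each $s$, $(\Ppigamma v)(s)^2 \le r(s)\sum_{s'}\Ppigamma(s,s') v(s')^2$. Weighting by $d_\pi(s)$, summing, and using $r(s) \le \gamma_c$ yields $\|\Ppigamma v\|_{d_\pi}^2 \le \gamma_c \sum_{s'} v(s')^2 \bigl(\sum_s d_\pi(s)\Ppigamma(s,s')\bigr)$. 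The inner column mass is then controlled by a second, independent factor of $\gamma_c$ via stationarity: $\sum_s d_\pi(s)\Ppigamma(s,s') \le \gamma_c \sum_s d_\pi(s) P_\pi(s,s') = \gamma_c\, d_\pi(s')$. Combining gives $\|\Ppigamma v\|_{d_\pi}^2 \le \gamma_c^2 \|v\|_{d_\pi}^2$, hence $\ctransition \le \gamma_c$. Statement~2 is the tight version of this: with $\gamma \equiv \gamma_c$ both inequalities become equalities, and the constant vector $\mathbf{1}$ (for which $\Ppigamma \mathbf{1} = \gamma_c P_\pi \mathbf{1} = \gamma_c \mathbf{1}$) attains the bound, giving $\ctransition = \gamma_c$; equivalently $\|P_\pi\|_{d_\pi}=1$ and $\ctransition = \gamma_c \|P_\pi\|_{d_\pi}$.

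For Statement~4 I would sandwich the $\dsol$-norm between two $d_\pi$-norms using the weighting ratios. The support condition $d_\pi(s) > 0 \iff \dsol(s) > 0$ guarantees the relevant ratios are finite, so $\|\Ppigamma v\|_\dsol^2 \le \kappa(\dsol,d_\pi)\|\Ppigamma v\|_{d_\pi}^2$ and $\|v\|_{d_\pi}^2 \le \kappa(d_\pi,\dsol)\|v\|_\dsol^2$. Inserting the operator-norm bound $\|\Ppigamma v\|_{d_\pi} \le s_{d_\pi}\|v\|_{d_\pi}$ in between chains these into $\|\Ppigamma v\|_\dsol^2 \le \kappa(\dsol,d_\pi)\, s_{d_\pi}^2\, \kappa(d_\pi,\dsol)\, \|v\|_\dsol^2$, and taking square roots gives $\ctransition \le s_{d_\pi}\sqrt{\kappa(\dsol,d_\pi)\kappa(d_\pi,\dsol)}$.

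The main obstacle is Statement~3: unlike the constant-discount case, the per-transition discount does not factor out of $\Ppigamma$, so I cannot write $\Ppigamma = \gamma_c P_\pi$ and reuse the $\|P_\pi\|_{d_\pi}=1$ computation. The care needed is to extract one factor of $\gamma_c$ from the row sums through the Jensen step and a second, independent factor from the column mass against $d_\pi$ through stationarity, while applying Jensen only to genuine probability vectors (normalizing away rows with $r(s)=0$). A minor secondary point is confirming that $\|\cdot\|_{d_\pi}$ and $\|\cdot\|_\dsol$ are genuine norms on the common support so that the operator-norm step in Statement~4 is legitimate; the iff hypothesis is precisely what ensures this.
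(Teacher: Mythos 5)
Your proposal is correct and follows essentially the same route as the paper: Case 1 by citation, Case 2 via $\Ppigamma = \gamma_c P_\pi$ and $\|P_\pi\|_{d_\pi}=1$, Case 3 by reducing to a substochastic operator dominated by $P_\pi$ (your row-wise Jensen plus stationarity argument is just the fully written-out proof of the paper's claim that $\|\Ppigammatilde\|_{d_\pi}\le 1$), and Case 4 by sandwiching the $\dsol$-norm between $d_\pi$-norms, which is the vector-level version of the paper's $D^{\powhalf}D_\pi^{\invhalf}$ matrix factorization. No gaps.
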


The proof for this result, as well as the remaining proofs, are in Appendix~\ref{app_proofs}. The previous proofs were included in the main body, as they provide useful intuition.

There is an important relationship between quality of the solution and convergence of TD, given by $\contset{\Vset}$. If $\contset{\Vset} < 1$, then the projected Bellman operator is a contraction. Correspondingly, the update underlying TD converges.
Off-policy TD, therefore, can converge, as long as $\dsol$ or $\Vset$ are chosen such that $\contset{\Vset} < 1$. Additionally, notice that we only have bounds on the solution quality under this same criteria. This suggests the following pessimistic conclusion: gradient TD methods, even if they ensure convergence, may not ensure convergence to a quality solution. Therefore, we should eschew them altogether and instead should focus on controlling the weighting $\dsol$ or the function space $\Vset$.

However, this conclusion only arises due to the looseness of the bound. The theorem statement requires $\contset{\Vset} < 1$ for all of $\Vset$. The proof itself only requires this contraction property for $\vsoleq$ and $\Pi_{\Vset,\dsol}v_\pi$. The projected operator could be an expansion during the optimization and the final quality of the solution under gradient descent will still be good as long as $\contset{\Vsetsubopt} < 1$ for $\Vsetsubopt \subset \Vset$ a small subset of $\Vset$ that contains $\vsoleq$ and $\Pi_{\Vset,\dsol}v_\pi$.

\begin{corollary} Assume there is a convex subset $\Vsetsubopt \subset \Vset$ where  $\vsoleq, \Pi_{\Vset,\dsol}v_\pi \in \Vsetsubopt$ and $\contset{\Vsetsubopt} < 1$. Then
\begin{equation*}
    \| \vsoleq- v_\pi \|_{\dsol} \leq C(\dsol, \Vsetsubopt) \| \Pi_{\Vset,\dsol}v_\pi - v_\pi \|_{\dsol}
    .
\end{equation*}
\end{corollary}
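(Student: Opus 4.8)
The plan is to recognize this corollary as a purely local refinement of the preceding theorem: nothing in that proof actually requires the projected Bellman operator to contract on all of $\Vset$, but only on the two specific functions $\vsoleq$ and $\Pi_{\Vset,\dsol}v_\pi$. So I would reopen the theorem's argument and track exactly where the operator constant $\contset{\Vset}$ is invoked, then replace it by $\contset{\Vsetsubopt}$ at the one step where the functions involved both lie in $\Vsetsubopt$.

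First I would dispose of the case $\ctransition < 1$. There the constant is $C(\dsol, \cdot) = \tfrac{1}{1-\ctransition}$, which never references the operator constant, and Case~1 of the theorem uses only $\ctransition = \| \Ppigamma \|_\dsol$ together with the nonexpansiveness of $\Pi_{\Vset,\dsol}$ (from convexity of $\Vset$, Assumption~1). Hence this case carries over verbatim, with $C(\dsol, \Vsetsubopt) = C(\dsol, \Vset)$.

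The substantive case is $\ctransition \ge 1$. Following Case~2, I would use $\vsoleq = \Pi_{\Vset,\dsol}\Bo\vsoleq$ and $\Pi_{\Vset,\dsol}v_\pi = \Pi_{\Vset,\dsol}\Bo v_\pi$ to write $\| \vsoleq - \Pi_{\Vset,\dsol}v_\pi \|_\dsol = \| \Pi_{\Vset,\dsol}\Ppigamma \vsoleq - \Pi_{\Vset,\dsol}\Ppigamma v_\pi \|_\dsol$, then split this by adding and subtracting $\Pi_{\Vset,\dsol}\Ppigamma\Pi_{\Vset,\dsol}v_\pi$ into a first term $\| \Pi_{\Vset,\dsol}\Ppigamma \vsoleq - \Pi_{\Vset,\dsol}\Ppigamma\Pi_{\Vset,\dsol}v_\pi \|_\dsol$ and a second term $\| \Pi_{\Vset,\dsol}\Ppigamma\Pi_{\Vset,\dsol}v_\pi - \Pi_{\Vset,\dsol}\Ppigamma v_\pi \|_\dsol$. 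The operator constant enters only in bounding the first term, applied with $\vgen_1 = \vsoleq$ and $\vgen_2 = \Pi_{\Vset,\dsol}v_\pi$. Since both functions lie in $\Vsetsubopt$ by hypothesis, the defining inequality of $\contset{\Vsetsubopt}$ applies directly and gives $\contset{\Vsetsubopt}\| \vsoleq - \Pi_{\Vset,\dsol}v_\pi \|_\dsol \le \contset{\Vsetsubopt}\| \vsoleq - v_\pi \|_\dsol$, the last step being the obtuse-angle property of projection onto the convex set $\Vset$. The second term is still bounded by $\ctransition \| \Pi_{\Vset,\dsol}v_\pi - v_\pi \|_\dsol$ using only $\| \Ppigamma \|_\dsol$ and nonexpansiveness of $\Pi_{\Vset,\dsol}$, neither of which refers to $\Vsetsubopt$. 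Rearranging $(1-\contset{\Vsetsubopt})\| \vsoleq - v_\pi \|_\dsol \le (1+\ctransition)\| \Pi_{\Vset,\dsol}v_\pi - v_\pi \|_\dsol$ then yields the claim with $C(\dsol, \Vsetsubopt) = \tfrac{1+\ctransition}{1-\contset{\Vsetsubopt}}$.

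The main thing to verify --- and the only place the argument could break --- is the claim that the contraction property is genuinely localized to $\vsoleq$ and $\Pi_{\Vset,\dsol}v_\pi$. I would re-examine every remaining inequality in Case~2 to confirm that each relies either on $\| \Ppigamma \|_\dsol = \ctransition$ or on projection onto the full set $\Vset$, both of which are unaffected by passing to the subset, so that the global hypothesis $\contset{\Vset} < 1$ can indeed be dropped. Convexity of $\Vsetsubopt$ is assumed so that $\contset{\Vsetsubopt}$ is a sensible quantity, but since the proof only ever evaluates that inequality at the two endpoints, no stronger structure on $\Vsetsubopt$ is needed.
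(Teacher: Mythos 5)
Your proposal is correct and is exactly the argument the paper intends: the paper offers no separate proof of this corollary, instead remarking just before it that the theorem's proof only invokes the contraction property at the two points $\vsoleq$ and $\Pi_{\Vset,\dsol}v_\pi$, which is precisely the localization you carry out. Your tracing of where $\contset{\Vset}$ enters (only in bounding the first term of Case~2, with $\vgen_1 = \vsoleq$ and $\vgen_2 = \Pi_{\Vset,\dsol}v_\pi$) and your observation that Case~1 and the remaining inequalities depend only on $\ctransition$ and the nonexpansiveness of $\Pi_{\Vset,\dsol}$ are both accurate.
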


An example of a problem that satisfies $\contset{\Vsetsubopt} < 1$ but not $\contset{\Vset} < 1$ is Baird's counterexample \citep{baird1995residual}. In fact, $v_\pi \in \Vset$, and $\| \vsoleq- v_\pi \|_{\dsol} = 0$. Rather, the initialization is started in a part of the space where the projected Bellman operator is an expansion, causing off-policy TD to diverge.

On the other hand, Kolter's counterexample is one where no such convex subset exists.
The distribution $\dsol$ can be chosen to make $\Amat$ more and more singular,
correspondingly causing the TD solution $\vecw = \Amat^\inv \vecb$ to grow.
The distance between the solution of the $\mspbe$ and the optimal value function can be arbitrarily large and no such convex set $\Vsetsubopt$ can exist.

Overall, these results suggest that we should combine all our of techniques to improve convergence and solution quality. The choice of $\Vset$ and $\dsol$ should be considered, particularly to improve solution quality. If our function class contains the true value function $v_\pi$, then the approximation error $\| \Pi_{\dsol}v_\pi - v_\pi \|_{\dsol} = 0$ and the $\msve$ is zero regardless of the operator constant. Even in this ideal scenario, TD methods can diverge. Gradient methods should be used to avoid convergence issues, because the projected Bellman operator may not be a contraction in all parts of the space. Gradient methods can help us reach a part of the space $\Vsetsubopt$ where $\contset{\Vsetsubopt} < 1$, and so provide an upper bound on the quality of the solution. An exciting open question remains as how to select $\Vset$ and $\dsol$, to obtain $\contset{\Vsetsubopt} < 1$.

\newcommand{\ObMat}{H}

\subsection{Upper Bound on $\msve$ when $\Hset  \supseteq \Vset $}\label{sec:upper-bound-msve-supset}

We next consider more general projections, i.e., for any function space $\Hset  \supseteq \Vset $, that includes the $\msbe$ and $\mspbe$ as special cases. We show two different approaches to upper bounding the $\msve$: re-expressing the objective using oblique projections and characterizing the difference to the $\msbe$, for which there is a straightforward bound on the $\msve$.

\subsubsection{The $\mspbe$ as an Oblique Projection}

The goal of this section is to rewrite the $\mspbe$ as an oblique projection, which then gives a generic bound on the value error. The bound relies on the norm of the oblique projection, which is not easily computable. But, it provides a potential direction for special cases where the norm of this projection might be able to be simplified.

We start by re-expressing the generalized $\mspbe$ as a weighted $\msve$, using the same approach as \citet{schoknecht2003optimality} and \citet{scherrer2010should}. Notice first that $v_\pi = (\eye - \Ppigamma)^\inv r_\pi$. Then the generalized $\mspbe$ for any $v_\vecw \in \Vset$, written in projection form, is
\begin{align*}
    &\| \Pi_{\Hset , d} (T v_{\vecw} - v_{\vecw}) \|^2_{\dsol}
    = \| \Pi_{\Hset , d} (r_\pi + \Ppigamma v_{\vecw} - v_{\vecw}) \|^2_{\dsol} \\
&= \| \Pi_{\Hset , d} (r_\pi - (\eye - \Ppigamma) v_{\vecw}) \|^2_{\dsol} \\
&= \| \Pi_{\Hset , d} [(\eye - \Ppigamma) v_\pi - (\eye - \Ppigamma) v_{\vecw}] \|^2_{\dsol} \hspace{3.0cm} \triangleright \ r_\pi = (\eye - \Ppigamma) v_\pi \\
&= \| \Pi_{\Hset , d} (\eye - \Ppigamma) (v_\pi - v_{\vecw}) \|^2_{\dsol} \\
&= \| v_\pi - v_{\vecw} \|^2_{\ObMat}  \hspace{5.0cm} \triangleright \ObMat \defeq (\eye - \Ppigamma)^\top \Pi_{\Hset , d}^\top D  \Pi_{\Hset , d} (\eye - \Ppigamma)
\end{align*}
Minimizing the generalized $\mspbe$ therefore corresponds to minimizing the $\msve$ with a reweighting over states that may no longer be diagonal, as $\ObMat$ is not a diagonal matrix. In fact, we can see that the solution to the generalized $\mspbe$ is a projection of $v_\pi$ onto set $\Vset$ under weighting $\ObMat$, namely $v = \Pi_{\Vset,\ObMat} v_\pi$. A projection under such a non-diagonal weighting is called an oblique projection.
Using this form, we can obtain an upper bound using a similar approach to \citep[Proposition 3]{scherrer2010should}, with proof in Appendix~\ref{app_proofs}.
\begin{theorem}\label{thm_oblique}
If $\Hset \supseteq \Vset$, then the solution $\vsol$ to the generalized $\mspbe$ satisfies
\begin{equation}
\| v_\pi - \vsol \|_{\dsol} \le \| \Pi_{\Vset,\ObMat}  \|_\dsol \| v_\pi -  \Pi_{\Vset, \dsol} v_\pi \|_{\dsol}
.
\end{equation}
\end{theorem}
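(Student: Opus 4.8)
The plan is to build directly on the reformulation established immediately before the statement, namely that the minimizer of the generalized $\mspbe$ is the oblique projection $\vsol = \Pi_{\Vset,\ObMat} v_\pi$ of $v_\pi$ onto $\Vset$ under the (non-diagonal) weighting $\ObMat$. First I would record that $\Pi_{\Vset,\ObMat}$ is a genuine idempotent projection with range $\Vset$: this needs $\ObMat$ to be positive definite on $\Vset$, which follows because $\eye - \Ppigamma$ is invertible (the very reason $v_\pi = (\eye - \Ppigamma)^\inv r_\pi$) and because $\Hset \supseteq \Vset$ ensures $\Pi_{\Hset,d}$ fixes $\Vset$, so $\ObMat$ does not collapse $\Vset$. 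Both $\Pi_{\Vset,\ObMat}$ and the $\dsol$-orthogonal projection $\Pi_{\Vset,\dsol}$ then act as the identity on every element of $\Vset$; in particular $(\eye - \Pi_{\Vset,\ObMat}) w = \zerovec$ for all $w \in \Vset$.

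The key step is the standard projection identity. Since $\Pi_{\Vset,\dsol} v_\pi \in \Vset$ and $\eye - \Pi_{\Vset,\ObMat}$ annihilates $\Vset$, I can insert it for free:
\begin{equation*}
v_\pi - \vsol = (\eye - \Pi_{\Vset,\ObMat}) v_\pi = (\eye - \Pi_{\Vset,\ObMat})\bigl(v_\pi - \Pi_{\Vset,\dsol} v_\pi\bigr).
\end{equation*}
Taking the $\dsol$-norm and bounding by the induced operator norm gives
\begin{equation*}
\| v_\pi - \vsol \|_{\dsol} \le \| \eye - \Pi_{\Vset,\ObMat} \|_{\dsol}\, \| v_\pi - \Pi_{\Vset,\dsol} v_\pi \|_{\dsol}.
\end{equation*}

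The final step, which I expect to be the main obstacle, is replacing $\| \eye - \Pi_{\Vset,\ObMat} \|_{\dsol}$ by $\| \Pi_{\Vset,\ObMat} \|_{\dsol}$ to match the stated constant. This rests on the classical fact that for any nontrivial idempotent $P$ (with $P \neq 0$ and $P \neq \eye$) acting on a finite-dimensional inner-product space, the induced operator norm satisfies $\| \eye - P \| = \| P \|$; the relevant inner product here is the $\dsol$-weighted one, which is a genuine inner product since $D = \diag(d)$ is positive definite. I would either cite this identity (as in the analogous step of \citep[Proposition 3]{scherrer2010should}) or give the short proof via the minimal-angle characterization of $\| P \|$, and separately dispose of the degenerate cases: $\Pi_{\Vset,\ObMat} = \eye$ forces $v_\pi \in \Vset$, so both sides of the claimed bound vanish, while $\Pi_{\Vset,\ObMat} = 0$ cannot arise for a nonzero projection onto a nonempty $\Vset$. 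Combining this identity with the previous display yields $\| v_\pi - \vsol \|_{\dsol} \le \| \Pi_{\Vset,\ObMat} \|_{\dsol} \| v_\pi - \Pi_{\Vset,\dsol} v_\pi \|_{\dsol}$, as required.
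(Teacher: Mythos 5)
Your proposal is correct and follows essentially the same route as the paper's proof: both rest on the identity $(\eye - \Pi_{\Vset,\ObMat})(\eye - \Pi_{\Vset,\dsol}) = \eye - \Pi_{\Vset,\ObMat}$ (your ``insert for free'' step, the paper's $\Pi_{\Vset,\ObMat}\Pi_{\Vset,\dsol} = \Pi_{\Vset,\dsol}$ computation), followed by the operator-norm bound and the classical fact $\|\eye - P\|_{\dsol} = \|P\|_{\dsol}$ for a non-trivial idempotent, with the same dispatch of the trivial case $\Pi_{\Vset,\ObMat} = \eye$. The only addition is your explicit check that $\ObMat$ is positive definite on $\Vset$ so that the oblique projection is well defined, which the paper leaves implicit.
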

We can next extend a previous result, which expressed this projection under linear function approximation for both the TD fixed point and the solution to the $\msbe$. We can now more generally express the oblique projection for  interim $\Hset$, with proof in Appendix \ref{app_proofs}.
\begin{corollary}\label{cor_oblique_sol}
Assume $\Vset$ is the space of linear functions with features $\vecx$ and $\Hset$ the space of linear functions with features $\phivec$, with $\Vset \subseteq \Hset$. Then the solution to the $\mspbe$ is
\begin{equation}
\vsol = \Xmat \wsol \ \ \ \text{ for } \wsol = (M^\top  (\eye - \Ppigamma) \Xmat)^\inv M^\top r_\pi
\end{equation}
for $M \defeq \Pi_{\Hset, \dsol}^\top \Dsmat (\eye - \Ppigamma) \Xmat$.
Further, $\vsol = \Pi_{\Vset,\ObMat} v_\pi$ for
\begin{equation}
\Pi_{\Vset,\ObMat} = \Xmat (M^\top (\eye - \Ppigamma) \Xmat)^\inv M^\top (\eye - \Ppigamma)
.
\end{equation}
\end{corollary}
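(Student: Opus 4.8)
The plan is to carry out the entire argument in matrix form under the stated linear parameterizations. I would write value functions as $v_\vecw = \Xmat \vecw$ and let $\Phimat$ denote the feature matrix for $\Hset$ (rows $\phivec(s)^\top$), so that the weighted projection has the closed form $\Pi_{\Hset,\dsol} = \Phimat(\Phimat^\top \Dsmat \Phimat)^\inv \Phimat^\top \Dsmat$ with $\Dsmat = \diag(\dsol)$. The whole proof rests on two standard facts about this weighted projection, which I would state up front: it is idempotent, $\Pi_{\Hset,\dsol}^2 = \Pi_{\Hset,\dsol}$, and it is self-adjoint in the $\Dsmat$-inner product, i.e. $\Pi_{\Hset,\dsol}^\top \Dsmat = \Dsmat \Pi_{\Hset,\dsol}$. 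Combining these gives the identity $\Pi_{\Hset,\dsol}^\top \Dsmat \Pi_{\Hset,\dsol} = \Dsmat \Pi_{\Hset,\dsol}$, which will be the workhorse. Since $\Bo v_\vecw - v_\vecw = r_\pi - (\eye - \Ppigamma)\Xmat\vecw$, the generalized $\mspbe$ is the convex quadratic $\|\Pi_{\Hset,\dsol}(r_\pi - (\eye-\Ppigamma)\Xmat\vecw)\|_{\dsol}^2$ in $\vecw$, so its unique stationary point is the global minimizer $\vsol$.

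First I would recover $\wsol$ from the normal equations. Writing $B \defeq \Pi_{\Hset,\dsol}(\eye-\Ppigamma)\Xmat$, the stationarity condition is $B^\top \Dsmat\, \Pi_{\Hset,\dsol}(r_\pi - (\eye-\Ppigamma)\Xmat\vecw) = 0$. The left factor simplifies because $B^\top \Dsmat \Pi_{\Hset,\dsol} = \Xmat^\top(\eye-\Ppigamma)^\top \Pi_{\Hset,\dsol}^\top \Dsmat \Pi_{\Hset,\dsol} = \Xmat^\top(\eye-\Ppigamma)^\top \Dsmat \Pi_{\Hset,\dsol} = M^\top$, using the workhorse identity and then recognizing the definition $M = \Pi_{\Hset,\dsol}^\top \Dsmat(\eye-\Ppigamma)\Xmat$. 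The normal equation thus collapses to $M^\top(r_\pi - (\eye-\Ppigamma)\Xmat\vecw) = 0$, and solving (assuming $M^\top(\eye-\Ppigamma)\Xmat$ is invertible) yields exactly $\wsol = (M^\top(\eye-\Ppigamma)\Xmat)^\inv M^\top r_\pi$.

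For the oblique-projection form I would substitute $r_\pi = (\eye-\Ppigamma)v_\pi$ into $\vsol = \Xmat\wsol$, immediately giving $\vsol = \Xmat(M^\top(\eye-\Ppigamma)\Xmat)^\inv M^\top(\eye-\Ppigamma)v_\pi$, so the claimed operator is already applied to $v_\pi$. It then remains to confirm this operator is the genuine $\ObMat$-weighted projection onto $\Vset$, namely $\Xmat(\Xmat^\top \ObMat \Xmat)^\inv \Xmat^\top \ObMat$ with $\ObMat = (\eye-\Ppigamma)^\top \Pi_{\Hset,\dsol}^\top \Dsmat \Pi_{\Hset,\dsol}(\eye-\Ppigamma)$ from the preceding subsection. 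The two reductions are $\Xmat^\top \ObMat = M^\top \Pi_{\Hset,\dsol}(\eye-\Ppigamma)$ (absorbing $\Xmat^\top(\eye-\Ppigamma)^\top\Pi_{\Hset,\dsol}^\top \Dsmat = M^\top$ via self-adjointness) followed by $M^\top \Pi_{\Hset,\dsol} = M^\top$ (by idempotency, since $M^\top$ ends in $\Pi_{\Hset,\dsol}$). These give $\Xmat^\top\ObMat = M^\top(\eye-\Ppigamma)$ and hence $\Xmat^\top\ObMat\Xmat = M^\top(\eye-\Ppigamma)\Xmat$, so the weighted projection matches the claimed expression term for term.

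The main obstacle is bookkeeping rather than any conceptual difficulty: one must apply the idempotency and self-adjointness identities in exactly the right places to recognize the compound $M^\top$ inside both the normal equation and the projection formula, and carefully track which copy of $\Pi_{\Hset,\dsol}$ is absorbed at each step. The one genuinely substantive caveat I would flag explicitly is the invertibility of $M^\top(\eye-\Ppigamma)\Xmat$, which the corollary tacitly assumes; this holds generically when $\Xmat$ has full column rank, $\eye-\Ppigamma$ is nonsingular, and $\Vset \subseteq \Hset$ so that $\Pi_{\Hset,\dsol}$ does not annihilate the columns of $(\eye-\Ppigamma)\Xmat$.
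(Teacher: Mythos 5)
Your proposal is correct and follows essentially the same route as the paper's proof: both reduce the objective to a weighted least-squares problem in $\vecw$, use the identity $\Pi_{\Hset,\dsol}^\top \Dsmat \Pi_{\Hset,\dsol} = \Dsmat \Pi_{\Hset,\dsol}$ to collapse the normal equations to $M^\top(r_\pi - (\eye-\Ppigamma)\Xmat\vecw)=0$, and substitute $r_\pi = (\eye-\Ppigamma)v_\pi$ to read off the oblique projection. Your final step explicitly verifying that the resulting operator equals $\Xmat(\Xmat^\top \ObMat \Xmat)^\inv \Xmat^\top \ObMat$ for the weighting $\ObMat$ of the preceding subsection is a small but welcome addition, since the paper only asserts that the expression is an oblique projection onto the span of $\Xmat$.
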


\newcommand{\approxerr}{\mathrm{ApproxError}}
\subsubsection{The Distance between the $\mspbe$ and the $\msbe$}

There is a well known upper bound on the $\msve$, in terms of the $\msbe$. If we can characterize the distance of the $\mspbe$ to the $\msbe$, we could potentially exploit this known result to upper bound the $\msve$ in terms of the $\mspbe$. Such a characterization should be possible because, for a sufficiently large $\Hset$, the $\mspbe$ is equivalent to the $\msbe$. For smaller $\Hset$, we can view the $\mspbe$ as an approximation to the $\msbe$, and so can characterize that approximation error.

For a given $\vgen \in \Vset$, let $h_\vgen^*(s) = \mathbb{E}[\delta(\vgen) | S = s]$ and define
\begin{equation*}
\approxerr(\Hset,\vgen) \defeq \min_{h \in \Hset} \| h_\vgen^* - h \|_{\dsol}
\end{equation*}
with worst-case approximation error
\begin{equation}
\approxerr(\Hset) \defeq \max_{\vgen \in \Vset} \approxerr(\Hset,\vgen)
\end{equation}
where we overload notation because the distinction is clear from the arguments given. Then we can obtain the following result, with proof in Appendix~\ref{app_proofs}.
\begin{theorem}\label{thm_approx_h}
For any $\vgen \in \Vset$,
\begin{align*}
\| v_\pi - \vgen \|_{\dsol}
&\le \| (\eye - \Ppigamma)^\inv \|_{\dsol} \underbrace{\|\Bo v - v \|_{\dsol}}_{\msbe}\\
&\le \| (\eye - \Ppigamma)^\inv \|_{\dsol} \Big( \underbrace{\|\Pi_{\Hset, \dsol} \Bo v - v \|_{\dsol}}_{\mspbe} + \approxerr(\Hset,v) \Big)\\
&\le \| (\eye - \Ppigamma)^\inv \|_{\dsol} \Big(\|\Pi_{\Hset, \dsol} \Bo v - v \|_{\dsol}+ \approxerr(\Hset) \Big)
.
\end{align*}
Note that if $\ctransition < 1$, then $\| (\eye - \Ppigamma)^\inv \|_{\dsol} \le (1-\ctransition)^\inv$.
\end{theorem}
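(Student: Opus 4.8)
The plan is to chain the three displayed inequalities from the innermost identity outward, and then handle the closing remark with a Neumann series. The engine of the first inequality is that the Bellman error and the value error are related by the fixed linear operator $(\eye - \Ppigamma)$. Writing $\Bo \vgen = r_\pi + \Ppigamma \vgen$ and using $r_\pi = (\eye - \Ppigamma) v_\pi$ (which is just a restatement of $v_\pi = \Bo v_\pi$), I would compute
\begin{equation*}
\Bo \vgen - \vgen = r_\pi - (\eye - \Ppigamma)\vgen = (\eye - \Ppigamma)(v_\pi - \vgen),
\end{equation*}
so that $v_\pi - \vgen = (\eye - \Ppigamma)^\inv (\Bo \vgen - \vgen)$. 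Taking the weighted operator norm $\| \cdot \|_\dsol$ and using submultiplicativity $\| A u \|_\dsol \le \| A \|_\dsol \| u \|_\dsol$ immediately gives the first line, $\| v_\pi - \vgen \|_\dsol \le \| (\eye - \Ppigamma)^\inv \|_\dsol \| \Bo \vgen - \vgen \|_\dsol$.

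For the second line I would set $a \defeq \Bo \vgen - \vgen$ and recall $h_\vgen^*(s) = \mathbb{E}[\delta(\vgen) \mid S = s] = (\Bo \vgen - \vgen)(s)$, i.e.\ $a = h_\vgen^*$. Because $\Hset$ is convex, the minimizer in $\approxerr(\Hset, \vgen) = \min_{h \in \Hset} \| h_\vgen^* - h \|_\dsol$ is attained at $\Pi_{\Hset, \dsol} a$, so $\approxerr(\Hset, \vgen) = \| a - \Pi_{\Hset, \dsol} a \|_\dsol$. A single triangle inequality
\begin{equation*}
\| a \|_\dsol \le \| \Pi_{\Hset, \dsol} a \|_\dsol + \| a - \Pi_{\Hset, \dsol} a \|_\dsol
\end{equation*}
then reads off the $\mspbe$ term $\| \Pi_{\Hset, \dsol}(\Bo \vgen - \vgen) \|_\dsol$ and the residual $\approxerr(\Hset, \vgen)$. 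The third line is immediate from $\vgen \in \Vset$ together with the definition $\approxerr(\Hset) = \max_{\vgen' \in \Vset} \approxerr(\Hset, \vgen')$.

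For the final remark, when $\ctransition = \| \Ppigamma \|_\dsol < 1$ I would expand $(\eye - \Ppigamma)^\inv = \sum_{k \ge 0} \Ppigamma^k$ (the series converges in the induced norm) and bound $\| (\eye - \Ppigamma)^\inv \|_\dsol \le \sum_{k \ge 0} \| \Ppigamma \|_\dsol^k = (1 - \ctransition)^\inv$ by submultiplicativity and the triangle inequality.

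No step is deep; the only bookkeeping that needs care is the $\mspbe$ term. The triangle inequality naturally produces $\| \Pi_{\Hset, \dsol}(\Bo \vgen - \vgen) \|_\dsol$, which is the square root of the generalized $\mspbe$ as defined, whereas the statement writes $\| \Pi_{\Hset, \dsol} \Bo \vgen - \vgen \|_\dsol$. I would reconcile these by noting that $\vgen \in \Vset \subseteq \Hset$ gives $\Pi_{\Hset, \dsol}\vgen = \vgen$, so in the linear-feature case, where $\Pi_{\Hset, \dsol}$ is a linear projection, $\Pi_{\Hset, \dsol}(\Bo \vgen - \vgen) = \Pi_{\Hset, \dsol}\Bo \vgen - \vgen$; flagging this identification explicitly keeps the two forms consistent.
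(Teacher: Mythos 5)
Your proof is correct, and the first and third inequalities plus the closing Neumann-series remark match the paper's reasoning (the paper simply asserts the remark; your expansion $(\eye - \Ppigamma)^\inv = \sum_{k\ge 0}\Ppigamma^k$ is the right justification). Where you genuinely diverge is the middle inequality. The paper does not use the triangle inequality at all: it returns to the conjugate formulation, writing $\msbe(v) - \mspbe(v) = \max_{h\in\Hsetall} f(v,h) - \max_{h\in\Hset} f(v,h)$ for $f(v,h) = \sum_s \dsol(s)\bigl(2\mathbb{E}[\delta(v)\mid S=s]h(s) - h(s)^2\bigr)$, converts each max into a min of the squared regression loss $g(v,h)$, and concludes that the gap between the two (squared) objectives equals exactly $\min_{h\in\Hset} g(v,h) = \approxerr(\Hset, v)$ (the paper is loose about whether this is the norm or its square). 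Your route --- decompose $h_\vgen^* = \Bo\vgen - \vgen$ into $\Pi_{\Hset,\dsol}h_\vgen^*$ plus the projection residual and apply one triangle inequality --- is more elementary and delivers the stated norm inequality directly, whereas the paper's argument buys an exact identity between the squared objectives (a Pythagorean decomposition, given orthogonality of the projection) at the cost of more machinery and a square-versus-norm bookkeeping slip. Your closing caveat about $\|\Pi_{\Hset,\dsol}(\Bo\vgen - \vgen)\|_\dsol$ versus $\|\Pi_{\Hset,\dsol}\Bo\vgen - \vgen\|_\dsol$ is well taken: the identification requires $\vgen\in\Vset\subseteq\Hset$ and a linear (not merely convex) projection, a point the paper glosses over; flagging it explicitly is an improvement rather than a gap.
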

The upper bound depends on the value of the $\mspbe$ and the approximation error of $\Hset$.
For $\Hset = \Vset$, the $\mspbe$ is zero but the approximation error is likely higher. As $\Hset$ gets bigger, the approximation error gets smaller, but the $\mspbe$ is also larger. Once $\Hset$ is big enough to include $h^*$ for a given $v$, then the approximation error is zero but the $\mspbe$ is at its maximum, which is to say it is equal to the $\msbe$. This upper bound is likely minimized for an interim value of $\Hset$, that balances between the error from the $\mspbe$ and the approximation error.

It is important to recognize that the approximation error for $\Vset$ and $\Hset$ can be quite different. For example, if $v_\pi \in \Vset$, then $\approxerr(\Hset, v_\pi) = 0$ if $0 \in \Hset$. The function that returns zero for every state should be in $\Hset$ to ensure zero approximation error, but need not be in $\Vset$. For many convex function spaces we consider, all convex combinations of functions are in $\Vset$, and it is likely that we have 0 in $\Vset$. But nonetheless, there are likely instances where $\Vset$ contains near optimal value functions, but that same set produces high approximation error for $\Hset$. An important future direction is to better understand the differences in function spaces needed for $\Vset$ and $\Hset$.

\section{Algorithms for the Generalized $\mspbe$}\label{sec_algs}

The linear $\mspbe$ is often optimized using gradient correction algorithms as opposed to saddlepoint methods. The canonical methods are TDC (gradient corrections) and GTD2 (saddlepoint), where TDC has been consistently shown to perform better than GTD2 \citep{white2016investigating,ghiassian2020gradient}.
We show that similar gradient correction algorithms arise for the generalized $\mspbe$, and discuss such an algorithm called TDRC.

\subsection{Estimating the Gradient of the Generalized $\mspbe$}\label{sec:estimating_gradient_gpbe}

To see why (at least) two classes of algorithms arise, consider the gradient for the generalized $\mspbe$, for a given $h(s) \approx \CEpi{\delta(\vecw)}{S=s}$ with a stochastic sample $\delta(\vecw)$ from $S = s$:
\begin{align*}
-\nabla_\vecw \delta(\vecw) h(s) &= h(s) [\nabla_\vecw \paramv[s] - \gamma \nabla_\vecw \paramv[S']]
.
\end{align*}
This is the standard \emph{saddlepoint update}.
The key issue with this form is that any inaccuracy in $h$ has a big impact on the update of $\vecw$, and $h$ can be highly inaccurate during learning. Typically, it is initialized to zero, and so it multiples the update to the primary weights by a number near zero, making learning slow.

The \emph{gradient correction update}
is preferable because it relies less on the accuracy of $h$. The first term uses only the sampled TD-error.
\begin{align*}
\Delta \vecw &\gets \delta(\vecw) \nabla_\vecw \paramv[s] - \paramh[s]\gamma \nabla_\vecw \paramv[S']
\end{align*}
where $\Delta \hparams \gets (\delta(\vecw) - \paramh[s])\nabla_\hparams \paramh[s]$ just like the saddlepoint update. But, the update is biased because it assumes it has optimal $h^* \in \Hset$ for part of the gradient.
To see why, we extend the derivation for the linear setting.
\begin{align*}
-\nabla_\vecw \delta(\vecw) h(s) &= h(s) [\nabla_\vecw \paramv[s]- \gamma \nabla_\vecw \paramv[S']]\\
&= h(s) \nabla_\vecw \paramv[s] - h(s)\gamma \nabla_\vecw \paramv[S']\\
&= (h(s) - \delta(\vecw) + \delta(\vecw)) \nabla_\vecw \paramv[s] - h(s)\gamma \nabla_\vecw \paramv[S']\\
&= \delta(\vecw) \nabla_\vecw \paramv[s] + (h(s) - \delta(\vecw)) \nabla_\vecw \paramv[s] - h(s)\gamma \nabla_\vecw \paramv[S']
\end{align*}
This resembles the gradient correction update, except it has extra term $(h(s) - \delta(\vecw)) \nabla_\vecw \paramv[s]$. In the linear setting, if we have the linear regression solution for $h^*$ with parameters $\hparams$,
then this second term is zero in expectation. This is because $ \nabla_\vecw \paramv[s] = \vecx(s)$, giving
\begin{equation*}
\CEpi{(h(s) - \delta(\vecw)) \nabla_\vecw v(s, \vecw)}{S=s}
=  \vecx(s) \vecx(s)^\top \hparams - \vecx(s)\CEpi{\delta(\vecw)}{S=s}
\end{equation*}
and so in expectation across all states, because $\hparams = \mathbb{E}[\vecx(S) \vecx(S)^\top]^\inv \mathbb{E}[\vecx(S) \delta]$, we get that
\begin{align*}
\mathbb{E}[(h(S) - \delta(\vecw)) \nabla_\vecw v(S, \vecw)]
&=  \mathbb{E}[\vecx(S) \vecx(S)^\top] \mathbb{E}[\vecx(S) \vecx(S)^\top]^\inv \mathbb{E}[\vecx(S) \delta(\vecw)] - \mathbb{E}[\vecx(S)\delta(\vecw)]\\
&=  \mathbb{E}[\vecx(S) \delta(\vecw)] - \mathbb{E}[\vecx(S)\delta(\vecw)] = 0
.
\end{align*}
Therefore, given the optimal $h \in \Hset $ for $\Hset$ the set of linear functions, this term can be omitted in the stochastic gradient and still be an unbiased estimate of the full gradient.

More generally, the same reasoning applies if $h(s)$ can be re-expressed as a linear function of $\nabla_\vecw \paramv[S]$. This provides further motivation for using features produced by the gradient of the values, as in the nonlinear $\mspbe$, to estimate $h$. Another choice is to use the features in the last layer of the neural network used for $\paramv[S]$. Because the output is a linear weighting of features from the last layer, $\nabla_\vecw \paramv[S]$ includes this last layer as one part of the larger vector.
A head for $h$ can be added to the neural network, where $h$ is learned as a linear function of this layer. Its updates do not influence the neural network itself---gradients are not passed backwards through the network---to ensure it is a linear function of the last layer.

Unlike the saddlepoint update, however, the gradient correction update is no longer a straightforward gradient update, complicating analysis. It is possible, however, to analyze the dynamical system underlying these updates.

The asymptotic solution does not require the omitted term, under certain conditions on $h$, as discussed above. If the dynamical system moves towards this stable asymptotic solution, then convergence can be shown.
The TDC update relies on just such a strategy: the joint update is rewritten as a linear system, that is then shown to be a contraction that iterates towards a stable solution \citep{maei2011gradient}. The extension to the nonlinear setting is an important open problem.

Theoretical work for these updates under nonlinear function approximation has been completed, for an algorithm called SBEED \citep{dai2018sbeed}. SBEED uses a gradient correction update, but the theory is for the saddlepoint version. This work investigates a slightly different saddlepoint update, learning $h(s)  \approx \CE{R + \gamma \paramv[S']}{S=s}$, instead of estimating the entire TD error. A TD-error estimate can be obtained using $h(s) - \paramv[s]$. Even under this alternate route, it was reconfirmed that the gradient correction update was preferable---since the final proposed algorithm used this form---but the theory non-trivial.

\textbf{Remark:} Another way to interpret gradient correction algorithms is as approximations of the gradient of the $\msbe$. We can consider two forms for the negative gradient of the $\msbe$:
\begin{align*}
&\CEpi{\delta(\vecw)}{S=s} \CEpi{\nabla_\vecw \paramv[s]  - \gamma \nabla_\vecw \paramv[S']}{S=s}\\
 \text{or} \hspace{2.0cm} &\CEpi{\delta(\vecw)}{S=s}\nabla_\vecw \paramv[s] - \CEpi{\delta(\vecw)}{S=s} \CEpi{\gamma \nabla_\vecw \paramv[S']}{S=s}
\end{align*}
because $\paramv[s]$ is not random. We can estimate the first form of the gradient using an estimate $h(s) \approx \CEpi{\delta(\vecw)}{S=s}$:
$h(s) (\nabla_\vecw \paramv[s] - \gamma \nabla_\vecw \paramv[S'])$.
This corresponds to a saddlepoint update. To estimate the second form of the gradient, notice that we do not have a double sampling problem for the first term. This means we can use $\delta$ to compute an unbiased sample for the first term:
$\delta \nabla_\vecw \paramv[s] - h(s) \gamma \nabla_\vecw \paramv[S']$.
 This strategy corresponds to the gradient correction update.

\subsection{TDRC:\ A Practical Gradient-Based Prediction Algorithm}\label{sec:alg_pred}
In this section we incorporate the above insights into an easy-to-use gradient-based algorithm called TD with Regularized Corrections (TDRC). Empirical work with TDC suggests that it performs well because the secondary stepsize is set small, and thus it is effectively using conventional off-policy TD \citep{ghiassian2020gradient}.
TDRC builds on this insight, providing a method that performs similarly to TD when TD performs well, while maintaining the theoretical soundness of TDC.\@ The key idea is to regularize $h$ to be near zero, using an $\ell_2$ regularizer, where if $h(s) = 0$ then the update corresponds to the TD update.

The motivation for this approach is that it can help learn $h$ more quickly, without incurring much bias.
In the linear setting with $\Vset = \Hset$, the parameters $\hparams = 0$ for $h^*$ at the TD fixed point. Applying $\ell_2$ regularization to the secondary weights, therefore, incurs little bias asymptotically.
But, adding a strongly convex regularizer can improve the convergence rate of the secondary variable, and reduce variance of the estimator.

In fact, this perspective on $h(s)$ helps to explain the unreasonably good performance of TD in many practical settings.
TD is equivalent to TDC with $h(s) = 0$, which is a zero variance but highly biased estimate of the expected TD error.
Asymptotically as our estimates for the primary variable improve, the bias of this heuristic choice of $h$ decreases until we converge to the same fixed-point as TDC (in the cases where TD converges).

The TDRC update equations with importance sampling ratios $\rho(s,a) = \tfrac{\pi(a | s)}{b(a | s)}$ are
\begin{align*}
    \Delta \vecw &\leftarrow \rho(s,a)\delta(\vecw) \nabla_\vecw \paramv[s] - \rho(s,a) \paramh[s] \gamma \nabla_\vecw \paramv[S'] \\
    \Delta \hparams &\leftarrow \rho(s,a)(\delta(\vecw) - \paramh[s]) \nabla_\hparams \paramh[s] - \beta \hparams
\end{align*}
where $\vecw$ and $\hparams$ are parameters of $v : \States \to \Re$ and $h : \States \to \Re$ respectively.
Notice that the only difference between TDRC and TDC is the inclusion of the $-\beta \hparams$ term in the update for $\hparams$.
It can be shown in the linear setting with $\Vset = \Hset$ that the fixed-points of the TDRC dynamical system are the same as the TDC system and that, under some conditions on the learning rates, TDRC converges to its fixed-point on any MDP \citep{ghiassian2020gradient}.

By adding an $\ell_2$ regularizer with regularization parameter $\beta$ to TDC, we can interpolate between TD and TDC.\@
With $\beta$ very large the solution set of $h(s)$ becomes diminishingly small until it contains only the point $h(s) = 0$ as $\beta \to \infty$.
On the other hand, as $\beta$ approaches zero, then the solution set of $h(s)$ becomes that of the $\mspbe$ and TDRC approaches TDC.\@
Because $\beta$ scales between two known good algorithms, this suggests that the impetus to highly tune $\beta$ is small: many choices of $\beta$ should yield a reasonably performing algorithm.

There are many options to approximate both $v(s)$ and $h(s)$ using neural networks.
In our experiments, we chose to use a single network with two sets of outputs, one to predict $v(s)$ and the other to predict $h(s)$.
To avoid difficulties with balancing between two loss functions, we only use the gradients from the primary update to adjust the weights of the network and use the gradients from the secondary update only to adjust the weights for that head.
The resulting architecture for estimating action values is shown in Figure~\ref{fig:network_architecture}.

\section{Extending to Control}

The previous development was strictly for policy evaluation. The formulation of a sensible generalized $\mspbe$ for control, however, can be obtained using a similar route. The conjugate form has already been used to develop a novel control algorithm for nonlinear function approximation, called SBEED \citep{dai2018sbeed}. The SBEED algorithm explicitly maintains a value function and policy, to incorporate entropy regularization, and uses the gradient correction update. We develop an alternative control algorithm that learns only action-values and uses the gradient correction update.

\subsection{The Control Objective}

Our goal is to approximate $q^*: \States \times \Actions \rightarrow \mathbb{R}$, the action-values for the optimal (soft) policy. Instead of the Bellman operator, we use the Bellman optimality operator or generalizations that use other forms of the max but are still guaranteed to be contractions, like the mellow-max operator \citep{asadi2017alternative}. Let $m$ be the given max operator, that takes action-values and returns a (soft) greedy value. In Q-learning, we use a hard-max $m(q(s, \cdot)) = \max_a q(s,a)$ and in mellow-max, $m(q(s, \cdot)) = \tau^\inv \log \left(\frac{1}{|\Actions|} \sum_{a \in \Actions} \exp(\tau q(s,a)) \right)$.
 As $\tau \rightarrow \infty$, the mellow-max operator approaches the hard-max operator.
The Bellman optimality operator $\Bo_m$ corresponds to
\begin{equation}
(\Bo_m q)(s,a) \defeq \CE{R + \gamma(S,A,S') m(q(S', \cdot))}{S=s, A=a}
\end{equation}
where the expectation is over next state and reward. The fixed point of this operator is $q^*$.

To learn $\paramq[s,a]$ approximating $q^*(s,a)$, we define the $\msbe$ for control
\begin{equation}
\msbe(\vecw) \defeq \sum_{s,a} d(s,a) \CE{\delta(\vecw)}{S=s, A=a}^2
\end{equation}
where $\delta(\vecw) \defeq R + \gamma(S,A,S') m(\paramq[S',\cdot]) - \paramq[S, A]$ and
$d: \S \times \A \rightarrow [0, \infty)$ is some weighting. We overload the notation for the weighting $d$, to make the connection to the previous objectives clear.
We can rewrite this objective using conjugates, to get
\begin{align*}
\msbe(\vecw)
&= \sum_{s\in\S, a\in\A} d(s,a) \max_{h \in \mathbb{R}} \left(2\CE{\delta(\vecw)}{S=s, A=a}h - h^2 \right) \\
&=  \max_{h \in \Fsetall} \ \sum_{s\in\S, a\in\A} d(s,a) \left(2\CE{\delta(\vecw)}{S=s, A=a}h(s,a) - h(s,a)^2 \right)
.
\end{align*}
As before, this maximization can be rewritten as a minimization,

where the optimal $h^*(s,a) = \CE{\delta(\vecw)}{S=s, A=a}$. This equivalence is true for the hard-max operator or the mellow-max, even though the operator is no longer smooth.
Finally, in practice, we will learn an approximate $h$, from the set $\Hset$, resulting in a $\mspbe$ for control:
\begin{align*}
\mspbe(\vecw)
&\defeq  \max_{h \in \Hset} \ \sum_{s\in\S, a\in\A} d(s,a) \left(2\CE{\delta(\vecw)}{S=s, A=a}h(s,a) - h(s,a)^2 \right)
.
\end{align*}
This objective is the first generalized $\mspbe$ for learning action-values for control.

The algorithm is a simple modification of the policy evaluation algorithms above, described briefly here and expanded upon more in the next section. The update to $h(s,a)$ is still a gradient of a squared error to the TD error. The saddlepoint gradient update, for a given $h(s,a)$ with a stochastic sample $\delta(\vecw)$ from $S = s, A = a$:
\begin{align*}
-\nabla_\vecw \delta(\vecw) h(s,a) &= h(s,a) [\nabla_\vecw \paramq[s,a] - \gamma \nabla_\vecw m(\paramq[S', \cdot])]
\end{align*}
with gradient correction form
\begin{align*}
\delta(\vecw) \nabla_\vecw \paramq[s,a] - \gamma h(s,a) \nabla_\vecw m(\paramq[S',\cdot])
.
\end{align*}
Both involve taking a gradient through the max operator $m$. For the hard-max operator, this results in a subgradient. The mellow-max operator, on the other hand, is differentiable with derivative
\begin{equation*}
\frac{\partial}{\partial w_i} m(\paramq[s,\cdot])=  \frac{1}{\sum_{a \in \Actions} \exp(\tau \paramq[s, a])} \sum_{a \in \Actions}  \exp(\tau \paramq[s, a]) \frac{\partial}{\partial w_i}\paramq[s, a]
\end{equation*}

We can reason similarly about the validity of the gradient correction update.
\begin{align*}
-\nabla_\vecw &\delta(\vecw) h(s,a)
= (h(s,a) - \delta(\vecw) + \delta(\vecw)) \nabla_\vecw \paramq[s,a] - h(s,a)\gamma \nabla_\vecw m(\paramq[S',\cdot])\\
&= \delta(\vecw) \nabla_\vecw \paramq[s,a] + (h(s,a) - \delta(\vecw)) \nabla_\vecw \paramq[s,a] - h(s,a)\gamma \nabla_\vecw m(\paramq[S',\cdot])
.
\end{align*}
As before, we can conclude that we can drop this second term, as long as the optimal $h \in \mathcal{H}$ is representable as a linear function of $\nabla_\vecw \paramq[s,a]$. The fixed point for the gradient correction updates that drop the term $(h(s,a) - \delta(\vecw)) \nabla_\vecw \paramq[s,a]$ will still converge to the same fixed point, if they converge. The key question that remains is, if the dynamical system produced by these equations does in fact converge.

\begin{figure}[t!]
    \centering
    \includegraphics[width=0.5\textwidth]{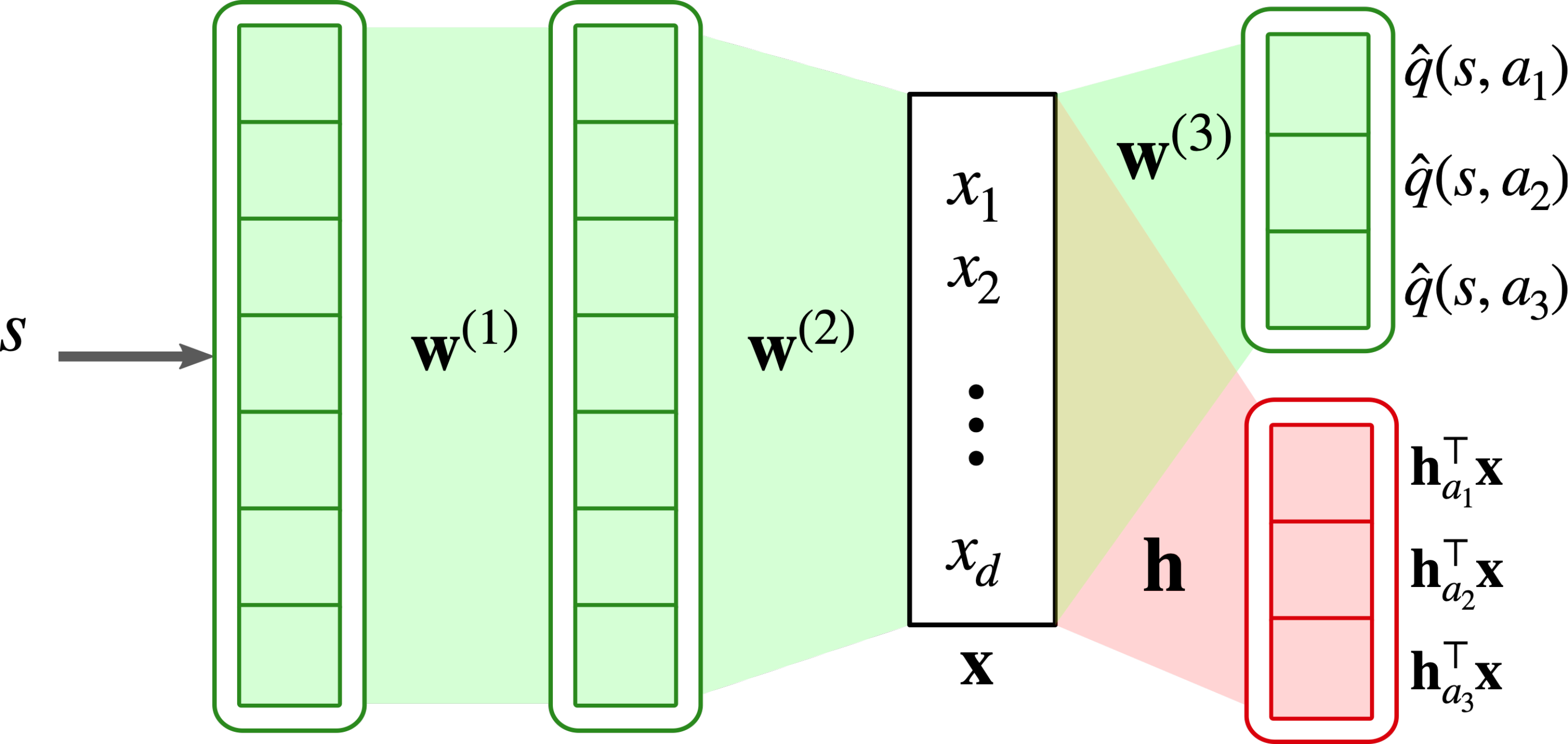}
    \caption{\label{fig:network_architecture}
        Visualization of the neural network architecture used to approximate $\hat q(s, a)$ and $h(s, a)$ for the QRC algorithm.
        TDRC uses the same network architecture, except with only one output for each head of the network predicting $\hat v(s)$ and $h(s)$ respectively.
        The green shaded regions represent gradient information from the update to $\hat q(s, a)$ and the red shaded region represents the gradient information from the update $h(s, a)$.
        Notice only $\hat q(s, a)$ modifies the weights of the earlier layers of the network.
    }
\end{figure}

\subsection{QRC:\ A Practical Gradient-Based Control Algorithm}\label{sec_alg_control}
In this section, we modify the TDRC algorithm specified in Section~\ref{sec:alg_pred} for control. Let $\hparams_{t, A_t}$ be the weights for only the secondary head predicting $h(S_t, A_t)$, $\vecx_t$ the last layer of the network---giving the features for $h$---and $\vecw_t$ all of the remaining weights in the network. We use the mellowmax operator, to get the following updates for the QRC algorithm
\begin{align*}
    \delta_t = R_{t+1}& + \gamma m(\paramq[S_{t+1}, \cdot]) -  \paramq[S_t, A_t], \text{ and } h_t =  \hparams_{t, A_t}\tr \vecx_t\\
    \vecw_{{t+1}} &\leftarrow \vecw_t + \alpha \delta_t \nabla_w \paramq[S, A] - \alpha \gamma h_t  \nabla_\vecw m(\paramq[S',\cdot])\\
    \hparams_{t+1, A_t} &\leftarrow \hparams_{t, A_t} + \alpha \left[ \delta_t - h_t \right] \vecx_t - \alpha\beta \hparams_{t, A_t}
\end{align*}

QRC is similar to SBEED \citep{dai2018sbeed}, but has two key differences. The first is that SBEED learns a  state-value function and an explicit policy. We learn action-values and use a direct mellowmax on the action-values to compute the policy.
The other key difference is that the SBEED update uses a different form of gradient correction, which estimates only the Bellman step instead of the TD error in order to correct the gradient.
Recall that QRC interpolates between a gradient correction update at one extreme of its hyperparameter ($\beta = 0$) and Q-learning at the other extreme (large $\beta$).
SBEED, on the other hand, interpolates between a gradient correction update and a residual-gradient method minimizing the mean squared TD error.
When QRC converges, it converges to the same fixed-point for all values of its hyperparameter.
SBEED, however, interpolates between the fixed-point for the $\mspbe$ and the mean squared TD error.

\section{Empirical Investigation of the Generalized $\mspbe$ for Control}

In this section, we empirically investigate QRC.\@
We first provide a comparison of QRC with Q-learning and SBEED across four benchmark domains. Then we delve more deeply into the design choices in QRC.
We compare using gradient corrections within QRC to the saddlepoint form in Section~\ref{sec_gc_vs_sp}.
We then investigate how QRC performs with separate bases for $h$, including a more powerful basis for $h$, in Section~\ref{sec_h_bases}.

\blfootnote{\noindent Code for all experiments is available at https://github.com/rlai-lab/Generalized-Projected-Bellman-Errors}

\subsection{Benchmark Environments and Experimental Design}\label{sec:benchmark-envs}

We used four simulation domains with neural network function approximation.
We chose simulation domains with a sufficiently small state dimension to efficiently compare algorithms across many different random initializations of neural network, while still computing the required number of experiment repetitions required for statistical significance.
On the other hand, to tease out differences between algorithms, we require domains with sufficiently complex learning dynamics. We chose three classic control domains known to be challenging when approximation resources and agent-environment interactions are limited: Acrobot \citep{sutton1996generalization}, Cart Pole \citep{barto1983neuronlike}, and Mountain Car \citep{moore1990efficient}. We also used Lunar Lander \citep{brockman2016openai} to investigate performance in a domain with a dense reward function and moderately higher-dimensional state.

The network architectures were as follows.
For Acrobot and Mountain Car, we used two layer fully-connected neural networks with 32 units in each layer and a ReLU transfer.
The output layer has an output for each action-value and uses a linear transfer.
For the Cart Pole and Lunar Lander domains, we used the same architecture except with 64 units in each hidden layer.
We use a shared network with multiple heads for all algorithms unless otherwise specified.
In experiments with policy-gradient-based methods the parameterized policy uses an independent neural network.
We do not use target networks.

We swept consistent values of the hyperparameters for every experiment.
We swept the stepsize parameter over a wide range $\alpha \in \{2^{-12}, 2^{-11}, \ldots, 2^{-7} \}$ for every algorithm.
For algorithms which chose a stepsize on the boundary of this range---for instance, GQ often chose the smallest stepsize---we performed a one-off test to ensure that the range was still representative of the algorithm's performance.
All algorithms used mellowmax, with $\tau$ swept in the range $\tau \in \{0, 10^{-4}, 10^{-3}, \ldots, 10^0 \}$, including 0 to allow algorithms to choose to use a hard-max.
Algorithms based on the SBEED update have an additional hyperparameter $\eta$ which interpolates between the gradient correction update and a residual gradient update.
For all experiments we swept values of $\eta \in \{10^{-3}, 10^{-2}, 10^{-1}, 10^0 \}$ and the ratio between the actor and critic stepsizes $\nu \in \{2^{-4}, 2^{-3}, \ldots, 2^1\}$, often giving SBEED algorithms twenty-four times as many parameter permutations to optimize over compared to other algorithms.
Likewise, we allowed saddlepoint methods (GQ) to optimize over the regularization parameter $\beta \in \{0, 0.5, 1, 1.5\}$, to give them an opportunity to perform well.

The remaining hyper-parameters were not swept, but instead set to reasonable defaults.
We used a replay buffer to store the last 4000 transitions, then sampled 32 independent transitions without replacement to compute mini-batch averaged updates.
We used the ADAM optimizer \citep{kingma2015adam} for all experiments with the default hyperparameters, a momentum term of $\beta_1 = 0.9$ and a squared-gradient term of $\beta_2 = 0.999$.
We additionally tested Stochastic Gradient Descent and RMSProp and found that most conclusions remain the same, so choose not to include these results to focus the presentation of results.
For each of the four domains we use a discount factor of $\gamma = 0.99$ and cutoff long-running episodes at 500 steps for Acrobot and Cart Pole and 1000 steps for Mountain Car.
On episode cutoff events, we do not make an update to the algorithm weights to avoid bootstrapping over this imaginary transition and on true episode termination steps we update with $\gamma = 0$.

We use a non-conventional performance measure to more fairly report algorithm performance.
A common performance metric is to report the cumulative reward at the end of each episode, running each algorithm for a consistent number of episodes. This choices causes algorithms to have different amounts of experience and updates.
Some algorithms use more learning steps in the first several episodes and achieve higher asymptotic performance because they effectively learned for more steps.
We instead report the cumulative reward from the current episode on each step of the current episode.
For example in Mountain Car, if the $kth$ episode takes 120 steps, then we would record -120 for each step of the episode.
We then run each algorithm for a fixed number of steps instead of a fixed number of episodes, so that each algorithm gets the same number of learning steps and a consistent amount of data from the environment. We record performance over 100,000 steps, recorded every 500 steps---rather than every step---to reduce storage costs.

To avoid tuning the hyperparameters for each algorithm for every problem, we start by investigating a single set of hyperparameters for each algorithm across all four benchmark domains.
We evaluate the hyperparameters according to mean performance over runs, for each domain.
We then use a Condorcet voting procedure to find the single hyperparameter setting that performs best across all domains.

\begin{figure}
    \centering
	\includegraphics[width=\linewidth]{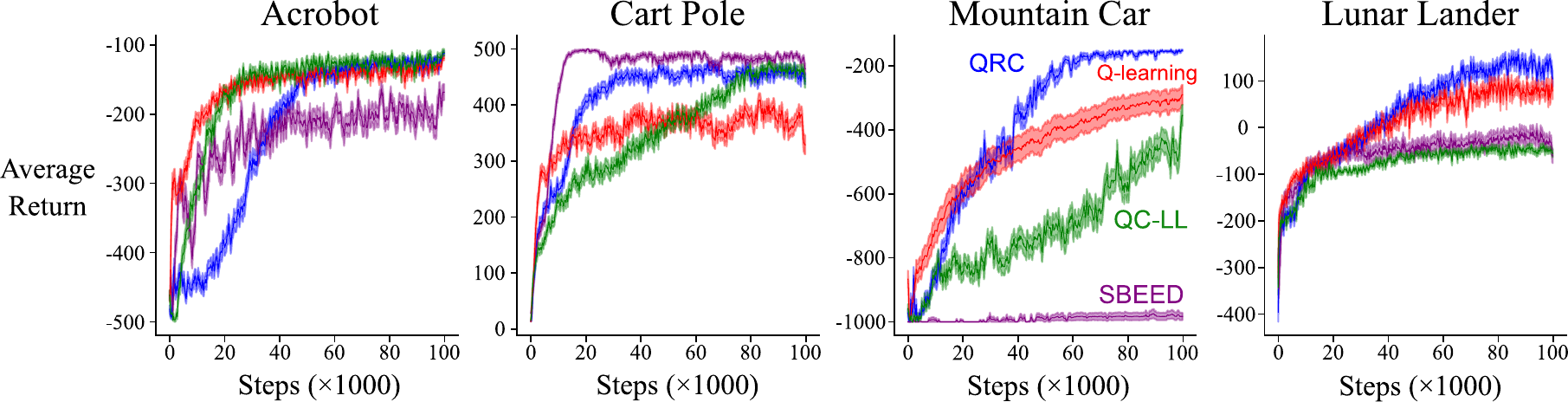}
    \caption{\label{fig:bakeoff}
        {\em Learning curves using the best performing hyperparameters across 4 domain.}
        The learning curves above are averaged over 100 independent runs and shaded regions correspond to one standard error.
    }
\end{figure}

\subsection{Overall Results in the Benchmark Environments}\label{sec:benchmark-experiments}
Figure~\ref{fig:bakeoff} shows the learning curves for each algorithm with the single best performing hyperparameter setting across domains.
QRC was the only algorithm to consistently be among the best performing algorithms on every domain and was the only algorithm with a single hyperparameter setting that could solve all four domains.
Although SBEED was given twenty-four times as many hyperparameter combinations to optimize over, its performance was consistently worse than all other benchmark algorithms.
This suggests that the voting procedure was unable to identify a single hyperparameter setting that was consistently good across domains.
We additionally include the nonlinear control variant of TDC with a locally linear projection which we call QC-LL \citep{maei2009convergent}.
QC-LL performed well on the two simpler domains, Acrobot and Cart Pole, but exhibited poor performance in the two more challenging domains.
We report other voting procedures as well as the performance of the best hyperparameters tuned for each domain independently in Appendix~\ref{app:additional-results}.

\begin{figure}
    \centering

    \includegraphics[width=\linewidth]{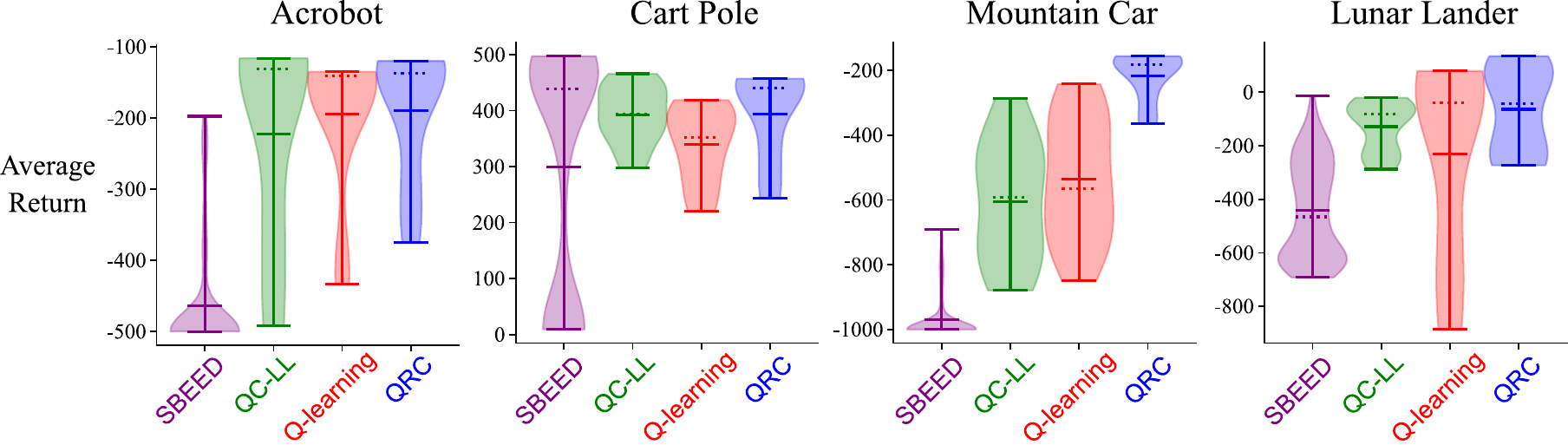}
    \caption{\label{fig:bakeoff-param-dist}
        {\em Distribution of average returns over hyperparameter settings for each benchmark domain.} The vertical axis represents the average performance of each hyperparameter setting (higher is better) and the width of each curve represents the proportion of hyperparameters which achieve that performance level, using a fitted kernel density estimator.
        The solid horizontal bars show the maximum, mean, and minimum performance respectively and the dashed horizontal bar represents the median performance over hyperparameters.
        QRC in blue generally performs best and exhibits less variability across hyperparameter settings.
    }
\end{figure}

To understand how hyperparameter selection impacts the performance of each algorithm, we visualize the performance distribution over hyperparameters in Figure~\ref{fig:bakeoff-param-dist}.
Ideally, we prefer the variability in performance across hyperparameters to be small and the distribution to be concentrated around a single performance level near the top of the plot.
Plots with an hourglass shape represent bimodal distributions where several hyperparameter values perform well and several perform poorly.
Plots where the mean and median horizontal markers are quite separated indicate highly skewed distributions.

All algorithms exhibit different hyperparameter performance distributions across the four domains. The SBEED algorithm often has one (or a few) hyperparameter setting(s) which perform well, especially in the Cart Pole domain.
This suggests that SBEED is highly sensitive to its hyperparameters.
and even small deviations from the ideal hyperparameters can lead to very low performance.
Q-Learning generally exhibits wide spread of performance over hyperparameters, with a high skew in two of the problem settings.
QRC has much lower spread of performance over hyperparameters often with the bulk of the distribution located near the highest performing hyperparameter setting.

\subsection{Investigating Variability Across Runs}

In the previous section we investigated performance averaged across runs; in this section, we investigate how much the algorithms vary across runs. Two algorithms could have similar average returns across runs, even if one has many poor runs and many high performing runs and the other has most runs that have reasonable performance.

To observe these differences, we visualize the distribution of returns across 100 runs, as in Figure~\ref{fig:example-performance-dist}. The horizontal axis shows the return achieved for a given run, averaged over the last 25\% of steps.
The vertical axis is the proportion of independent runs that achieve that return, using both a histogram with 30 bins and a fitted Gaussian kernel density estimator. In this example plot, we can see that QRC and Q-learning have similar distributions over returns across runs, QC-LL is peaked at a lower return and SBEED exhibits some bimodality, with several reasonable runs and a few runs with low return.
\begin{figure}[h]
    \centering
    \includegraphics[width=0.35\linewidth]{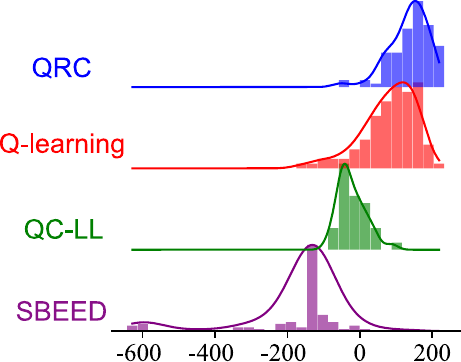}
    \caption{\label{fig:example-performance-dist}
        {\em The performance distribution over runs} for the best performing hyperparameter settings for each algorithm on Lunar Lander.
        The horizontal axis represents the average episodic return over the last 25\% of steps.
        The vertical axis for each subplot represents the proportion of trials that obtained a given level of performance.
        The plot shows the empirical histogram and kernel density estimator for the performance distribution over 100 independent trials. Mass concentrated to the right indicates better performance.
    }
\end{figure}

Because each algorithm prefers different stepsizes, we provide these distribution plots across stepsizes, optimizing the remaining hyperparameters for each stepsize. Figure~\ref{fig:bakeoff-run-dist} shows the distribution for every algorithm, across every domain and swept stepsize value.
For every algorithm and domain, the distribution with the highest mean is highlighted with a bold color; distributions for all other stepsizes are shown with a faded color.
QRC consistently has narrow performance distributions over runs.
The worse performing hyperparameter settings for QRC often exhibit narrow distributions as well.
In contrast, the other three algorithms demonstrate skewed performance distributions for their best hyperparameters, pulling the mean towards lower performance levels.
SBEED often exhibits bimodality in its performance distributions, especially on the Mountain Car domain.\footnote{Our results with SBEED may appear pessimistic compared to results in the literature. It is possible previously published results with SBEED were achieved through domain specific hyper-parameter tuning. Further, our results were averaged over 100 runs, whereas prior work used five. We do not have access to the code used in prior work, and so we can only speculate.}

\begin{figure}[t]
    \centering
    \includegraphics[width=\linewidth]{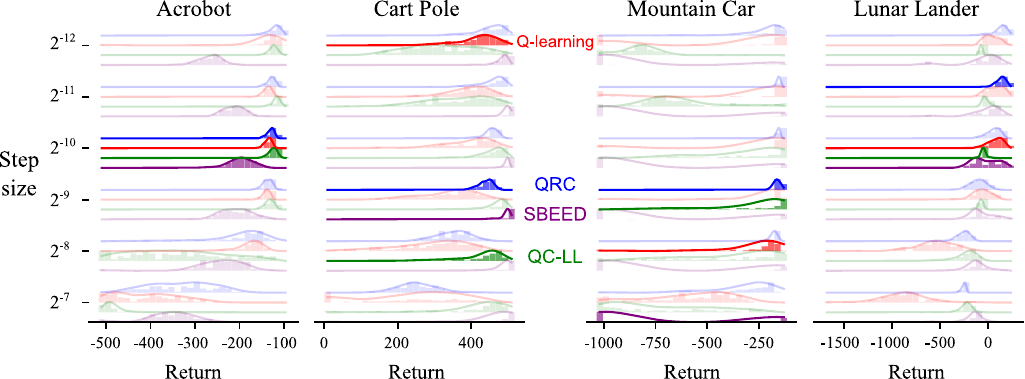}
    \caption{\label{fig:bakeoff-run-dist}
        {\em Sensitivity to stepsize parameter.} Distribution of the return per episode for the last 25\% of episodes across choice of stepsize. Each row of this figure correponds to the performance on each algorithm across domains for one value of the stepsize parameter. Each subplot is exactly like Figure \ref{fig:example-performance-dist}: the distribution of performance for all four algorithms using a particular stepsize parameter value on a single domain. The highlighted plots in each column represent the best performing stepsize parameter value.
        QRC consistently exhibits a narrow distribution of performance where the bulk of the distribution is on the upper end of the performance metric (towards the right is better).
        Q-learning and Nonlinear QC both have wide performance distributions on all domains and exhibit bimodal distributions on Mountain Car.
        SBEED tends to exhibit bimodal performance often, with a non-trivial proportion of runs which fail to learn beyond random performance.
    }
\end{figure}

\subsection{Gradient Correction Updates versus Saddlepoint Updates}\label{sec_gc_vs_sp}

\begin{figure}[t]
    \centering
    \includegraphics[width=\linewidth]{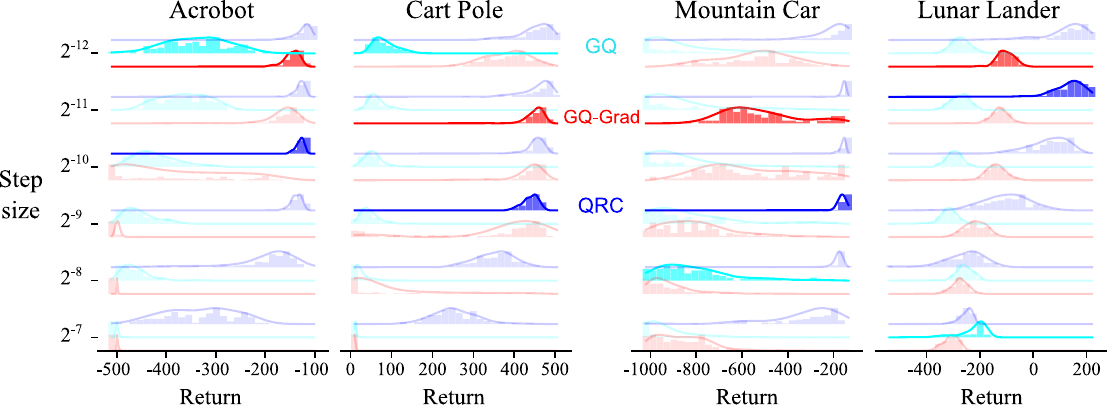}
    \caption{\label{fig:gc_vs_sp}
        {\em Comparing gradient correction-based updates (QRC) and saddlepoint methods (GQ, GQ-Grad).}
        GQ-Grad utilizes the gradient of $v$ as features for the secondary variable, $h$.
        Allowing the saddlepoint methods to estimate $h(s)$ by using a linear function of the gradients of the primary variable yields slightly higher performance. Nonetheless, saddlepoint methods suffer from wide performance distributions with the bulk of the distribution being further left than the gradient correction-based updates.
    }
\end{figure}

In this section we compare the saddlepoint and gradient correction forms of the gradient.
In Section~\ref{sec:estimating_gradient_gpbe}, we discussed these two strategies for estimating the gradient of the generalize $\mspbe$, and advocated for using the gradient correction form in QRC.\@
To ablate this choice of update strategy, we fix all other design decisions.
Figure~\ref{fig:gc_vs_sp} shows the saddlepoint method compared to the gradient correction method across our four domains.

The distribution plots suggest that the saddlepoint method \emph{can} learn reasonable policies in some runs, however, the performance distributions tend to be skewed right with a bulk of the distribution around poor performance. QRC is restricted to regularization parameter $\beta=1$;
to give the saddlepoint method a greater chance of success, we additionally sweep $\beta$ for it, giving it four times as many hyperparameter permutations.
Even with the increased hyperparameter search space, it is clear that the saddlepoint method under-performs the gradient correction-based methods.

\subsection{Using a Shared or Separate Basis for $h$}\label{sec_h_bases}

In this section we investigate the impact of using a shared network in QRC.\@
By sharing a network, we effectively restrict $\Hset = \Vset$; by using a different network, $\Hset$ is less restricted.
A shared network is simpler and, in section~\ref{sec:estimating_gradient_gpbe}, we show that defining $h$ to be a function of the gradients for $\paramv$ reduces the bias of gradient correction methods, such as QRC. However, in Section~\ref{sec:upper-bound-msve-supset} we show a less restricted $\Hset$ can improve performance. So, which is better?
We compare QRC with a shared network, QRC with a separate network for $h$ (QRC-Sep), and QRC with gradient features for $h$ (QRC-Grad) to understand the impact of these choices.

To help remove the confounding variable of number of learnable parameters, we swept over network architectures for the separate network version of QRC.\@
We kept the same general structure of the neural network by restricting the number of hidden units in both hidden layers to be the same, then swept over the number of units per layer: $n \in \{8, 16, 32, 64\}$ for the network for $h$ in Acrobot and Mountain Car and $n \in \{16, 32, 64, 128\}$ for Cart Pole and Lunar Lander.
At the smallest end of the range, QRC with separate networks has fewer learnable parameters than when using shared heads, and at the upper end of the range, it has considerably more learnable parameters.

Figure~\ref{fig:dual_basis} summarizes the results using separate networks for the dual.
Performance is comparable for each of the three method, with QRC-Sep performing marginally worse overall and QRC-Grad performing marginally better.
Notably the performance ordering appears to follow the bias of each method, with QRC-Sep having more bias due to needing to learn two separate networks, and QRC-Grad having slightly less bias due to $h$ being a function of the gradients of $\paramv$.
In general, the simpler choice of using a shared network appears to be as effective in these benchmark environments.

\begin{figure}[t]
    \centering
    \includegraphics[width=\linewidth]{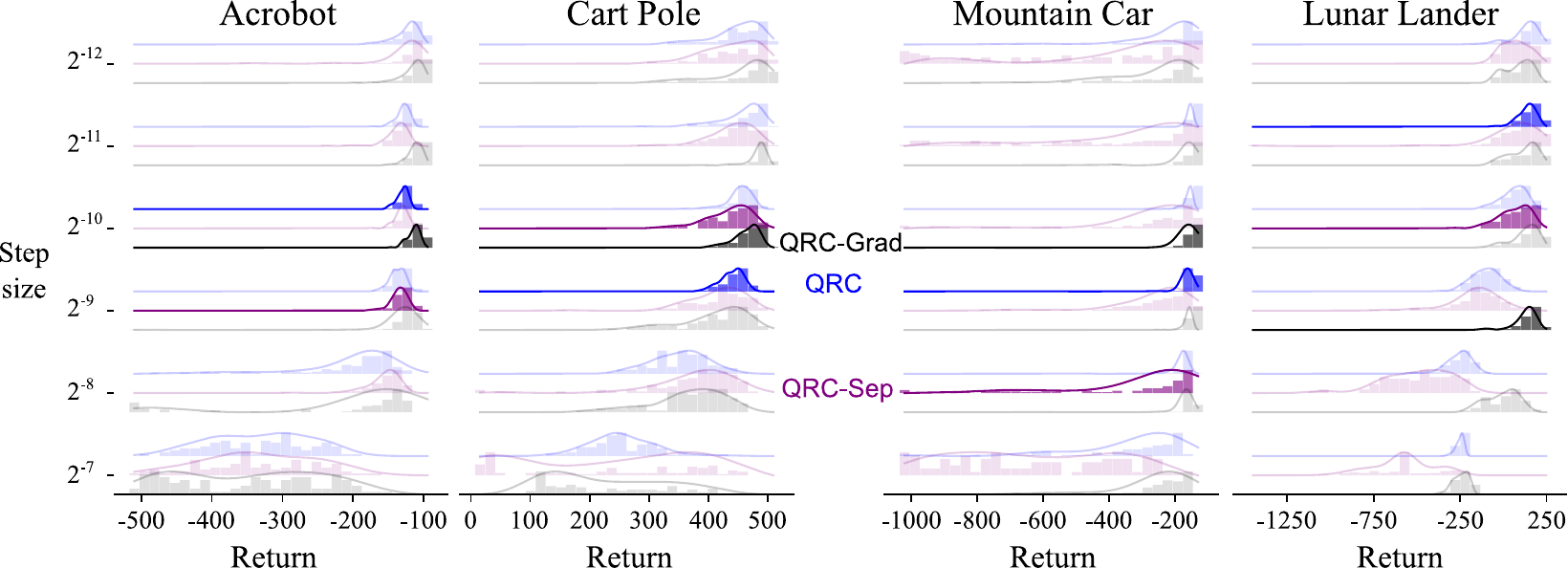}
    \caption{\label{fig:dual_basis}
        {\em How to represent $h$}: ablating the choice of basis function for the secondary variable, by comparing a shared network with two heads (QRC), two separate networks (QRC-Sep), and one network for the primary variable and a linear function of the primary variable's gradients for the secondary (QRC-Grad).
    }
\end{figure}

\subsection{Using Parameterized Policies}

In this section we attempt to better understand the poor performance of SBEED by investigating both action-value and actor-critic variants of the algorithms.
Parameterized policies add a significant source of complexity both to the agent design and to the learning dynamics of the problem, making it challenging to directly compare SBEED to QRC.\@
In this section we complete the square by introducing a parameterized policy version of TDRC, which can be viewed as an Actor-Critic method with a TDRC critic, and introduce an action-value version of SBEED.\@
We show that the action-value version of SBEED significantly outperforms its parameterized policy counterpart, but still under-performs QRC in all domains and TDRC-PG in most domains.

The actor-critic version of TDRC uses the following update
\begin{align*}
    \vecw_{t+1} &\leftarrow \vecw_t + \alpha \delta_t \nabla_\vecw v(S_t, \vecw_t) - \alpha \gamma \hparams_t\tr \vecx_t \nabla_\vecw v(S_{t+1}, \vecw_t) \\
    \hparams_{t+1} &\leftarrow \hparams_t + \alpha \left[ \delta_t - \hparams_t\tr \vecx_t \right] x_t - \alpha \beta \hparams_t \\
    \theta_{\pi,t+1} &\leftarrow \theta_{\pi,t} + \alpha \delta_t \nabla_{\theta_\pi} \ln \pi_{\theta_\pi} (A_t | S_t)
    ,
\end{align*}
which can be viewed as a standard actor-critic update with TDRC as the critic.
Our design choices for TDRC-PG mirror those of SBEED, for instance using a two-headed neural network for the value function estimator and a separate neural network for the policy gradient estimator to avoid the need to balance between loss functions.

\begin{figure}[t]
    \centering
    \includegraphics[width=\linewidth]{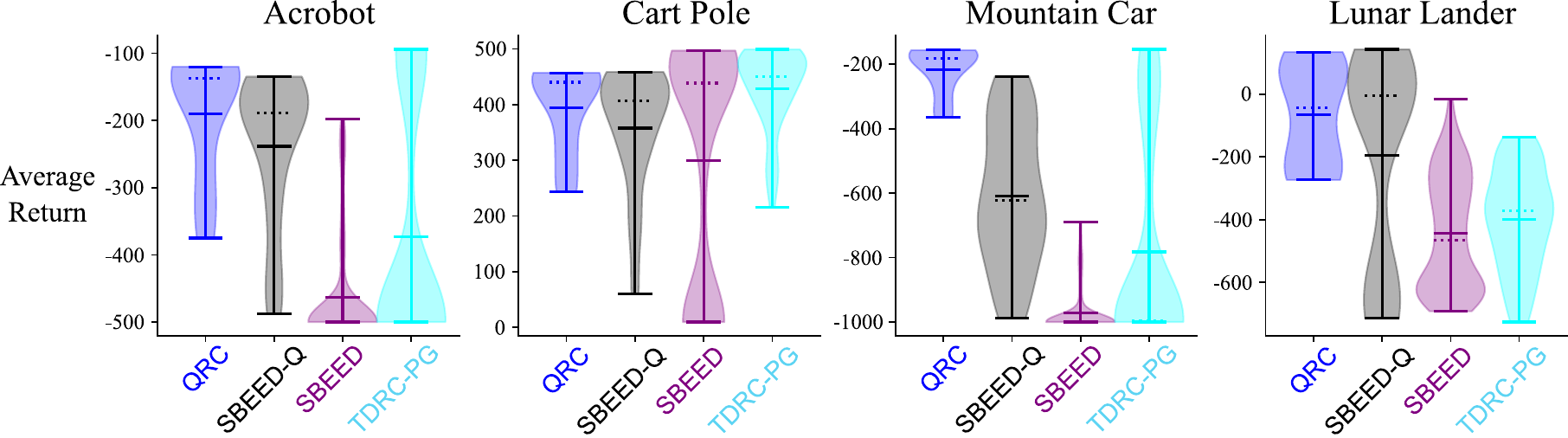}

    \caption{\label{fig:parameterized_policies}
        {\em Impact of policy parameterization: action values verses parameterized policies.} The plots above compare the QRC and SBEED update functions both for learning state-value functions with parameterized policies and action-value functions with mellowmax.
    }
\end{figure}

In Figure~\ref{fig:parameterized_policies} we investigate the distribution of performance for each choice of update cross-producted with each choice of action-value or parameterized policy.
The width of each violin plot represents the proportion of parameter settings whose area under the learning curve achieved the given performance level indicated on the vertical axis.
In all cases, QRC had the most narrow performance distribution centered around the highest or near-highest reward.
The parameterized-policy variants generally exhibit wider distributions and lower average AUC than their action-value counterparts, but occasionally achieve higher maximum performance for a single hyperparameter setting.
The SBEED update generally performed worse than the QRC update across all four domains.

The empirical superiority of QRC over SBEED in our experiments is partially, but not fully, explained by learning parameterized policies.
Because SBEED-Q outperforms SBEED overall, some of SBEED's poor performance is due to the parameterized policy.
SBEED-Q and QRC still perform differently in every domain, with SBEED-Q generally having much higher sensitivity to hyperparameters and occasionally achieving lower maximum performance than QRC, even after tuning over many more hyperparameter settings.

\section{Conclusion}
In this paper, we introduced the generalized $\mspbe$ for off-policy value estimation, bringing together insights developed for both the linear $\mspbe$ and the $\msbe$.
We investigated the quality of the $\mspbe$ solution, both theoretically by bounding the $\msve$ and empirically under different state weightings and for different projection operators.
We introduced an algorithm for minimizing the $\mspbe$ both in prediction and control, incorporating empirical insights from optimizing the linear $\mspbe$.
Finally, we showed empirically that this control algorithm is less sensitive to its hyperparameters and exhibits more stable performance compared to several baselines on four benchmark domains.

This work highlights several open questions for off-policy value estimation.
We discuss three key questions that follow-up work can focus on.

\textbf{Can the empirical benefits of gradient correction methods be combined with the theoretical convenience of saddlepoint methods?}
Our proposed algorithm uses gradient corrections because these methods typically outperform their saddlepoint counterparts in practice \citep{white2016investigating,ghiassian2018online,ghiassian2020gradient}.
In Section~\ref{sec:estimating_gradient_gpbe} we discuss how saddlepoint methods use a standard gradient update for min-max problems, and so import the resulting theoretical guarantees. The gradient correction update, on the other hand, is only asymptotically unbiased under certain conditions on the auxiliary variable $h$. Convergence is only guaranteed under linear function approximation; more needs to be understood about the theoretical properties of this update in the nonlinear setting.

\textbf{How do we use emphatic weightings for control?}
In policy evaluation our experiments show that emphatic weights consistently produce better solutions than either the alternative-life solution $d_\pi$ or the excursion solution $d_\mu$ (see Figure~\ref{fig:rw-fixed-points}).
In practice, incorporating emphatic weightings can yield high-variance \citep{sutton2018reinforcement}.
An interesting direction is to investigate the use of emphatic weightings in QRC.\@

\textbf{What function class should we use for $\Hset$?}
In this work, we concluded that using a shared network for the primary and auxiliary variables was reasonable effective for optimizing the $\mspbe$. This effectively restricts $\Hset = \Vset$, which mimics the choice for the linear $\mspbe$, even when we use neural network function approximation. However, restricting $\Hset$ can result in poor quality solutions, namely those where solutions under the $\msbe$ are high quality but not under the linear $\mspbe$. Much more work needs to be done to understand the impacts of different choices for $\Hset$.

\subsubsection*{Acknowledgments}
The authors gratefully acknowledge funding from
the Natural Sciences and Engineering Research Council of Canada, CIFAR and Google DeepMind. Thanks to Sina Ghiassian and Richard Sutton for helpful discussions on an earlier version of this work.

\bibliography{paper}

\newpage
\appendix

\section{An Example of Posterior and Prior Corrections}\label{app_example_prior}

In this section we go through an explicit example, to highlight the role of prior and posterior corrections. For ease of reference, recall that Off-policy TD($\lambda$) that only has posterior corrections uses update
\begin{align}
\vecw_{t+1} \leftarrow& ~\vecw_t + \alpha \delta_t \vecz^\rho_t \nonumber \\
\vecz^\rho_t \leftarrow& ~\rho_t(\gamma_t \lambda \vecz^\rho_{t-1}  +\vecx_t) \tag{\ref{eq:offTD}}
,
\end{align}
and Alternative-life TD($\lambda$) incorporates both prior and posterior corrections
\begin{align}
\vecw_{t+1} \leftarrow& ~\vecw_t + \alpha \delta_t \vecz^\rho_t \nonumber \\
\vecz^\rho_t \leftarrow& ~\rho_t \left(\gamma_t \lambda \vecz_{t-1}  +\prod_{k=1}^{t-1} \rho_k \vecx_t \right) \tag{\ref{eq:precup2001}}
.
\end{align}

\begin{figure}[H]
    \centering
    \includegraphics[width=0.4\linewidth]{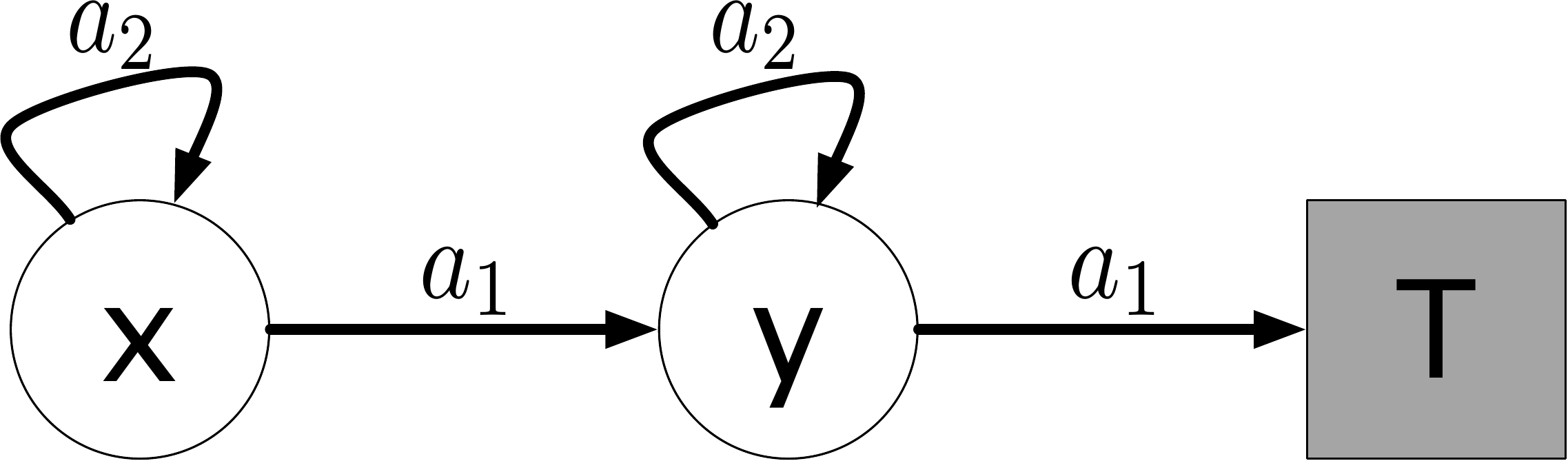}
    \caption{\label{fig:SimpleMDP}
        A simple MDP to understand the differences between prior corrections and posterior corrections in off-policy TD algorithms with importance sampling.
    }
\end{figure}

Consider the MDP depicted in Figure~\ref{fig:SimpleMDP}. Each episode starts in the leftmost state denoted `x' and terminates on transition into the terminal state denoted with `T', and each state is  represented with a unique tabular state encoding: x$: [1,0]$, y$: [0,1]$.
In each state there are two possible actions and the behavior policy chooses each action in each state with 0.5 probability. The target policy chooses action $a_1$ in all states. A posterior correction method like Off-policy TD($\lambda$), will always update the value of a state if action $a_1$ is taken. For example if the agent experiences the transition $y\rightarrow T$, Off-policy TD($\lambda$) will update the value of state $y$; no matter the history of interaction before entering state $y$.

Although the importance sampling corrections product in the eligibility trace update, Off-policy TD($\lambda$) does not use importance sampling corrections computed from prior time-steps to update the value of the current state. This is easy to see with an example. For simplicity we assume $\gamma_t$ is 1 everywhere, except upon termination where it is zero.
Let's examine the updates and trace contents for a trajectory where $b$'s action choices perfectly agree with $\pi$:
\[
    x\rightarrow y\rightarrow T.
\]
After the transition from $x\rightarrow y$, Off-policy TD($\lambda$) will update the value estimate corresponding to $x$:
\begin{align*}
\begin{bmatrix}
           \vhatt{x}{1} \\
           \vhatt{y}{1} \\
         \end{bmatrix} \leftarrow
\begin{bmatrix}
           0 \\
           0 \\
         \end{bmatrix}
+ \alpha\delta_1 \vecz^\rho_1 = \alpha \delta_1
\begin{bmatrix}
           \frac{\pi(a_1|x)}{b(a_1|x)}\lambda \\
           0 \\
         \end{bmatrix}
,
\end{align*}
where $\hat v_1(x)$ denotes the estimated value of state $x$ on time step $t=1$ (after the first transition), and as usual $\vecz^\rho_0$ and $\hat v$ are initialized to zero. After the second transition, $y\rightarrow T$, the importance sampling corrections will product in the trace, and the value estimates corresponding to both $x$ and $y$ are updated:
\begin{align*}
\begin{bmatrix}
           \vhatt{x}{2} \\
           \vhatt{y}{2} \\
         \end{bmatrix}
\leftarrow
\begin{bmatrix}
           \vhatt{x}{1} \\
           \vhatt{y}{1} \\
         \end{bmatrix}
+ \alpha\delta_2
\begin{bmatrix}
           \frac{\pi(a_1|y)}{b(a_1|y)}\frac{\pi(a_1|x)}{b(a_1|x)}\lambda^2 \\
            \frac{\pi(a_1|y)}{b(a_1|y)}\lambda \\
         \end{bmatrix}
.
\end{align*}
The estimated value of state $y$ is only updated with importance sampling corrections computed from state transitions that occur after the visit to $y$: using $\frac{\pi(a_1|y)}{b(a_1|y)}$, but not $\frac{\pi(a_1|x)}{b(a_1|x)}$.

Finally, consider another trajectory that deviates from the target policy's choice on the second step of the trajectory:  \[
    x\rightarrow y \rightarrow y\rightarrow T.
\]
On the first transition the value of $x$ is updated as expected, and no update occurs as a result of the second transition. On the third, transition the estimated value of state $x$ is \emph{not} updated; which is easy to see from inspecting the eligibility trace on each time-step:
\begin{align*}
\vecz^\rho_1 = \begin{bmatrix}
            \frac{\pi(a_1|x)}{b(a_1|x)}\lambda  \\
           0
         \end{bmatrix};~
\vecz^\rho_2 = \vec 0;~
\vecz^\rho_3 = \begin{bmatrix}
            0  \\
           \frac{\pi(a_1|y)}{b(a_1|y)}\lambda
         \end{bmatrix}.
\end{align*}
The eligibility trace is set to zero on time step two, because the target policy never chooses action $a_2$ in state $y$ and thus $\frac{\pi(a_2|y)}{b(a_2|y)} = 0$.
The value of state $S_t$ is never updated using importance sampling corrections computed on time steps prior to $t$.

Now consider the updates performed by Alternative-life TD($\lambda$) using different trajectories from our simple MDP (Figure~\ref{fig:SimpleMDP}). If the agent ever selects action $a_2$, then none of the following transitions will result in further updates to the value function. For example, the trajectory $x\rightarrow y\rightarrow y \rightarrow y \cdots y \rightarrow T$ will update $\paramv[s]$ corresponding to the first $x \rightarrow y$ transition, but $\paramv[y]$ would never be updated due to the product in Equation~\ref{eq:precup2001}. In contrast, the Off-policy TD($\lambda$) algorithm described in Equation~\ref{eq:offTD} would update $\paramv[s]$ on the first transition, and also update $\paramv[y]$ on the last transition of the trajectory.

\section{The Bias of the $\mstde$}\label{sec_mstdebad}

In this section, we highlight that the $\mstde$ can impose significant bias on the value function solution, to reduce variance of the targets in the TD-error. This could have utility for reducing variance of updates in practice. But, when considering the optimal solution for the $\mstde$---as opposed to the optimization path to get there---it suggests that the $\mstde$ is a poor choice of the objective.

We can similarly write the $\mstde$ as a decomposition, in terms of both the $\msbe$ and the $\mspbe$.
The $\mstde$ decomposes into the $\msbe$ and a bias term due to correlation between samples
\begin{align*}
&\CEpi{ \left(R + \gamma v(S')- V(s)\right)^2}{S=s} \\
&= \CEpi{ \left(R + \gamma v(S') - \CEpi{ R + \gamma v(S')}{S=s}  +  \CEpi{ R + \gamma v(S')}{S=s} - V(s)\right)^2}{S=s} \\
&= \CEpi{ \left(R + \gamma v(S') - \CEpi{ R + \gamma v(S')}{S=s}\right)^2}{S=s}  +  \left(\CEpi{ R + \gamma v(S')}{S=s} - V(s)\right)^2
\end{align*}
giving
\begin{align}
    \mstde = \| v - \Bo v \|^2_D + \E{\Var[R + \gamma v(S') | S=s]}
\end{align}
This further yields an equality between the $\mspbe$ and the $\mstde$
\begin{align}
    \mstde = \| v - \Pi_D \Bo v \|^2_D + \| \Bo v - \Pi_D \Bo v \|^2_D + \E{\Var[R + \gamma v(S') | S=s]}
\end{align}

This variance penalty encourages finding value functions that minimize the variance of the target. Notice that this connection exists in supervised learning as well, by simply considering the case where $\gamma = 0$. The $\msbe$ include terms $\left(\CEpi{R}{S=s} - v(s)\right)^2$ and the $\mstde$ is the standard squared error $\CEpi{ \left(R - v(s)\right)^2}{S=s}$.  In regression, this additional variance penalty has no impact on the optimal solution, because it does not include $v$:  $\E{\Var[R + 0\cdot v(S') | S=s]} = \E{\Var[R | S=s]}$.

Now the question is if it is useful to learn $v$ that results in a lower-variance target. One benefit is that the estimator itself could have lower variance, and so for a smaller number of samples this could warrant the additional bias in $v$. When comparing these objectives in their ideal forms, over all states under function approximation, it is an undesirable penalty on the value estimates. For instance, even in a 5-state random walk with tabular features, the fixed-point of $\mstde$ has non-negligible value error ($\msve \approx 0.26$) while the fixed-point of any of the other objectives considered in this work would have zero error.

\section{The Connection to the Saddlepoint Form of the Linear $\mspbe$}\label{app_saddlepoint}

In this section we explicitly show the steps that the generalized $\mspbe$ with $\Hset = \Vset$ under linear function approximation produces the linear $\mspbe$.
\begin{align*}
&\max_{h \in \mathcal{L}}  \sum_{s\in\S} d(s) \left(2\CEpi{\delta(\vecw)}{S=s}h(s) - h(s)^2 \right)\\
&= \sum_{s\in\S} d(s) 2\CEpi{\delta(\vecw)}{S=s}\vecx(s)^\top \hparams^* - \sum_{s\in\S} d(s) (\vecx(s)^\top \hparams^*)^2\\
&= \left(\sum_{s\in\S} d(s) 2\CEpi{\delta(\vecw)}{S=s}\vecx(s)^\top\right) \hparams^* - \sum_{s\in\S} d(s) (\hparams^*)^\top \vecx(s) \vecx(s)^\top \hparams^*\\
&= 2\E{\delta(\vecw)\vecx(s)}^\top \hparams^* - (\hparams^*)^\top \left(\sum_{s\in\S} d(s) \vecx(s) \vecx(s)^\top\right) \hparams^*\\
&= 2(\vecb - \Amat \vecw) \hparams^* - (\hparams^*)^\top \Cmat \hparams^*\\
&= 2\| \vecb - \Amat \vecw \|_{\Cmat^\inv}^2 - (\hparams^*)^\top (\vecb - \Amat \vecw) \\
&= 2\| \vecb - \Amat \vecw \|_{\Cmat^\inv}^2 - \| \vecb - \Amat \vecw \|_{\Cmat^\inv}^2 \\
&= \| \vecb - \Amat \vecw \|_{\Cmat^\inv}^2
\end{align*}

\section{Proofs for Section \ref{sec_theory_ve}}\label{app_proofs}

This section includes proofs for the theorems in the main body.

\noindent
\textbf{Proposition \ref{prop_statements}}
The following statements are true about $\ctransition$.
\begin{enumerate}
\item If $\dsol = d_\pi$ or $\dsol = m$, then $\ctransition < 1$.
\item If $\dsol = d_\pi$ and the discount is a constant $\gamma_c < 1$ (a continuing problem), then $\ctransition  = \gamma_c$.
\item If $\dsol = d_\pi$ and for some constant $\gamma_c$, $\gamma(s,a,s') \le \gamma_c$ for all $(s,a,s')$, then $\ctransition  \le \gamma_c$.
\item If $d_\pi(s) > 0$ iff $\dsol(s) > 0$, then $\ctransition \le s_{d_\pi} \sqrt{\kappa(\dsol,d_\pi)\kappa(d_\pi,\dsol)}$.
\end{enumerate}
\begin{proof}
\textbf{Case 1} is proven in \citep[Theorem 1]{white2017unifying}.

\textbf{Case 2} is standard result, because $\Ppigamma = \gamma_c P_\pi$. Therefore
 $\| \Ppigamma \|_{d_\pi} = \gamma_c \| P_\pi \|_{d_\pi} = \gamma_c$
because $\| P_\pi \|_{d_\pi} = 1$.

\textbf{Case 3} is a simple extension of the above.
\begin{align*}
    \Ppigamma(s, s') = \sum_a \pi(a | s) \gamma(s, a, s') P(s' | s, a)
    = \gamma_c \sum_a \pi(a | s) \gamma(s, a, s')/\gamma_c P(s' | s, a)
    = \gamma_c \Ppigammatilde
\end{align*}
with $\Ppigammatilde \defeq \sum_a \pi(a | s) \gamma(s, a, s')/\gamma_c P(s' | s, a)$. Because $\gamma(s, a, s')/\gamma_c \le 1$, we know that $\Ppigammatilde$ is a substochastic matrix and so that $\| \Ppigammatilde \|_{d_\pi} \le 1$. Therefore,  $\| \Ppigamma \|_{d_\pi} = \| \gamma_c \Ppigammatilde \|_{d_\pi} \le \gamma_c$.

\textbf{Case 4} is an extension of the strategy used in \citet[Theorem 2]{kolter2011fixed}.

\begin{align*}
    \| \Ppigamma \|_{\dsol}
    &= \| D^\powhalf \Ppigamma D^\invhalf \|_{2}\\
    &= \| D^\powhalf D_\pi^\invhalf D_\pi^\powhalf \Ppigamma D_\pi^\invhalf D_\pi^\powhalf D^\invhalf \|_{2}\\
    &\le \| D^\powhalf D_\pi^\invhalf \|_2 \| D_\pi^\powhalf \Ppigamma D_\pi^\invhalf \|_2 \| D_\pi^\powhalf D^\invhalf \|_{2}\\
    &= \max_{s \in \States} \sqrt{\dsol(s)/d_\pi(s)} \| \Ppigamma \|_{d_\pi} \max_{s \in \States} \sqrt{d_\pi(s)/\dsol(s)}\\
    &\le \| \Ppigamma \|_{d_\pi} \sqrt{\max_{s \in \States} \dsol(s)/d_\pi(s)} \sqrt{\max_{s \in \States} d_\pi(s)/\dsol(s)}\\
    &\le s_{d_\pi} \sqrt{\kappa(\dsol,d_\pi) \kappa(d_\pi,\dsol) }
    .
\end{align*}
\proofspace
\end{proof}

\noindent
\textbf{Theorem~\ref{thm_oblique}}
If $\Hset \supseteq \Vset$, then the solution $\vsol$ to the generalized $\mspbe$ satisfies
\begin{equation}
\| v_\pi - \vsol \|_{\dsol} \le \| \Pi_{\Vset,\ObMat}  \|_\dsol \| v_\pi -  \Pi_{\Vset, \dsol} v_\pi \|_{\dsol}
.
\end{equation}
\begin{proof}
If $\Pi_{\Vset,\ObMat}$ is the identity---a trivial projection---then the result immediately follows. This is because this implies $v_\pi\in\Vset$, and so both sides of the equation are zero.

Otherwise, assume $\Pi_{\Vset,\ObMat} $ is a non-trivial projection. Notice that $\Pi_{\Vset,\ObMat} \Pi_{\Vset, \dsol} = \Pi_{\Vset, \dsol}$, because the first projection already projects to the set $\Vset$ and so applying $\Pi_{\Vset,\ObMat}$ has no effect.
Now for any $v$,
\begin{align*}
(\eye - \Pi_{\Vset,\ObMat})(\eye -  \Pi_{\Vset, \dsol}) v
&= (\eye - \Pi_{\Vset,\ObMat} -  \Pi_{\Vset, \dsol} + \Pi_{\Vset,\ObMat} \Pi_{\Vset, \dsol}) v\\
&= (\eye - \Pi_{\Vset,\ObMat} -  \Pi_{\Vset, \dsol} + \Pi_{\Vset, \dsol}) v\\
&= (\eye - \Pi_{\Vset,\ObMat}) v
\end{align*}
  because by assumption $\Pi_{\Vset,\ObMat} \Pi_{\Vset, \dsol} = \Pi_{\Vset, \dsol}$.
  Therefore we get that
\begin{align*}
\| v_\pi - v \|_{\dsol}
&= \| v_\pi - \Pi_{\Vset,\ObMat}v_\pi \|_{\dsol} \\
&= \| (\eye - \Pi_{\Vset,\ObMat}) v_\pi \|_{\dsol} \\
&= \| (\eye - \Pi_{\Vset,\ObMat})(\eye -  \Pi_{\Vset, \dsol}) v_\pi \|_{\dsol} \\
&\le \| \eye - \Pi_{\Vset,\ObMat} \|_\dsol \| (\eye -  \Pi_{\Vset, \dsol}) v_\pi \|_{\dsol} \\
&= \| \eye - \Pi_{\Vset,\ObMat} \|_\dsol \| v_\pi -  \Pi_{\Vset, \dsol} v_\pi \|_{\dsol} \\
&= \| \Pi_{\Vset,\ObMat} \|_\dsol \| v_\pi -  \Pi_{\Vset, \dsol} v_\pi \|_{\dsol}
\end{align*}
where the last step follows from the fact that for a non-trivial projection operator, $\| \Pi_{\Vset,\ObMat} \|_\dsol = \|\eye - \Pi_{\Vset,\ObMat} \|_\dsol$ \citep[Theorem 2.3]{szyld2006many}.
\proofspace
\end{proof}

\noindent
\textbf{Corollary~\ref{cor_oblique_sol}}
Assume $\Vset$ is the space of linear functions with features $\vecx$ and $\Hset$ the space of linear functions with features $\phivec$, with $\Vset \subseteq \Hset$. Then the solution to the $\mspbe$ is
\begin{equation}
\vsol = \Xmat \wsol \ \ \ \text{ for } \wsol = (M^\top  (\eye - \Ppigamma) \Xmat)^\inv M^\top r_\pi
\end{equation}
for $M \defeq \Pi_{\Hset, \dsol}^\top \Dsmat (\eye - \Ppigamma) \Xmat$.
Further, $\vsol = \Pi_{\Vset,\ObMat} v_\pi$ for
\begin{equation}
\Pi_{\Vset,\ObMat} = \Xmat (M^\top (\eye - \Ppigamma) \Xmat)^\inv M^\top (\eye - \Ppigamma)
.
\end{equation}
\begin{proof}
This result follows from two simple facts.
First, the $\mspbe$ can be rewritten as a linear regression problem using
$\Psimat \defeq \Pi_{\Hset, \dsol}(\eye - \Ppigamma) \Xmat$:
\begin{align*}
\mspbe(\vecw)
&= \| \Pi_{\Hset, \dsol} \Xmat \vecw - \Pi_{\Hset, \dsol} r_\pi - \Pi_{\Hset, \dsol} \Ppigamma \Xmat \vecw -  \|_{\dsol}\\
&= \| \Pi_{\Hset, \dsol}(\eye - \Ppigamma) \Xmat \vecw -  \Pi_{\Hset, \dsol} r_\pi \|_{\dsol}
= \| \Psimat \vecw -  \Pi_{\Hset, \dsol} r_\pi \|_{\dsol}
\end{align*}
with solution
\begin{align*}
\vsol = \Xmat \wsol \ \ \ \text{ for } \wsol = (\Psimat^\top \Dsmat \Psimat)^\inv \Psimat^\top \Dsmat \Pi_{\Hset, \dsol} r_\pi
\end{align*}
Second, this solution can be simplified because $\Pi_{\Hset, \dsol}^\top \Dsmat \Pi_{\Hset, \dsol}  = \Dsmat \Pi_{\Hset, \dsol} $. To see why, notice that in the linear case $\Pi_{\Hset, \dsol} = \Phimat (\Phimat^\top \Dsmat \Phimat)^\inv \Phimat^\top \Dsmat$ and so
\begin{align*}
\Pi_{\Hset, \dsol}^\top \Dsmat \Pi_{\Hset, \dsol}
&= \Dsmat \Phimat (\Phimat^\top \Dsmat \Phimat)^\inv \Phimat^\top \Dsmat \Phimat (\Phimat^\top \Dsmat \Phimat)^\inv \Phimat^\top \Dsmat \\
&= \Dsmat \Phimat (\Phimat^\top \Dsmat \Phimat)^\inv \Phimat^\top \Dsmat = \Dsmat \Pi_{\Hset, \dsol}
\end{align*}
Therefore, letting $L \defeq \eye - \Ppigamma$, we get that
\begin{align*}
\Psimat^\top \Dsmat \Psimat &= \Xmat^\top L^\top  \Pi_{\Hset, \dsol}^\top \Dsmat \Pi_{\Hset, \dsol} L \Xmat = \Xmat^\top L^\top \Dsmat \Pi_{\Hset, \dsol} L \Xmat \\
\Psimat^\top \Dsmat\Pi_{\Hset, \dsol} &=  \Xmat^\top L^\top  \Pi_{\Hset, \dsol}^\top\Dsmat\Pi_{\Hset, \dsol} = \Xmat^\top L^\top \Dsmat\Pi_{\Hset, \dsol}
\end{align*}
and so with $M^\top \defeq \Xmat^\top L^\top \Dsmat\Pi_{\Hset, \dsol}$ we have
\begin{align*}
\wsol &= (M^\top L \Xmat)^\inv M^\top r_\pi
\end{align*}
$\Xmat (M^\top L \Xmat)^\inv M^\top L$ is the definition of an oblique projection onto the span of $\Xmat$, orthogonal to the span of $L^\top M$. Since $v_\pi = L^\inv r_\pi$, we get that
\begin{align*}
\vsol &= \Xmat \wsol = \Xmat (M^\top L \Xmat)^\inv M^\top L L^\inv r_\pi = \Xmat (M^\top L \Xmat)^\inv M^\top L v_\pi
\end{align*}
with $\Pi_{\Vset,\ObMat} = \Xmat(M^\top L \Xmat)^\inv M^\top L$, completing the proof.
\proofspace
\end{proof}

\noindent
\textbf{Theorem \ref{thm_approx_h}}
For any $\vgen \in \Vset$,
\begin{align*}
\| v_\pi - \vgen \|_{\dsol}
&\le \| (\eye - \Ppigamma)^\inv \|_{\dsol} \underbrace{\|\Bo v - v \|_{\dsol}}_{\msbe}\\
&\le \| (\eye - \Ppigamma)^\inv \|_{\dsol} \Big( \underbrace{\|\Pi_{\Hset, \dsol} \Bo v - v \|_{\dsol}}_{\mspbe} + \approxerr(\Hset,v) \Big)\\
&\le \| (\eye - \Ppigamma)^\inv \|_{\dsol} \Big(\|\Pi_{\Hset, \dsol} \Bo v - v \|_{\dsol}+ \approxerr(\Hset) \Big)
.
\end{align*}
Note that if $\ctransition < 1$, then $\| (\eye - \Ppigamma)^\inv \|_{\dsol} \le (1-\ctransition)^\inv$.
\begin{proof}
For the first part, we use the same argument as in \citet[Equation 8]{maillard2010finitesample}, but for any $v$ rather than the $v$ that minimizes the $\msbe$. Notice first that $v_\pi = (\eye - \Ppigamma)^\inv r_\pi$ and so $r_\pi = (\eye - \Ppigamma) v_\pi$. This means that
\begin{align*}
\Bo v - v &= r_\pi + \Ppigamma v - v
= (\eye - \Ppigamma) v_\pi + (\Ppigamma - \eye) v\\
&= (\eye - \Ppigamma) (v_\pi - v)\\
\implies v_\pi - v &= (\eye - \Ppigamma)^\inv (\Bo v - v)\\
\implies \|v_\pi - v\|_{\dsol} &\le \| (\eye - \Ppigamma)^\inv \|_{\dsol} \|\Bo v - v \|_{\dsol}
\end{align*}

To bound the approximation error, let us define the conjugate form as $f(v, h) \defeq \sum_s \dsol(s) (2 \mathbb{E}[\delta(v) | S = s]h(s)  - h(s)^2)$. Notice first that $\msbe(v) \ge \mspbe(v)$ for any $v$ because
\begin{align*}
\msbe(v) - \mspbe(v) &= \max_{h \in \Hsetall} f(v, h) - \max_{h \in \Hset} f(v, h)
\end{align*}
and the first maximization is over a broader set, with $\Hset \subset \Hsetall$. Let us additionally define $g(v, h) \defeq \sum_s \dsol(s) (\mathbb{E}[\delta(v) | S = s]  - h(s))^2$, the minimization form for $h$ we used previously. Then $-g(v,h) = f(v,h) - \sum_s d(s) \mathbb{E}[\delta(v) | S = s]^2$. We can therefore rewrite this difference using a minimization
\begin{align*}
\msbe(v) \!-\! \mspbe(v)
&= \! \! \max_{h \in \Hsetall} f(v, h) \!-\! \sum_s d(s) \mathbb{E}[\delta(v) | S = s]^2  \!-\! \max_{h \in \Hset} f(v, h) \!+\! \sum_s d(s) \mathbb{E}[\delta(v) | S = s]^2 \\
&= \max_{h \in \Hsetall} -g(v, h) - \max_{h \in \Hset} -g(v, h) \\
&= \min_{h \in \Hset} g(v, h) - \min_{h \in \Hsetall} g(v, h)
\end{align*}
where on the last step we used the fact that $\max_{h \in \Hset} -g(v, h) = -\min_{h \in \Hset} g(v, h)$.
Notice that $\min_{h \in \Hset} g(v, h) = \| h_\vgen^* - h \|_{\dsol} = \approxerr(\Hset, v)$, because $h_\vgen^*(s) = \mathbb{E}[\delta(v) | S = s]$. Further, $\min_{h \in \Hsetall} g(v, h) = 0$, since $h_\vgen^* \in \Hsetall$. Therefore, we get that
\begin{align*}
\msbe(v) - \mspbe(v)
&= \min_{h \in \Hset} g(v, h) = \approxerr(\Hset, v) \le \approxerr(\Hset)
\end{align*}
\proofspace
\end{proof}

\section{Summary of Known Bounds for Value Estimation}

In the main paper, we derived bounds for the generalized $\mspbe$. These results built on existing bounds, in some cases for more specific settings and in others for slightly different settings. We summarize those known results in Table~\ref{tbl_known}.

\newcommand{\vhat}{\hat{v}}

\begin{longtable}{ p{.45\textwidth} | p{.51\textwidth} }
    \toprule
    Bound & Additional Comments\\
    \midrule

    \citet[Prop 3.1]{williams1993tighta} \newline\newline
        $\| v \!-\! v_\pi \|_\infty \!\leq \! \frac{1}{1 - \gamma} \! \| \Bo v \!-\! v \|_\infty$, any $v$
        & A seminal result showing the relationship between the worst case value error across states and the worst case Bellman error across states.\\

    \addlinespace\midrule\addlinespace

    \citet[Lemma 6]{tsitsiklis1997analysis} \newline\newline
        $\| v - v_\pi \|_{d_\pi} \leq \frac{1-\lambda\gamma}{1 - \gamma} \| \Pi_{\Vset, d_\pi} v_\pi - v_\pi \|_{d_\pi}$
        & A seminal result for the on-policy setting. $v = \argmin_{v \in \Vset} \| \Pi_{\Vset, d_\pi} \Bo v - v \|^2_{d_\pi}$ is obtained with on-policy TD($\lambda$) using linear function approximation. For $\lambda = 1$, the lhs and rhs are equal.
        \\
    \addlinespace\midrule\addlinespace

    \citet[Lemma 10]{antos2008learning} \newline\newline
        $\|\Bo v - v \|_{\dsol} \leq \min_{v \in \Vset} \|\Bo v - v \|_{\dsol} + \max_{v \in \Vset} \min_{v' \in \Vset} \|\Bo v - v' \|_{\dsol}$
        \newline \newline
        The first term is called the inherent Bellman error of $\Vset$ and the second is the inherent one-step Bellman error of $\Vset$.
        & Upper bound on the true Bellman error using the empirical loss that resembles the conjugate Bellman error. This work assumes $\Hset = \Vset$, and $v$ is the solution to the generalized $\mspbe$. The bound in the paper also accounts for using an empirical loss; the result here is obtained from within the proof. See Page 124, and note that $\bar{L}(v) = \min_{v \in \Vset} \bar{L}(v)$.
        \\
    \addlinespace\midrule\addlinespace

    \citet[Equation 8]{maillard2010finitesample} \newline
        $\|v - v_\pi \|_{\dsol} \leq \| (\eye - \Ppigamma)^\inv \|_{\dsol} \|\Bo v - v \|_{\dsol}$
        \newline for $v = \argmin_{v \in \Vset} \|\Bo v - v \|_{\dsol}$
        & The original result is given for a constant $\gamma$. We include the more generic result here, since it is a straightforward extension.
        \\
    \addlinespace\midrule\addlinespace

    \citet[Proposition 3]{scherrer2010should} \newline
        $\|v_M - v_\pi \|_{\dsol} $\newline $\leq \sqrt{\text{\tiny spectral-radius}(ABCB^\top)} \|\Pi_{\Vset, \dsol} v_\pi - v_\pi \|_{\dsol}$
        \newline\newline
         $v_M = \Xmat (M^\top \Xmat)^\inv M^\top v_\pi$ is the oblique projection of $v_\pi$ onto the span of $\Xmat$---linear functions with features $\vecx(s)$---orthogonally to span of matrix $L^\top M \in \mathbb{R}^{n \times d}$ for $L = (\eye - \Ppigamma)$.
        & This result assumes linear function approximation.
        If $M = \Dsmat \Xmat$, then $v_M$ is the TD fixed point. If $M = \Dsmat L \Xmat$, then $v_M$ is a solution to the $\msbe$.
        The spectral radius of a matrix is the maximum of the absolute value of the eigenvalues. This result is for matrices $A = \Xmat^\top \Dsmat \Xmat$, $B = (M^\top L \Xmat)^\inv$ and $C = M L \Dsmat^\inv L^\top M$. This result extends and simplifies the earlier results in \citep{schoknecht2003optimality} and in \citep{yu2010error}.
        \\
    \addlinespace\midrule\addlinespace

    \citet[Theorem 2]{kolter2011fixed} \newline\newline
        $\|v - v_\pi \|_{d_\mu} \leq \frac{1 + \gamma \text{cond}(\bar D)}{1 - \gamma} \|\Pi_{\Vset, d_\mu} v_\pi - v_\pi \|_{d_\mu}$
        \newline\newline
        $v$ obtained with Off-policy LSTD using linear function approximation with conditions on $d_\mu$.
        & $\text{cond}(\bar D)$ is the condition number of the matrix $\bar D \doteq D_\mu^{-1/2}D_\pi^{1/2}$. For these diagonal matrices, $\text{cond}(\bar D) = \kappa(d_\mu, d_\pi)/ \kappa(d_\pi, d_\mu)$. This bound assumes $d_\mu$ satisfies a certain relaxed contraction property, that states that applying the transition to the values and then projecting is a non-expansion under $d_\mu$ (see \citep[Theorem 2]{kolter2011fixed}).
        \\
    \addlinespace\midrule\addlinespace

    \citet[Corollary 1]{hallak2016generalized} \newline\newline
           $ \| v - v_\pi \|_{d_\mu} \!\le \! C(d_\mu, m)^{\invhalf} \| \Pi_{\Vset, m} v_\pi - v_\pi \|_{m}$
           $ \| v - v_\pi \|_{m}  $

           $\le (\gamma C(d_\mu, m))^{\invhalf} \| \Pi_{\Vset, m} v_\pi - v_\pi \|_{m}$
           \newline for $C(d_\mu, m) \defeq 1 - \tfrac{\gamma^2}{\beta} (1-\kappa(d_\mu, m))$.
        & Bounds provided for ETD($\lambda=0, \beta$) under linear function approximation to obtain $v$. Note that $\beta \le \gamma$ and $\beta = \gamma$ corresponds to ETD(0). Here, $\gamma$ is a constant less than 1. The result is also extended to $\lambda > 0$ in \citet[Corollary 1]{hallak2016generalized}, by computing $s_{m}$ for the Bellman operator.
        \\
            \addlinespace\midrule\addlinespace

    \citet[Theorem 1]{white2017unifying} \newline\newline
        $\|v - v_\pi \|_d \leq (1-s_d)^\inv \| \Pi_{\Vset, d} v_\pi - v_\pi \|_d$
        & Generalized results in \citet{hallak2016generalized}, showing the Bellman operator is a contraction for transition-based discounting. This result was shown for $d = d_\pi$ and $d = m$, under linear function approximation with $v$ the solution to the linear $\mspbe$ with weighting $d$.
        \\
    \addlinespace\midrule\addlinespace
\caption{A summary of known bounds on the $\msve$ of the solution to the $\mspbe$ and $\msbe$. }\label{tbl_known}
\end{longtable}

\section{Extensions to n-step returns and $\lambda$-returns}\label{app_nstep}

In this section, we provide the TDRC algorithm under $n$-step returns and $\lambda$-returns. We start by discussing how the generalized $\mspbe$ extends, and then the resulting algorithm.

 There are further $n$-step variants of these objectives, where bootstrapping occurs only after $n$ steps: $(G_{t,n} - \paramv[S_t])^2$ where $G_{t, n} = R_{t+1} + \gamma_{t+1} R_{t+2} + \ldots +  \gamma_{t+1:t+n} R_{t+n} +  \gamma_{t+1:t+n+1} \paramv[S_{t+n+1}]$ where $\gamma_{t+1:t+n} = \gamma_{t+1} \gamma_{t+2} \ldots \gamma_{t+n}$. The extreme of $n$-step returns is to use the full return with no bootstrapping, as in Monte Carlo methods, with the objective becoming the $\msre$. The conjugate form and derivations above extend to $n$-step returns, simply by considering the $n$-step Bellman operator and corresponding $n$-step TD error:
\begin{align*}
\delta^{(n)}(\vecw) \defeq R_{t+1} + \gamma_{t+1}  R_{t+2} + \ldots +  \gamma_{t+1:t+n} R_{t+n} +  \gamma_{t+1:t+n+1} \paramv[S_{t+n+1}] - \paramv[S_t]
\end{align*}
with importance sampling ratios included, in the off-policy setting.
The $n$-step generalized $\mspbe$ is $\max_{h \in \Hset } \mathbb{E}_d[2\CEpi{\delta^{(n)}(\vecw)}{S=s}h(s) - h(s)^2]$. The function $h$ is trying to estimate the expected $n$-step return from $s$: $\mathbb{E}_d \left[ 2\CEpi{\delta^{(n)}(\vecw)}{S=s} \right]$.
The saddlepoint update for $\vecw$ is
\begin{equation*}
\Delta \vecw \gets h(S_t)\left( \nabla_\vecw \paramv[S_t] - \paramh[S_t] \gamma_{t+1:t+n+1} \nabla_\vecw \paramv[S_{t+n+1}] \right)
\end{equation*}
and the gradient correction update is
\begin{equation*}
\Delta \vecw \gets \delta^{(n)}(\vecw) \nabla_\vecw \paramv[S_t] - \paramh[S_t]\gamma_{t+1:t+n+1} \nabla_\vecw \paramv[S_{t+n+1}]
\end{equation*}
where both use the same update for $h$:
\begin{equation*}
\Delta \vech \gets -(\delta^{(n)}(\vecw) - \paramh[S_t])\nabla_\vech \paramh[S_t]
\end{equation*}

The primary difference when considering $n$-step returns is that, for large $n$, it is less necessary to estimate $h$. For large $n$, the correlation between $\delta^{(n)}(\vecw)$ and $\paramv[S_{t+n+1}]$ becomes smaller. Consequently, it would not be unreasonable to use $\delta^{(n)}(\vecw) \nabla_\vecw \paramv[S_t] - \delta^{(n)}(\vecw)\gamma_{t+1:t+n} \nabla_\vecw \paramv[S_{t+n+1}]$, as the incurred bias is likely small. Further, if the discount per step is less than 1, then the gradient correction term also diminishes in importance, because it is pre-multiplied by $\gamma_{t+1:t+n+1}$. For example, for a constant $\gamma < 1$, we get $\gamma_{t+1:t+n} = \gamma^n$. One might expect that the gradient correction update might have an even greater advantage here over the saddlepoint update. It remains an open question as to the relationship between $n$ and some of these choices.

The other generalization, with $\lambda$-returns, involves mixing n-step returns with a potentially state-dependent parameter $\lambda$. For a constant $\lambda \in [0,1]$, the $\lambda$-return is
\begin{equation*}
G_{t, \lambda} = (1-\lambda) \sum_{i=1}^\infty \lambda^{i-1} G_{t, i}
= R_{t+1} + \gamma_{t+1}[(1-\lambda) \paramv[S_{t+1}] + \lambda G_{t+1, \lambda}]
.
\end{equation*}
More generally, for a state-dependent $\lambda: \States \rightarrow [0,1]$ \citep{maei2010gq}, with $\lambda_{t+1} \defeq  \lambda(S_{t+1})$, we have
$G_{t, \lambda} = R_{t+1} + \gamma_{t+1}[(1-\lambda_{t+1}) \paramv[S_{t+1}] + \lambda_{t+1} G_{t+1, \lambda}]$. This target is used in place of $G_{t, n}$ to define the $\mspbe$ for $\lambda$-returns.

Notice, however, that the $\lambda$-return involves an infinite sum into the future, or until the end of an episode. For episodic problems, these targets can be computed, like Monte carlo methods. However, more typically to optimize the $\mspbe$ for $\lambda$-returns, the gradient is estimated using a backwards view, rather than this forward view \citep{sutton2018reinforcement}. This reformulation only holds under linear function approximation. It remains an open question how effective the approach is in the nonlinear setting.

\section{Summary of Algorithms for the Linear $\mspbe$}

In this section we summarize the current algorithms that optimize the linear $\mspbe$ by the implicit weighting they use. The two axes indicate whether prior or posterior corrections are used, and whether the weighting includes $d_\pi$ or $d_b$. The emphatic weighting adjusts the state weighting with prior corrections, but still includes $d_b$ as part of the weighting in the objective. Alternative-life, on the other hand, removes $d_b$ all together.

The primary differences between the algorithms is in how they optimize their respective objectives. They can use gradient updates or hybrid updates, and can incorporate other additions like action-dependent bootstrapping. For example, we can easily obtain a gradient version of ETD, by incorporating emphatic weights into the GTD algorithm. The GTD($\lambda$) algorithm is
\begin{align*}
\vecz_t^{\rho} \leftarrow& ~\rho_t \left(\gamma_t \lambda \vecz_{t-1}^{\rho} + \vecx_{t}\right) \\
\hparams_{t+1} \leftarrow& ~\hparams_t + \alpha_h\bigl[\delta_t\vecz^\rho_t - (\hparams_t^{\tr}\vecx)\vecx_{t}  \bigr] \nonumber \\
\vecw_{t+1} \leftarrow& ~\vecw_t + \alpha \delta_t \vecz^\rho_t - \underbrace{\alpha \gamma_{t+1} (1-\lambda)(\hparams_t^{\tr}\vecz^\rho_t)\vecx_{t+1}}_{\text{correction term}}
\end{align*}
The Emphatic GTD($\lambda$) algorithm uses the same updates to the two sets of weights, but with emphatic weights in the trace: $\vecz_t^{\rho} \leftarrow ~\rho_t \left(\gamma_t \lambda \vecz_{t-1}^{\rho} + M_t \vecx_{t}\right)$. More simply, when considering Emphatic GTD(0), we get
\begin{align*}
F_t \leftarrow& ~\rho_{t-1}\gamma_t F_{t-1} + 1\\
\hparams_{t+1} \leftarrow& ~\hparams_t + \alpha_h \rho_t F_t \bigl[\delta_t - (\hparams_t^{\tr}\vecx)\bigr]\vecx_{t} \\
\vecw_{t+1} \leftarrow& ~\vecw_t + \alpha \rho_t F_t [\delta_t \vecx_t - \gamma_{t+1} (1-\lambda)(\hparams_t^{\tr}\vecx_t)\vecx_{t+1}]
\end{align*}
where $F_t$ is omitted for standard GTD(0).

Many of these algorithms can incorporate different weightings by pre-multiplying the update, but might use different approaches to approximate the gradient. It is important to separate the weighting and algorithmic approach, as otherwise, comparisons between algorithms (e.g., ETD and GTD) could reflect either the difference in objective (weighting by $m$ or $d_b$) or in the algorithmic approach (TD-style updates or gradient updates).
The algorithms, categorized according to weightings, are summarized in Table~\ref{tab_methods}. For a more complete list of algorithms and their updates, see \citep[Appendix A]{white2016investigating}.

\definecolor{mygray}{gray}{0.6}

\begin{table}[ht]
    \begin{tabularx}{\linewidth}{ X p{3.5cm} p{8.5cm}  }
        & \multicolumn{1}{c}{$d_\pi$} & \multicolumn{1}{c}{$d_b$} \\
        \midrule
        Posterior corrections
        &
        \begin{itemize}[wide]
            \item[$\triangleright$] N/A. \newline Alternative life \newline algorithms cannot only do posterior corrections.
        \end{itemize}
        &
        \begin{itemize}[wide]
            \item[$\triangleright$] Off-Policy TD($\lambda$)
            \item[$\triangleright$] GTD($\lambda$) \citep{sutton2009fast},
            \item[$\triangleright$] HTD($\lambda$) \citep{maei2011gradient, white2016investigating}
            \item[$\triangleright$] Action-dependent bootstrapping \begin{itemize}[wide, topsep=0ex]
                \item[$\quad-$] Tree Backup($\lambda$) \citep{precup2000eligibility}
                \item[$\quad-$] V-trace($\lambda$) \citep{espeholt2018impala}
                \item[$\quad-$] AB-Trace($\lambda$) \citep{mahmood2017multistep}
            \end{itemize}
            \item[$\triangleright$] Saddlepoint methods for GTD2($\lambda$) \begin{itemize}[wide, topsep=0ex]
                \item[$\quad-$] GTD2-MP($\lambda$) \citep{liu2016proximal}
                \item[$\quad-$] SVRG and SAGA \citep{du2017stochastic}
                \item[$\quad-$] Gradient Tree Backup($\lambda$) \citep{touati2018convergent}
            \end{itemize}

        \end{itemize} \\
        \midrule
        Prior \& Posterior corrections
        &
        \begin{itemize}[wide]
            \item[$\triangleright$] Alternative-life TD($\lambda$)
            \item[$\triangleright$] {\color{mygray} Alternative-life GTD($\lambda$), HTD($\lambda$) \& Saddlepoint methods}
        \end{itemize}
        &
        \begin{itemize}[wide]
            \item[$\triangleright$] ETD($\lambda$) \citep{sutton2016emphatic}
            \item[$\triangleright$] ETD($\lambda, \beta$) \citep{hallak2016generalized}
            \item[$\triangleright$] {\color{mygray} Emphatic GTD($\lambda$), HTD($\lambda$) \& Saddlepoint methods}
        \end{itemize}
        \\
    \end{tabularx}
    \caption{\label{tab_methods}
        A summary of off-policy value estimation methods for the linear $\mspbe$, based on weightings in the objective and whether they incorporate prior and posterior corrections. The algorithms in grey are hypothetical algorithms that can easily be derived by applying the same derivations as in their original works, but with alternative weightings.
    }
\end{table}

\section{Impact of Voting Procedure}\label{app:additional-results}
In Section~\ref{sec:benchmark-experiments} we investigate the performance of the benchmark algorithms across four simulation environments with a single hyperparameter setting chosen for each algorithm via a voting procedure.
In this section, we investigate the impact of the voting procedure used on the conclusions drawn from the results in Figure~\ref{fig:bakeoff}.
We start by investigating each algorithm given that we have the ability to tune hyperparameters individually for every domain.
We then investigate the impact of excluding Lunar Lander from the voting procedure, allowing all algorithms (especially SBEED) to choose parameters without needing to balance performance on the most challenging domain.
Finally, we demonstrate that a more simple voting procedure leads towards less favorable results for algorithms which have higher sensitivity to hyperparameters across domains.

\begin{figure}
    \centering
    \includegraphics[width=0.9\linewidth]{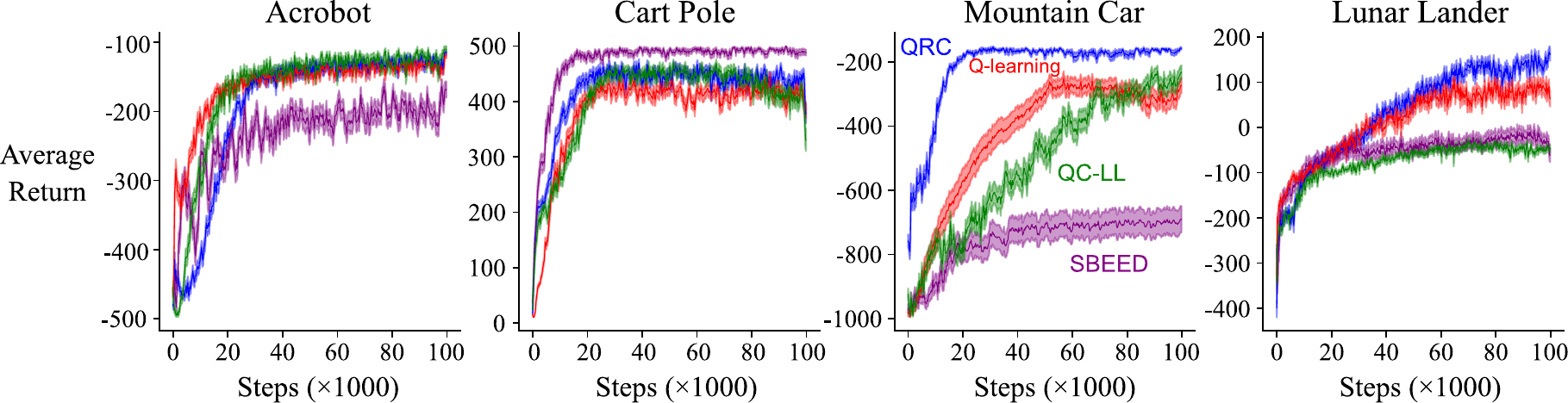}

    \caption{\label{fig:per-domain-tuned-curves}
        Average return over number of learning steps for best hyperparameters chosen for each algorithm on each domain.
        When tuning hyperparameters on a per domain basis, the gaps between algorithm performance begin to decrease.
        QRC still consistently is among the top performing algorithms in each benchmark domain.
        The performance difference between SBEED and the other competitors on Cart Pole can be explained by the forced stochastic $\epsilon$-greedy policy used by action value methods compared to the increasingly deterministic policy learned by SBEED.\@
    }
\end{figure}

Figure~\ref{fig:per-domain-tuned-curves} shows the performance for each algorithm for their best performing respective hyperparameter on each domain.
Because every algorithm is able to maximize their individual performance for each domain, and because many algorithms successfully solve each domain with their best parameters, performance gaps between algorithms shrink considerably.
Still only QRC is able to successfully solve the Mountain Car domain consistently across runs and both QC-LL and SBEED find a sub-optimal policy in Lunar Lander.

Figure~\ref{fig:per-domain-tuned-curves} is an idealized setting where every domain gets to pick their favorite parameter setting through extensive tuning.
A more realistic setting would involve picking a single ``default'' hyperparameter setting and perhaps a small exploration of parameters along a small number of axes (for example choosing default parameters in the ADAM optimizer, but sweeping a few values of the stepsize).
Because it is often computationally infeasible to tune hyperparameters extensively for each domain, this results in a compromise where some algorithms get to use parameters that perform well on only a subset of the domains.
Figure~\ref{fig:bakeoff} illustrates this tradeoff by requiring each algorithm to choose a single hyperparameter setting that balances performance across all domains.
For algorithms that are highly insensitive to their hyperparameters, this selection process is easy and the algorithm need not tradeoff performance significantly in each domain.
For algorithms that are highly sensitive to their hyperparameters, this selection process is very challenging and can result in hyperparameter settings that are catastrophic on some domains but highly performing on others.

In Section~\ref{sec:benchmark-experiments}, we used a pair-wise voting strategy, Condorcet voting, to give each algorithm maximum opportunity to balance hyperparameters across all possible pairs of domains.
Because Lunar Lander is a much more challenging domain than the rest, the inclusion of Lunar Lander significantly impacted the outcome of the voting procedure.
To understand the impact of this, in Figure~\ref{fig:vote-no-lunarlander} we use the same voting mechanism as in Figure~\ref{fig:bakeoff} except we exclude Lunar Lander from influencing the outcome of the vote.
In this case, each algorithm chooses the best hyperparameter setting with a pair-wise vote across Cart Pole, Mountain Car, and Acrobot, then we show performance for that hyperparameter setting across all four domains.
This can additionally be viewed as a form of hold-out set (i.e.\ leave-one-out cross validation), where we test how the hyperparameter choosing strategy generalizes to a hand-picked challenging sample.
In this setting, all algorithms suffer lower performance in Lunar Lander, particularly Q-learning and SBEED.\@

Finally, in Figure~\ref{fig:vote-instant-runoff} we investigate the impact of a different voting strategy across all four domains.
We use an instant runoff vote which is generally known to provide an inferior between outlying ``candidates'' than pair-wise voting procedures.
Our results show that QC-LL and SBEED are both strongly negatively impacted by this choice of voting procedure where Q-learning and QRC achieve similar performance as in Figure~\ref{fig:bakeoff}.

\begin{figure}[h!]
    \centering
    \includegraphics[width=0.9\linewidth]{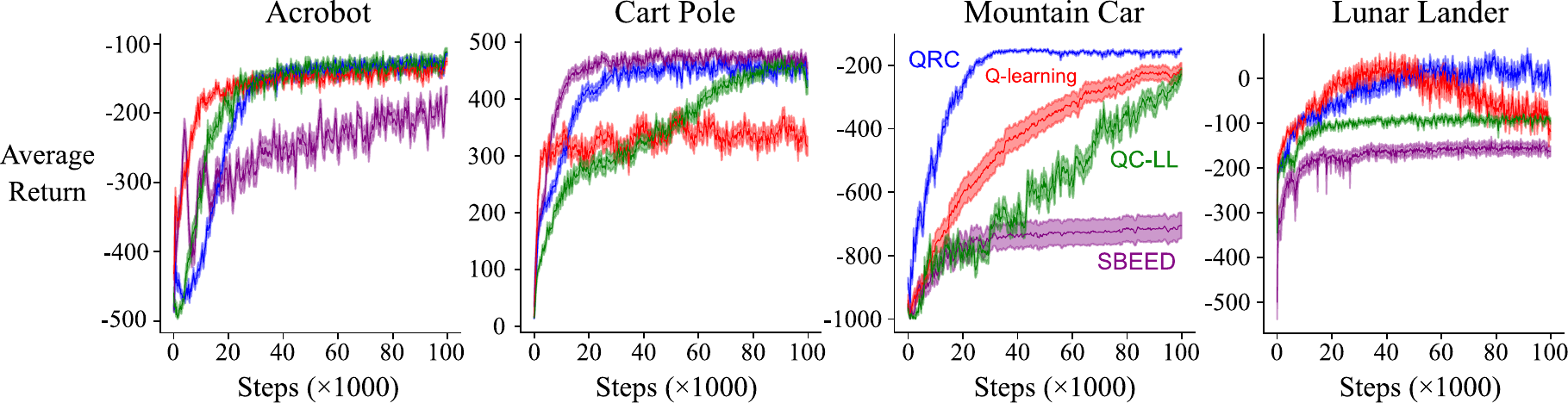}

    \caption{\label{fig:vote-no-lunarlander}
        Average return over number of learning steps for hyperparameters chosen via Condorcet voting.
        The performance of each algorithm on Lunar Lander was excluded from impacting the result of the vote and can be treated as a hold-out set.
    }
\end{figure}

\begin{figure}[h!]
    \centering
    \includegraphics[width=0.9\linewidth]{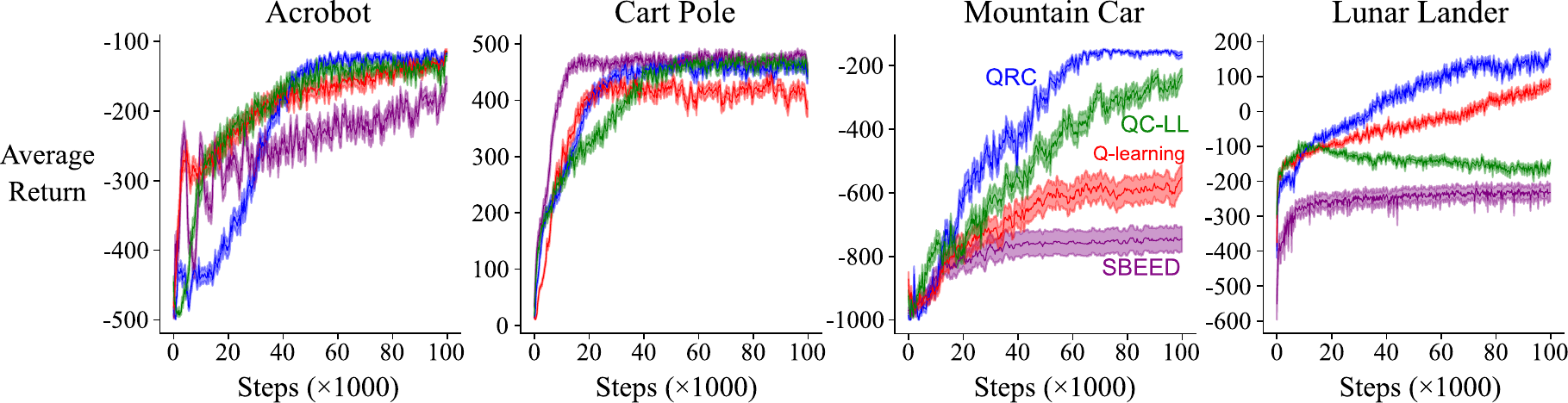}

    \caption{\label{fig:vote-instant-runoff}
        Average return over number of learning steps for hyperparameters chosen via instant runoff voting.
        For algorithms other than QRC, the instant runoff voting procedure could not find a suitable parameter setting which balanced performance across all domains.
    }
\end{figure}

\end{document}